\documentclass[11pt]{article}
\usepackage{authblk}
\usepackage[english]{babel}
\usepackage[top=1in, bottom=1in, left=1in, right=1in]{geometry}
\usepackage[svgnames]{xcolor}
\usepackage{amsthm}
\usepackage{amsmath}
\usepackage{mathtools}
\usepackage{amsfonts}
\usepackage{graphicx}
\usepackage{float}
\usepackage{caption}
\usepackage{subcaption}
\usepackage{makecell}
\usepackage{verbatim}
\usepackage{multirow}
\usepackage{dsfont}
\usepackage{algorithm}
\usepackage{algpseudocode}
\usepackage{url}
\usepackage{natbib}
\usepackage{bbm}
\usepackage{mathrsfs}
\numberwithin{equation}{section}
\usepackage[colorlinks,pdfdisplaydoctitle,linkcolor=blue,citecolor=blue]{hyperref}
\usepackage{tikz}
\usetikzlibrary{decorations.pathreplacing}
\usetikzlibrary{fadings}
\usetikzlibrary{positioning, arrows.meta, fit, backgrounds}

\usepackage{booktabs}       

\newcommand\cN{\mathcal N}
\newcommand\cD{\mathcal D}
\newcommand\cW{\mathcal W}
\newcommand\cV{\mathcal V}
\newcommand\cP{\mathcal P}
\newcommand\cB{\mathcal B}
\newcommand\cL{\mathcal L}
\newcommand\cM{\mathcal M}
\newcommand\cO{\mathcal O}
\newcommand\bbZ{\mathbb Z}
\newcommand\bbR{\mathbb R}
\newcommand\bbP{\mathbb P}
\newcommand\bbD{\mathbb D}
\newcommand\bbN{\mathbb N}
\newcommand\bbE{\mathbb E}
\newcommand\dsone{\mathds 1}
\newcommand\sX{\mathsf X}
\newcommand\sS{\mathsf S}
\newcommand\sY{\mathsf Y}
\newcommand\sW{\mathsf W}
\newcommand\sV{\mathsf V}
\newcommand\norm[1]{\left\lVert#1\right\rVert}

\newcommand{\mysim}{\mathrel{\overset{\mathclap{\text{\tiny i.i.d.}}}{\sim}}}

\newtheorem{theorem}{Theorem}[section]
\newtheorem{lem}{Lemma}[section]
\newtheorem{rem}{Remark}[section]
\newtheorem{prop}{Proposition}[section]

\newtheorem{ass}{Assumption}[section]
\newtheorem{example}{Example}[section]

\title{ \vspace{-0cm} Theoretical and Practical Advances in Hierarchical \\ Bayesian Deep Learning Methodology}
\author{Neil K. Chada,\thanks{Department of Mathematics, City University of Hong Kong, Hong Kong SAR. \texttt{neilchada123@gmail.com}} \quad Simon Cotter,\thanks{School of Mathematics, University of Manchester, UK. \texttt{simon.cotter@manchester.ac.uk}} \quad Kody J. H. Law,\thanks{School of Mathematics, University of Manchester, UK. \texttt{kody.law@manchester.ac.uk}} \quad Xinzhu Liang\thanks{School of Mathematics, University of Manchester, UK. \texttt{xinzhu.liang@postgrad.manchester.ac.uk}}}

\begin{document}
\maketitle

\begin{abstract}
{
Reliable uncertainty quantification remains an important challenge in AI. 
Hierarchical Bayesian methods provide a principled framework for this
by representing unknown quantities through posterior distributions. 
Joint inference of 
neural-network parameters and prior hyperparameters
is challenging for conventional Markov chain Monte Carlo (MCMC) 
methods due to strong dependencies.
In this work, we introduce the multilevel particle marginal Metropolis-Hastings (MLPMMH) method, which combines
pseudo-marginal inference for the hyperparameters with coupled sequential Monte Carlo (SMC) across hierarchy of network-width truncations for conditional parameter inference. The hierarchy is induced by a trace-class neural network (TNN) prior with width-dependent decay, motivated empirically by the compressible structure observed in trained neural-network parameters. Under standard multilevel coupling assumptions, we establish that MLPMMH achieves mean-square error (MSE) $\mathcal{O}(\varepsilon^2)$ at expected computational cost $\mathcal{O}(\varepsilon^{-2})$, and verify the predicted rate on a Bayesian regression problem. To address larger-scale applications, we further introduce S-MLPMMH, a practical approximation that replaces the exact sampling components with a PCA-based hierarchy and scalable Bayesian Monte Carlo. Experiments on MNIST demonstrate competitive predictive performance and uncertainty quantification while exploiting the parallel structure of the approximation.
Software has been made publicly available at \url{https://github.com/submission-code-anon/mlpmmh-anon}.
}
\end{abstract}

\section{Introduction}
The focus of this work is on the design of Bayesian statistical algorithms for efficiently performing hierarchical 
Bayesian deep learning \citep{theo2,papamarkou2024position,bdl2026handbook}.
In this context, one is interested in estimating the posterior distribution of the joint parameter $\theta \times \mu \in \Theta \times \sS$, where we have access to data $\cD \in \bbD$. Then the joint posterior, via Bayes' Theorem, is expressed as
$$
\pi(\theta,\mu|\cD)
= \frac{\pi(\cD|\theta,\mu)\bar{\pi}(\theta|\mu)\pi_0(\mu)}{Z(\cD)} 
\propto \pi(\cD,\theta|\mu) \pi_0(\mu) \, ,
$$
where $Z(\cD)$ is a normalizing constant, $\pi(\cD|\theta,\mu)$ is the data-likelihood, $\bar{\pi}(\theta)$ is the prior distribution for parameter and $\pi_0(\mu)$ is the prior distribution for hyperparameter. Prior distributions, in general, allow us to represent an initial belief of the posterior, providing some level of flexibility. However, this hierarchical structure can hinder generic Markov chain Monte Carlo (MCMC) inference, even for powerful Bayesian methodologies such as HMC \citep{neal2011mcmc,betancourt2015hamiltonian}. Strong correlations between hyperparameters and parameters often make estimating the joint posterior difficult, leading to pathological behaviour. 
The pseudo-marginal approach
is a general class of MCMC algorithms that target the marginal posterior ($\int_{\Theta}\pi(\cD,d\theta|\mu)\pi_0(\mu)$) by replacing the intractable 
marginal likelihood ($\int_{\Theta}\pi(\cD,d\theta|\mu)$) of the hyperparameters with an unbiased stochastic estimator \citep{andrieu2009pseudo}. 
An important sub-class is known as particle MCMC
\citep{andrieu2010particle,golightley}, which uses particle filters 
to build the unbiased distribution estimators inside a pseudo-marginal scheme.
We will leverage the particle marginal Metropolis-Hasting (PMMH) method here.

Although Sequential Monte Carlo (SMC) methods are well suited for sampling in high dimensional settings, they still suffer from the curse of dimensionality, which limits their efficiency. To mitigate this, Multilevel Monte Carlo (MLMC) \citep{heinrich2001multilevel, MBG08, giles2015multilevel} has been introduced to 
achieve canonical computational complexity, $1/$MSE. 
Recent work has applied 
MLMC in the context of statistical inference
\cite{beskos2017multilevel, beskos2018multilevel, chada2021unbiased, cotter2022randomized} and 
machine learning 
\citep{chadaTNN, FS21, MLMClearn, LMM21, shi}.
In this paper, we 
leverage MLMC techniques in the context of 
trace-class neural networks (TNN) for prior modeling \citep{sell2020dimension}
following a similar approach as in \cite{chadaTNN}.
The seminal work by \cite{jasra2018bayesian} applied the particle marginal Metropolis–Hastings scheme
to parameter estimation for particle filter algorithms for parameter estimation in discretely observed continuous time state space models.
The present work applies a similar principle to the case of joint parameter and hyper-parameter estimation in 
static BNN models with TNN prior architectures by simulating from an approximate coupling of the 
parameter conditional distributions at coarse and fine scales and correcting the increments a posteriori 
with importance weights.

\subsection{Contributions}

The contributions of this work are summarized as follows:
\begin{itemize}
    \item \textbf{Algorithmic contribution.}
          We introduce a \emph{multilevel particle marginal Metropolis-Hastings} (MLPMMH) scheme 
          for BNN which builds increment estimators from approximate couplings of
          parameter conditional targets at successive resolutions simulated with SMC samplers,
          within the PMMH method for joint estimation of hyperparameters. 
          These increment estimators are combined to build an MLMC estimator.
    \item \textbf{Theoretical guarantees.}
          Under standard multilevel coupling assumptions and the trace-class prior, 
          we prove a convergence theorem (Theorem \ref{thm:mlpmmh}): 
          \[
            \text{MSE}= \mathcal{O}\!\bigl(\text{COST}^{-1}\bigr).
          \]
    \item {\bf Practical Approximation algorithm.} 
    A practically applicable approximation is introduced, where each increment is estimated from 
          an ensemble of SBMC estimators \cite{liang2025scalable}, 
          each conditional on a hyper-parameter simulated from a Laplace approximation of the 
          marginal. This is referred to as S-MLPMMH.
    \item \textbf{Empirical validation.} 
          (i) A toy Bayesian regression problem shows the expected MSE–COST convergence rate and 
          illustrates increment variance and mean decay across levels.  
          (ii) Strong performance of S-MLPMMH is demonstrated on the MNIST image classification problem. 
          (iii) Power-law decay with rank of weights and biases from MAP training of Bayesian neural networks 
          empirically supports the use of the TNN prior.
          (iv) Several MLSMC variants 
          for the inner parameter inference are compared in the appendix, including their computational cost-accuracy behavior.
\end{itemize}

\subsection{Outline}
The outline of this document is as follows. In Section \ref{sec:back}, we introduce the model formulation, trace-class neural network prior, and associated multilevel structure. We then move to Section \ref{sec:hyper_method}, introducing our MLPMMH methodology for joint parameter and hyperparameter estimation and the practical S-MLPMMH approximation. Related theory, including the MSE--computational-cost convergence result for MLPMMH, is provided in Section \ref{sec:theory}. In Section \ref{sec:num}, 
we numerically test MLPMMH to verify our theory, 
motivate and validate the TNN width-decay structure, and apply the S-MLPMMH approximation on MNIST classification. Finally, we summarize our findings in Section \ref{sec:conc}. Additional background on SMC and several existing and novel MLSMC methods for parameter estimation is provided in Appendix \ref{sec:param_method}, together with supporting theory, numerical experiments, and proofs.

\begin{figure*}[h!]
\centering
\includegraphics[width=0.8\columnwidth]{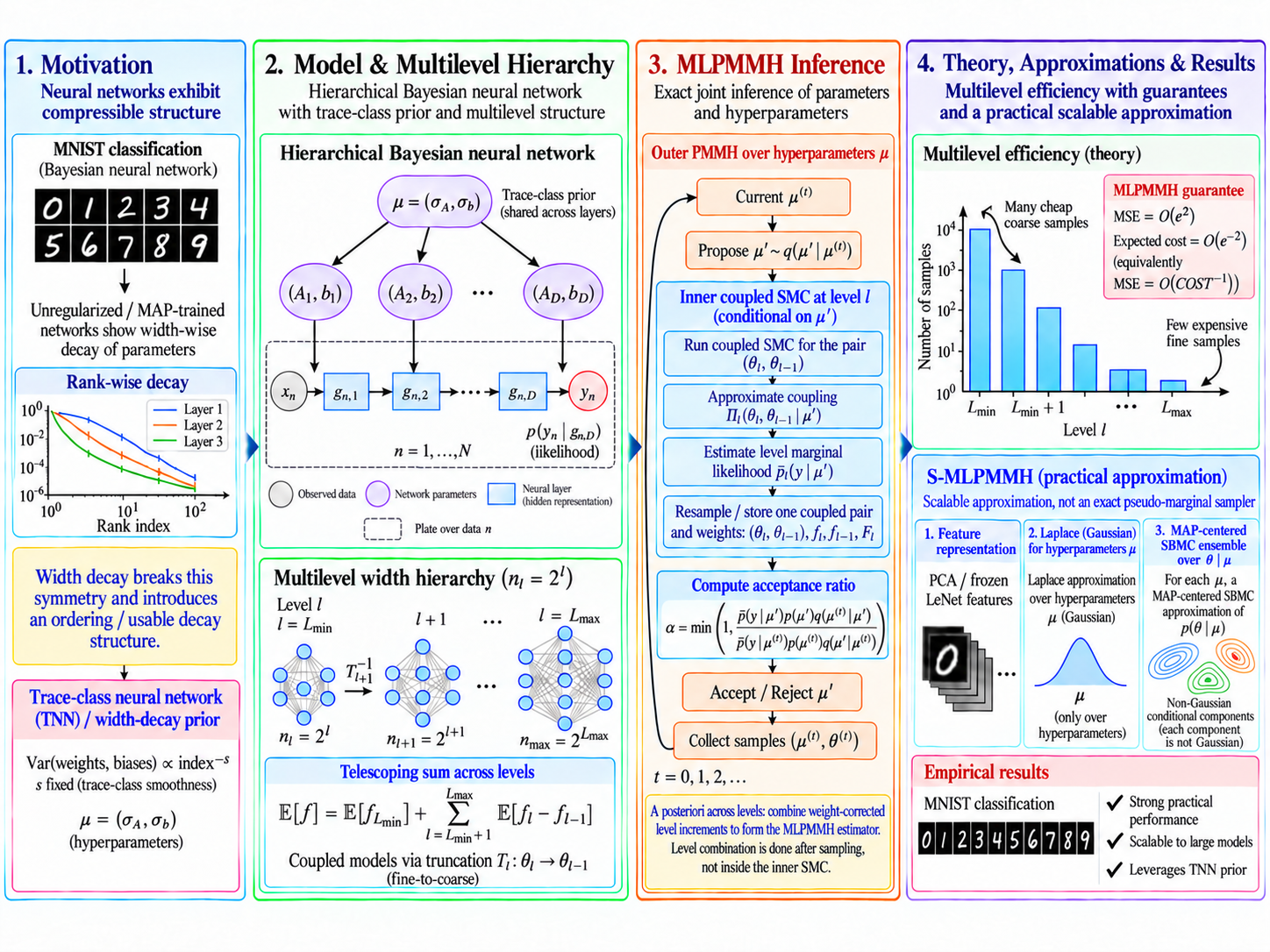}
\caption{
Diagram of the MLPMMH program. A technical diagram of the hierarchical TNN model itself is 
provided in Figure \ref{fig:tnn} in the Appendix.}
\label{fig:main_tnn}
\end{figure*}

\section{Model Formulation}
\label{sec:back}

Suppose we have data $\cD = ((x_1,y_1),...,(x_N,y_N))$, $N \in \bbN$, where $x_i \in \sX \subseteq \bbR^{n}$ are inputs associated to 
output labels $y_i \in \sY \subseteq \bbR^{m}$. We want to infer the joint state of parameter $\theta \in \Theta \subseteq \bbR^{d_{\theta}}$ and the hyperparameter $\mu \in \sS \subseteq \bbR^{d_{\mu}}$ based on the data $\cD$ within a deep neural network (DNN) model $g: \sX \times \Theta \times \sS \rightarrow \bbR^{m}$ \footnote{Although the hyperparameter $\mu$ does not enter the forward map
explicitly once $\theta$ is fixed, we retain the notation
$g(x,\theta,\mu)$ to emphasize that $\theta$ is distributed according to a prior parameterized by $\mu$. Thus, conditionally on $\theta$, $g(x,\theta,\mu)=g(x,\theta)$, while the distribution of the network output depends on $\mu$ through the prior on $\theta$.} where
$
g(x,\theta,\mu) = (g_1(x,\theta,\mu),...,g_m(x,\theta,\mu)).
$

\subsection{Trace-class neural network priors}

These TNN priors have also been introduced and well-established in the \cite{chadaTNN}, and here we followed a similar procedure: for $d=1,...,D$, the associated weights and bias ($\theta$) are defined as

\begin{equation}
\begin{aligned}
\label{eqn:prior_tnn}
A_{ij,d} &\sim \cN(0, \sigma_A^2(ij)^{-s}), 
\\
\quad A_{ij,1} &\sim \cN(0, \sigma_A^2i^{-s}), 
\\
\quad A_{ij,D} &\sim \cN(0, \sigma_A^2 j^{-s}), \\
b_{i,d} &\sim \cN(0,\sigma_b^2 i^{-s}).
\end{aligned}
\end{equation}

The model, presented in Figure \ref{fig:tnn}, is constructed based on the pushforward of a countably infinite Gaussian random variable that has a specified {\em weight decay} in the width direction (which can similarly be applied to depth). This instils a clear inductive bias—analogous to choosing a kernel to specify the smoothness of a Gaussian process (GP) while remaining fundamentally different.
These priors {\em converge strongly}, while GPNN typically converge only weakly, so that it allows us to apply the MLMC methodology. Since the hyperparameter $s$ is a tricky one, whose value determines the strong convergence rate in the MLMC methodology, 
we will set it to be a constant value here. We then can let the hyperparameter be $\mu = (\sigma_A, \sigma_b)$, and a suitable prior could be the 2-dimensional log-normal distribution.

\subsection{Multilevel Structure}

Given the level $l \in \bbN$, for simplicity, we assume $D$ is fixed and let the NN width is the same for all layer other than the input and the output, i.e.$n_d = n_l=2^l$ for $d \in \{1,...,D-1\}$, $n_0=n$ and $n_D=m$. The corresponding vector of parameters is denoted by $\theta_l:=((A_1^l,b_1^l),...,(A_D^l,b_D^l))$ which is in the parameter space $\Theta_l=\{\bbR^{n_l \times n} \times \bbR^{n_l}\} \otimes \{\bbR^{n_l \times n_l} \times \bbR^{n_l}\}^{\otimes D-2} \otimes \{\bbR^{m \times n_l} \times \bbR^{m}\}$ and the prior is $\bar{\pi}_l(\theta_l)$. 

Here, we introduce the canonical truncation map $T_l:\Theta_l\to\Theta_{l-1}$. Given a fine-level parameter $\theta_l \in \Theta_l$, we write $\theta_{l-1}=T_l(\theta_l)$ for the corresponding coarse-level parameter obtained by setting: $A^l_{ij,d}=A^{l-1}_{ij,d}$ and $b^l_{i,d}=b^{l-1}_{i,d}$ for $i,j\in{1,\ldots,n_{l-1}}$ and $d \in \{2, \dots, D\}$; a similar assignment is to be adopted for $(A^l_1, b^l_1)$ and $(A^l_D, b^l_D)$.

Then, we denote the DNN output by $g_l(x,\theta_l,\mu)$. Using the same output map $\Psi$ at every level, we define the quantity of interest by $\varphi_l(x,\theta_l,\mu) :=
(\Psi\circ g_l)(x,\theta_l,\mu) = \Psi(g_l(x,\theta_l,\mu))$. The  posterior distribution of the parameter $\theta_l$ conditioned on $\mu$ is
\begin{equation*}
    \pi_l(\theta_l|\mu) \propto \cL_{l}(\theta_l|\mu) \bar{\pi}_l(\theta_l)
\end{equation*}

$\pi_{l}(\theta|\mu) = \frac{f_l(\theta|\mu)}{Z_l}$, where $f_l(\theta_l|\mu) = \cL_l(\theta_l|\mu)\bar{\pi}_l(\theta_l)$ and $Z_l = \int_{\Theta_l} f_{l} d\theta$. Then we construct the approximate coupling, $\Pi_l$, as follows
\begin{equation*}
\begin{aligned}
     \Pi_l(\theta_l,\theta_{l-1}) 
     &= \frac{F_l(\theta_l,\theta_{l-1}|\mu)}{F_l(1|\mu)},\\
     F_l(\theta_l,\theta_{l-1}|\mu) &= {\bf L}_l(\theta_l,\theta_{l-1}|\mu) \bar{\Pi}_l(\theta_l,\theta_{l-1}),
\end{aligned}
\end{equation*}
where we choose the joint likelihood, ${\bf L}_l(\theta_l,\theta_{l-1}|\mu)$, as the maximum coupling
\begin{equation*}
    {\bf L}_l(\theta_l,\theta_{l-1}|\mu) = \max\{\cL_l(\theta_l|\mu),\cL_{l-1}(\theta_{l-1}|\mu) \},
\end{equation*}
and the coupling of the prior distribution $\bar{\Pi}_l$ is 
\begin{equation*}
    \bar{\Pi}_l(d\theta_l,d\theta_{l-1})
    =
    \bar{\pi}_l(d\theta_l)\,
    \delta_{T_l(\theta_l)}(d\theta_{l-1}).
\end{equation*}

\section{Hierarchical Bayesian Estimation of Neural Networks \ with Trace-Class Priors}
\label{sec:hyper_method}

In this section, we provide and present our primary contribution, which is our methodology for hierarchical Bayesian estimation of the hyperparameter. This will be based on particle Markov chain Monte Carlo, specifically the particle marginal Metropolis-Hastings method \citep{chada2021unbiased,jasra2018bayesian}. We review this method \citep{jasra2018bayesian} and design it in such a way related to our framework and trace-class neural network priors. Further details of related algorithms are provided in the supplementary material.

\subsection{MLPMMH}

Now, we embed the MLSMC sampler in Algorithm \ref{alg:mlsmc} inside the PMMH method to estimate the marginal likelihood, which gives the multilevel pseudo-marginal Metropolis-Hastings (MLPMMH) method. In the context of Bayesian neural networks, we infer the prior standard-deviation hyperparameters governing the weights and biases of the TNNs model with log-normal priors. Due to the multilevel setup, the joint posterior distribution at each level $l$ is given by
\begin{equation*}
    \pi_{l}(\theta_{l},\mu) \propto \underbrace{\cL_{l}(\theta_{l}|\mu) \bar{\pi}_{l}(\theta_{l})}_{f_{l}(\theta|\mu)}\pi_{0}(\mu) =: f_l(\theta_l,\mu).
\end{equation*}


One can run a MLSMC sampler in Algorithm \ref{alg:mlsmc} to approximate the marginal distribution
\begin{equation*}
\begin{aligned}
    \Pi_{l}(\theta_l, \theta_{l-1}|\mu) &\propto {\bf L}_{l}(\theta_{l},\theta_{l-1}|\mu) \bar{\Pi}_{l}(\theta_l,\theta_{l-1})\\ & =: F_{l}(\theta_l,\theta_{l-1}|\mu).
\end{aligned}
\end{equation*}

Then the joint coupling posterior distribution at each level can be approximate using Algorithm \ref{alg:mlpmmh_l} as
\begin{equation*}
    \Pi_{l}^{N,M_l} (d(\theta,\mu)) := \frac{1}{M_{l}} \sum_{k=1}^{M_{l}} \delta_{(\theta_{k,l},\mu_{k,l})} (d(\theta,\mu)),
\end{equation*}

Give a level parameter $L=\{L_{\text{min}},L_{\text{max}} \}$, the MLPMMH estimator is given by
\begin{equation}
\begin{gathered}
    \hat{\varphi}_{L}^{\text{MLPMMH}} 
    := \sum_{l=L_{\text{min}}}^{L_{\text{max}}} \bigg( 
     \frac{\frac{1}{M_l}\sum_{k=1}^{M_l} \varphi_l(\theta_{k,l},\mu_{k,l}) \frac{f_{l}(\theta_{k,l}|\mu_{k,l})}{F_{l}(\theta_{k,l},\theta_{k,l-1}|\mu_{k,l})}}{\frac{1}{M_l}\sum_{k=1}^{M_l}\frac{f_{l}(\theta_{k,l}|\mu_{k,l})}{F_{l}(\theta_{k,l},\theta_{k,l-1}|\mu_{k,l})}}
    \\- \frac{\frac{1}{M_l}\sum_{k=1}^{M_l} \varphi_{l-1}(\theta_{k,l-1},\mu_{k,l}) \frac{f_{l-1}(\theta_{k,l-1}|\mu_{k,l})}{F_{l}(\theta_{k,l},\theta_{k,l-1}|\mu_{k,l})}}{\frac{1}{M_l}\sum_{k=1}^{M_l}\frac{f_{l-1}(\theta_{k,l-1}|\mu_{k,l})}{F_{l}(\theta_{k,l},\theta_{k,l-1}|\mu_{k,l})}}\bigg),
\label{eqn:mlpmmh}
\end{gathered}
\end{equation}
where $\varphi_{L_{\text{min}}-1}:=0$.

\paragraph{Parallization.} The MLPMMH estimator also admits several levels of parallelization. First, the level-wise PMMH estimators can be run independently for different ($l$). Second, within each PMMH iteration, the SMC estimator used to estimate the level-dependent normalizing constant can exploit particle-level parallelism. Thus the main available parallelism is across levels and across SMC particles. The PMMH chain itself remains sequential in the iteration index (k), although independent chains or independent repetitions may be run in parallel.

\begin{algorithm*}[t]
\begin{algorithmic}[1]
\caption{MLPMMH for level $l$ \citep{jasra2018bayesian}}

\State Simulate $\mu_0 \sim \pi_0$ and run  Algorithm~\ref{alg:mlsmc} for level $l$ with $N$ and $\mu_0$, it outputs $(F_{0,l},\hat{\theta}_{0,l})$. 
\State Define $p_{0,l}^{N}(\mu_{0}) = \frac{1}{N} \sum_{i=1}^{N} F_{0,l}(\hat{\theta}_{0,l}^i,\hat{\theta}_{0,l-1}^{i}|\mu_{0})$ and $w_{0,l}^i = \frac{F_{0,l}(\hat{\theta}^i_{0,l},\hat{\theta}_{0,l-1}^{i}|\mu_{0})}{\sum_{i=1}^{N}F_{0,l}(\hat{\theta}^i_{0,l},\hat{\theta}_{0,l-1}^{i}|\mu_{0})}$.
\State Resample: Select $I_{0,l}^1 \sim \{ w_{0,l}^1,...,w_{0,l}^{N} \}$, and let $\theta_{0,l}=\hat{\theta}_{0,l}^{I_{0,l}^1}$.
\For{$k=1,...,M_l$}
\State Propose $\hat{\mu}_k \sim Q(\cdot,\mu_{k-1})$.
Run Algorithm~\ref{alg:mlsmc} for level $l$ with $N$ and $\hat{\mu}_k$, and output$\left\{\hat{\theta}_{k,l}^i, \hat{\theta}_{k,l-1}^i, f_{k,l}^i(\hat{\theta}_{k,l}^i), f_{k,l-1}^i(\hat{\theta}_{k,l-1}^i), F_{k,l}^i\right\}_{i=1}^{N}$. 
\State Define $p_{k,l}^{N}(\hat{\mu}_{k}) = \frac{1}{N} \sum_{i=1}^{N} F_{k,l}(\hat{\theta}_{k,l}^i,\hat{\theta}_{k,l-1}^{i}|\hat{\mu}_{k})$ and $$w_{k,l}^i = \frac{F_{k,l}(\hat{\theta}^i_{k,l},\hat{\theta}_{k,l-1}^{i}|\hat{\mu}_{k})}{\sum_{i=1}^{N}F_{k,l}(\hat{\theta}^i_{k,l},\hat{\theta}_{k,l-1}^{i}|\hat{\mu}_{k})}.$$
\State Resample: Select $I_{k,l}^1 \sim \{ w_{k,l}^1,...,w_{k,l}^{N} \}$, and let 
$\theta_{k,l}=\hat{\theta}_{k,l}^{I_{k,l}^1}$,
$\theta_{k,l-1} = \hat{\theta}_{k,l-1}^{I_{k,l}^1}$.
\State Simulate $u \sim U[(0,1)]$ and compute 
\begin{equation*}
    \alpha = \min \Bigg\{ \frac{\pi_0(\hat{\mu}_{k})Q(\mu_{k-1}|\hat{\mu}_{k})p_{k,l}^{N}(\hat{\mu}_{k})}{\pi_0(\mu_{k-1})Q(\hat{\mu}_{k}|\mu_{k-1})p_{k,l}^{N}(\mu_{k-1})}, \ 1 \Bigg\}.
\end{equation*}
\State If $u \leq \alpha$, then $(\mu_{k},\theta_{k,l},p_{k,l}^{N}(\mu_{k})) = (\hat{\mu}_{k},\hat{\theta}_{k,l},p_{k,l}^{N}(\hat{\mu}_{k}))$; otherwise, \\  $(\mu_{k},\theta_{k,l},p_{k,l}^{N}(\mu_{k})) = (\mu_{k-1},\theta_{k-1,l},p_{k,l}^{N}(\mu_{k-1}))$.
\EndFor
\State Define $\mu_{k,l} = \mu_{k}$. Store $\theta_{k,l}$, $\theta_{k,l-1}$, $f_{l-1}(\theta_{k, l-1}|\mu_{k,l}) = \bar{\pi}_{l-1}(\theta_{k,l-1}) \cL_{l-1}(\theta_{k,l-1}|\mu_{k,l})$, $f_{l}(\theta_{k, l}|\mu_{k,l}) = \bar{\pi}_{l}(\theta_{k,l}) \cL_{l}(\theta_{k,l}|\mu_{k,l})$ and $F_l(\theta_{k,l},\theta_{k,l-1}|\mu_{k,l})$.
\label{alg:mlpmmh_l}
\end{algorithmic}
\end{algorithm*}

\subsection{S-MLPMMH}
\label{sec:smlpmmh}

Inspired by the SBMC framework in \cite{liang2025scalable},
we introduce a practical SBMC-based approximation of MLPMMH
designed to reduce its serial computational cost, called S-MLPMMH. The original MLPMMH method uses an outer serial PMMH chain to sample the hyperparameter, and each outer proposal requires running an inner MLSMC estimator. To obtain a scalable practical version, we replace the serial outer PMMH chain with a Laplace-based SBMC approximation to the hyperparameter posterior, centered at an empirical-Bayes MAP estimate. We also replace the inner MLSMC sampler by a MAP-centered SBMC sampler. 

Specifically, at each level $l$, we consider the transformed
hyperparameter $\eta=\log\mu$ and construct a local Laplace
approximation centered at a level-dependent MAP estimate
$\eta_{l,\sf MAP}$. The outer hyperparameter samples are independently
drawn as
\begin{equation*}
\eta_{k,l} \sim \cN \left(\eta_{l,\sf MAP},V_{l,\eta}\right),
\qquad \mu_{k,l}=\exp(\eta_{k,l}),
\end{equation*}
for $k=1,\ldots,M_l$, where $V_{l,\eta}$ is obtained from the local curvature of the
empirical-Bayes objective.

For each sampled hyperparameter $\mu_{k,l}$, we then construct an
inner MAP-centered SBMC sampler. In particular, we compute the
corresponding MAP estimate $\theta_{l,\sf MAP}$ and construct a new
inner prior centered at this estimate,
\begin{equation*}
\hat{\bar{\pi}}_{l} = \cN \left(\theta_{l,\sf MAP},s_{\theta}\mu_{k,l}^{2}\sf I
\right).
\end{equation*}
The MAP estimate is recomputed for each hyperparameter sample
$\mu_{k,l}$.

This practical version preserves the multilevel correction structure
of MLPMMH while reducing both the serial outer cost and the cost of
inner sampling. Moreover, due to the parallelizability of the SBMC
framework, the outer hyperparameter samples and their corresponding
inner SBMC computations can be processed independently and therefore
in parallel. Note, however, that S-MLPMMH does not constitute an exact
pseudo-marginal sampler for the original joint posterior, and the
theoretical results for MLPMMH in Section~\ref{sec:theory} do not
directly apply to this practical approximation. Further details of
the empirical-Bayes, Laplace, and MAP-centered SBMC construction are
provided in Appendix~\ref{app:smlpmmh}.

\section{Theoretical results}
\label{sec:theory}

The short section is devoted to our main theoretical contribution, which is an MSE bound with our MLPMMH result. It is requires several important additional results, which are provided in Appendix \ref{app:theory_background} and proven in Appendix \ref{app:proofs}.

\subsection{Main results}

\begin{theorem}\label{thm:mlpmmh}
    Let us assume Assumptions \ref{ass:G1_hyper}, \ref{ass:K_hyper} and \ref{ass:rate_hyper} with $\beta >\gamma$. Then, for suitable $\varphi: \Theta \times \sS \rightarrow \bbR$, there exists a $C < +\infty$ such that for any $\varepsilon >0$ one can choose $(L=\{L_{\text{min}},L_{\text{max}}\},\{M_l\}_{l=L_{\text{min}}}^{L_{\text{max}}})$ such that
    \begin{equation*}
    \bbE[(\hat{\varphi}_{L}^{\mathrm{MLPMMH}} - \pi(\varphi))^2] \leq C \varepsilon^2,
    \end{equation*}
    and expected COST$(\hat{\varphi}_{L}^{\mathrm{MLPMMH}}) \leq C \varepsilon^{-2}$, i.e. the canonical rate. The estimator is defined in equation \eqref{eqn:mlpmmh}.
\end{theorem}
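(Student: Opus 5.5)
We follow the standard MLMC complexity argument, with the level-wise PMMH self-normalised increment estimators in place of i.i.d.\ increments. Write $\pi_l(\varphi_l)$ for the expectation of $\varphi_l$ under the level-$l$ joint posterior $\pi_l(\theta_l,\mu)$. The importance weights $f_l/F_l$ and $f_{l-1}/F_l$ turn expectations under the coupled target $\Pi_l(\theta_l,\theta_{l-1},\mu)\propto F_l(\theta_l,\theta_{l-1}|\mu)\pi_0(\mu)$ into $\pi_l(\varphi_l)$ and $\pi_{l-1}(\varphi_{l-1})$. Hence the telescoping sum gives $\sum_{l=L_{\min}}^{L_{\max}}[\pi_l(\varphi_l)-\pi_{l-1}(\varphi_{l-1})]=\pi_{L_{\max}}(\varphi_{L_{\max}})$. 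We then decompose
\[
\hat{\varphi}_{L}^{\mathrm{MLPMMH}}-\pi(\varphi)=\sum_{l}\big(\hat Y_l - Y_l\big)+\big(\pi_{L_{\max}}(\varphi_{L_{\max}})-\pi(\varphi)\big),
\]
where $\hat Y_l$ is the level-$l$ summand of \eqref{eqn:mlpmmh} and $Y_l=\pi_l(\varphi_l)-\pi_{l-1}(\varphi_{l-1})$. Since the level-wise chains are independent, the MSE is bounded by $\sum_l \bbE[(\hat Y_l-Y_l)^2]$, plus cross terms $\sum_{l\neq l'}\bbE[\hat Y_l-Y_l]\,\bbE[\hat Y_{l'}-Y_{l'}]$ coming from the $\mathcal O(M_l^{-1})$ self-normalisation bias, plus the squared bias $|\pi_{L_{\max}}(\varphi_{L_{\max}})-\pi(\varphi)|^2$.

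\textbf{Step 1 (bias).} Under Assumption~\ref{ass:rate_hyper}, the TNN strong convergence gives a weak error $|\pi_{L}(\varphi_{L})-\pi(\varphi)|\le C2^{-\alpha L}$ with $\alpha\ge\beta/2$. Choosing $L_{\max}\asymp \log_2(\varepsilon^{-1})/\alpha$ makes this term $\mathcal O(\varepsilon^2)$.

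\textbf{Step 2 (level variance).} This is the main obstacle. We write each self-normalised ratio as $A_l/B_l$ and linearise it, $A/B - a/b=(A-a)/b - (A/B)(B-b)/b$, doing this jointly for the fine and coarse ratios so that the difference $\varphi_l f_l/F_l-\varphi_{l-1}f_{l-1}/F_l$ appears. By Assumption~\ref{ass:G1_hyper}, which bounds the likelihoods and the ratios $f_l/F_l\le 1$ (using the maximum coupling ${\bf L}_l\ge\cL_l,\cL_{l-1}$), the normalisers are bounded below uniformly in $\mu$. The Markov chain produced by PMMH is uniformly ergodic by Assumption~\ref{ass:K_hyper}, which gives a minorisation of the PMMH kernel uniform in $l$. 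We then invoke the $L^2$ bound for ergodic averages: $\bbE[(M^{-1}\sum_k h(X_k)-\Pi(h))^2]\le C\|h\|^2_{L^2}/M$, and the bias is $\mathcal O(1/M)$. Applied to $h=\varphi_l f_l/F_l-\varphi_{l-1}f_{l-1}/F_l$ and to $f_l/F_l-f_{l-1}/F_l$, together with $\bbE_{\Pi_l}[h^2]\le C 2^{-\beta l}$ from the coupling assumption (the strong convergence of $\theta_l$ versus $T_l(\theta_l)$ under the TNN prior, transferred through a Lipschitz $\varphi$ and likelihood), this yields
\[
\bbE[(\hat Y_l-Y_l)^2]\le C\,\frac{2^{-\beta l}}{M_l},\qquad |\bbE[\hat Y_l]-Y_l|\le C\,\frac{2^{-\beta l/2}}{M_l}.
\]
The key point is that the asymptotic variance constant must not blow up with $l$ despite the pseudo-marginal noise; this is exactly where the uniform-in-$l$ constants of Assumption~\ref{ass:K_hyper} are needed.

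\textbf{Step 3 (optimisation).} The cost per level is $\mathrm{COST}_l\asymp M_l N 2^{\gamma l}$ with $N$ fixed. The cross-term bias contributes $\big(\sum_l 2^{-\beta l/2}M_l^{-1}\big)^2$, which is dominated by the variance sum once $M_l\gtrsim \varepsilon^{-2}2^{-(\beta+\gamma)l/2}$. We set $M_l=\lceil \varepsilon^{-2}2^{-(\beta+\gamma)l/2}K\rceil$ with $K=\sum_l 2^{(\gamma-\beta)l/2}$. Since $\beta>\gamma$, $K$ is bounded uniformly in $L_{\max}$. The variance sum is then $\le C\varepsilon^2$ and the cost is $\sum_l M_l 2^{\gamma l}\le C\varepsilon^{-2}K^2\le C\varepsilon^{-2}$. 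The ceiling adds $\sum_{l\le L_{\max}}2^{\gamma l}\asymp\varepsilon^{-\gamma/\alpha}$, which is $\le\varepsilon^{-2}$ when $\gamma\le 2\alpha$; with $\alpha\ge\beta/2>\gamma/2$ this holds. This completes the plan.
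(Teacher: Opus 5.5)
Your proposal is correct and follows the paper's structure. Both arguments split the error into truncation bias, level variances and cross terms. Both linearise the fine and coarse ratios jointly (as in Lemma~\ref{lem:separate_terms}), use the $L^2$ ergodic-average bound from the minorisation in Assumption~\ref{ass:K_hyper} (Proposition~\ref{prop:bound_term}), and take $M_l\propto\varepsilon^{-2}K_L2^{-(\beta+\gamma)l/2}$.

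The real difference is how you treat the cross terms. The paper does not use independence of the level-wise chains there. It applies Cauchy--Schwarz to each $\bbE[(\hat Y_q-Y_q)(\hat Y_l-Y_l)]$, so the whole fluctuation part is at most $\bigl(\sum_l\sqrt{C2^{-\beta l}/M_l}\bigr)^2\le C\varepsilon^2\bigl(\sum_l2^{-(\beta-\gamma)l/4}\bigr)^2=\cO(\varepsilon^2)$. This needs only the variance bound and $\beta>\gamma$.

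You instead reduce the cross terms to products of level biases. That forces you to establish the sharper bound $|\bbE[\hat Y_l]-Y_l|\le C2^{-\beta l/2}/M_l$. The generic $\cO(1/M_l)$ bias of a ratio estimator would not do: with $\alpha=\beta/2$ one has $\sum_lM_l^{-1}\asymp\varepsilon^{1-\gamma/\beta}$, whose square is not $\cO(\varepsilon^2)$.

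The sharper bound is true, but you only assert it. Proving it needs two things:
\begin{itemize}
\item a second-order expansion of the joint ratio difference, whose remainders are products of an $L^2$ fluctuation of size $\cO(M_l^{-1/2})$ with one of size $\cO(2^{-\beta l/2}M_l^{-1/2})$ (the latter from the variance bound already proved);
\item an initialisation bias of the ergodic averages of order $\|h\|_{L^2}/M_l$ rather than $\|h\|_\infty/M_l$, which is automatic under the stationary start of Proposition~\ref{prop:bound_term}.
\end{itemize}

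In return, your route shows the cross terms are $\cO(\varepsilon^{4-\gamma/\alpha})=o(\varepsilon^2)$, so they do not inflate the MSE constant. The Cauchy--Schwarz constant, by contrast, blows up as $\beta\downarrow\gamma$. The paper's route is simply shorter.

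You also handle two points the paper glosses over: the lower bound on the normalisers, and the ceiling cost $\asymp\varepsilon^{-\gamma/\alpha}\le\varepsilon^{-2}$. Conversely, like the paper's first equality, your expansion omits the term $2b\sum_l\bbE[\hat Y_l-Y_l]$, where $b=\pi_{L_{\max}}(\varphi_{L_{\max}})-\pi(\varphi)$. This is harmless once you apply $(x+y)^2\le 2x^2+2y^2$.
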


\begin{proof}
The proof sees Appendix \ref{app:main_thm_proof}.
\end{proof}

Whereas, the computational cost of PMMH to achieve the MSE$=O(\varepsilon^2)$ is $\cO(\varepsilon^{-2-\gamma/\alpha})$. It implies that PMMH will often suffer from the curse of dimensionality since the ratio $\frac{\gamma}{\alpha}$ will increase as the dimension becomes high.

\section{Numerical Experiments}
\label{sec:num}

In this section, we provide numerical experiments through our MLPMMH scheme, using TNN neural networks. We will not only verify our complexity rate (through Theorem \ref{thm:mlpmmh}) but also justify the choice of the TNN prior. That will be done by checking its weight decay in the width limit. Experiments will be conducted on a Bayesian regression and MNIST classification problem while providing a comparison with other methods.

\subsection{Numerical justification for MLPMMH}

\subsubsection{Synthetic regression dataset}

Generate $N_{tr}=100$ data points, $y_i$, followed \ref{eqn:bayes_reg} at level $l=7$ using $x_i = 2 + \sqrt{0.5} \cdot \nu_i \in \bbR^{10}$, where $\nu_i \sim \cN(0, I_{10})$ for $i = 1, \dots, 100$. 
Evaluation is then applied on one fixed test point $x = [2, 2, \dots, 2] \in \bbR^{10}$.


In the section, we numerically verified the rate decay in Assumption \ref{ass:rate_hyper} and convergence rate in Theorem \ref{thm:mlpmmh} on the synthetic regression example using the Bayesian TNN model. The number of neural layers is $D=3$, the input neural network width is then set to $n = 10$, and the output neural network width is set to $m=1$. The TNN prior uses a rate decay of $s = 2$, the MCMC process in MLPMMH takes $50$ burn-in samples and adaptive step size, and the HMC kernel 
is applied with $10$ mutation steps, one leapfrog step and adaptive step size in the MLSMC. The hyperparameter prior is the LogNormal distribution with the location parameter as $-0.125$ and scale parameter as $0.5$.

A regularity test, similar to MLSMC, is conducted to estimate the decay of bias and variance for normalized estimator differences between levels under the posterior distribution. The results are shown in Figure~\ref{fig:mlpmmh_regularity}, supporting the decay rate assumed in Assumption~\ref{ass:rate_hyper} for the convergence analysis. Again, starting from level $l = 4$, the decay of biases and weights aligns with the expected rate, where the bias decay (weak convergence) is bounded by $\alpha = \beta/2$ and the strong convergence rate is $\beta = 2s - 1 = 3$ with $s = 2$.

Taking $L_{\text{min}} = 4$ as the starting level, the convergence test for the MLPMMH is presented in Figure~\ref{fig:mlpmmh_complexity}, i.e. MSE$=\cO(\text{COST}^{-1})$. Given a target accuracy level $\varepsilon$, we can follow a similar procedure using the mean and variance decay to compute the maximum level $L_{\max}$ and the number of samples per level $M_l$ in the MLPMMH as in Section~\ref{sec:mlsmc_practical_set}, and set the $N=100$ samples in the MLSMC for estimating the marginal distribution. The reference solution is computed at accuracy $\varepsilon = 8\times10^{-4}$ using MLPMMH.

\begin{figure}[h!]
  \centering
  \includegraphics[width=.5\linewidth]{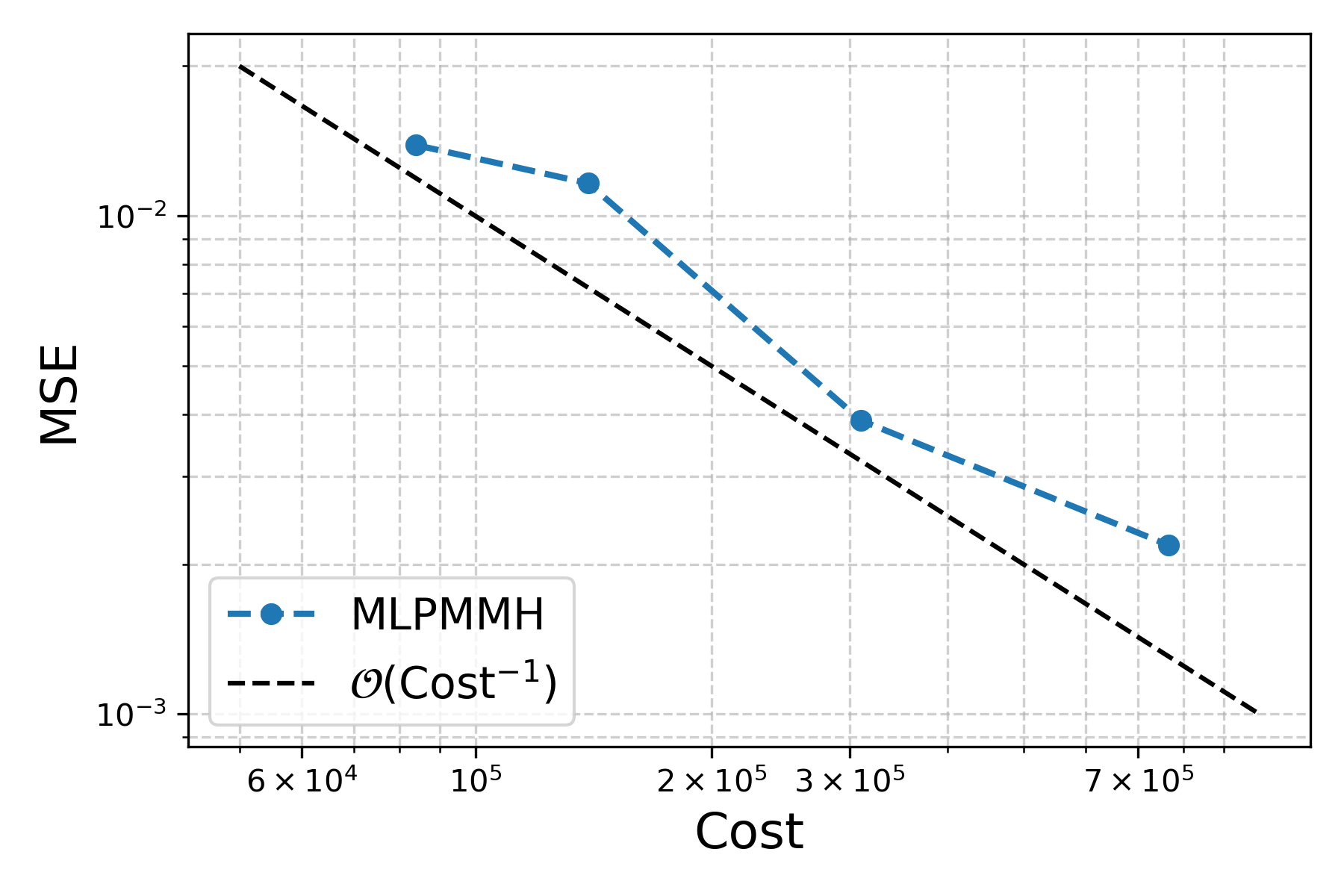}
  \caption{Complexity test for MLPMMH on Bayesian regression model: 
  {We use $10$ realizations to compute the MSE for each experiment 
  and plot the rate of convergence w.r.t computation costs. 
  The regression coefficient is $-0.9$.}}
\label{fig:mlpmmh_complexity}
\end{figure}

\subsection{Empirical verification of TNN prior structure}


To numerically validate the TNN prior, we conduct a series of controlled MAP training experiments on the MNIST dataset. 

A key issue in interpreting width-dependent decay is the exchangeability of hidden units.
For NoReg and weight-decay, the ordering of hidden units is arbitrary: the same network function can often be represented after permuting the hidden-unit indices.
Therefore, a trained network may contain a low-effective-width or compressible structure even if this structure is not aligned with the original width ordering.
For this reason we use a rank-wise diagnostic, which sorts learned weight magnitudes and tests whether compressibility is present at all. 
This informs the width-decay that can be non-intrusively imposed when we break the exchangeability.

We sort the absolute weights in each layer in descending order and plot $|w|$ against rank on log--log axes, as shown in Figure~\ref{fig:tnn_weight_decay_summary}.
Since this sorting removes the original width indices, the figure should be interpreted as a compressibility diagnostic rather than as direct evidence of index-wise TNN structure.
The NoReg and weight-decay models already show clear decay in sorted weight magnitudes, suggesting that trained neural networks naturally concentrate much of their mass in a relatively small number of effective directions.
The width-decay model preserves a comparable leading-rank structure while inducing a stronger decay in the tail, especially in the hidden layers.
This suggests that the TNN prior is not imposing an unrelated structure from scratch; rather, it makes an already present compressibility structure explicit and ordered.

\begin{figure*}[h!]
  \centering
  \includegraphics[width=\linewidth]{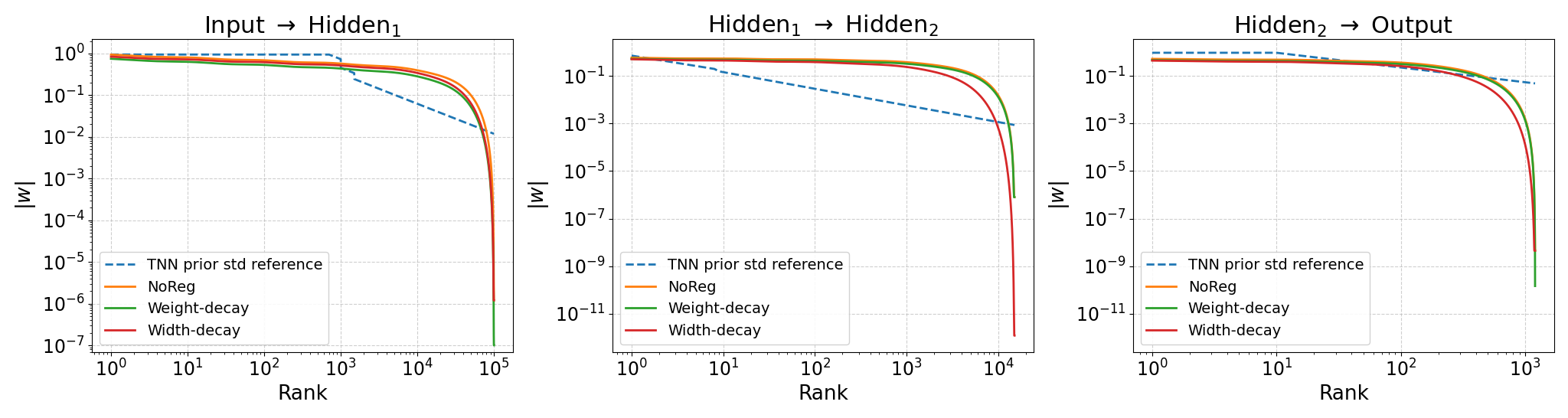}  
\caption{Rank-wise decay of learned weight magnitudes for $s=2.0$ and $\sigma_A=\sigma_b=1.0$.
  {For each layer, the absolute values of the learned weights are sorted in descending order and plotted against rank on log-log axes.
  }}
\label{fig:tnn_weight_decay_summary}
\end{figure*}

Finally, Table~\ref{tab:tnn_weight_decay} shows that width-decay achieves performance comparable to NoReg and weight-decay for moderate choices of the decay parameter $s$.
In particular, for $s \leq 4$, width-decay maintains similar accuracy, NLL, Brier score, and ECE relative to the two baselines.
This indicates that introducing the TNN ordering does not substantially compromise predictive performance within a reasonable range of decay strengths.
However, very large values such as $s=10.0$ lead to numerical instability and poor performance, suggesting that overly aggressive width decay should be avoided.
Since the appropriate value of $s$ is problem-dependent, we discuss its practical choice further in the MLPMMH experiments below.



\begin{table*}[h!]
  \centering
  \scriptsize
  \begin{tabular}{l|c|c|ccccc}
    \toprule
    Method & NoReg & Weight-decay & \multicolumn{5}{c}{Width-decay} \\
    \cmidrule(lr){4-8}
    $s$ &  &  & 0.5 & 1.0 & 2.0 & 4.0 & 10.0 \\ 
    \midrule
    Acc (\%) & 97.92 ± 0.23 & 97.73 ± 0.06 & 97.82 ± 0.14 & 97.82 ± 0.28 & 97.65 ± 0.15 & 97.07 ± 0.13 & 89.28 ± 1.05 \\
    NLL      & 0.106 ± 0.015 & 0.110 ± 0.005 & 0.108 ± 0.005 & 0.112 ± 0.012 & 0.105 ± 0.011 & 0.105 ± 0.005 & 0.374 ± 0.042 \\
    Brier    & 0.035 ± 0.004 & 0.038 ± 0.001 & 0.036 ± 0.002 & 0.036 ± 0.004 & 0.038 ± 0.002 & 0.045 ± 0.002 & 0.161 ± 0.016 \\
    ECE      & 0.014 ± 0.002 & 0.015 ± 0.000 & 0.014 ± 0.001 & 0.015 ± 0.002 & 0.015 ± 0.001 & 0.008 ± 0.001 & 0.027 ± 0.004 \\
    \bottomrule
  \end{tabular}
    \caption{Metrics for three methods evaluated on the MNIST dataset: 
  {MLE, standard weight decay regularization, and TNN width-decay regularization are used for training on the MNIST training dataset with PyTorch default initialization and testing on the MNIST test dataset. 
  TNN width-decay regularization with fixed $\sigma=1$ and varying decay parameter $s$. Results are averaged over 5 replicates, with error bar given by one standard deviation.}}
  \label{tab:tnn_weight_decay}
\end{table*}


\begin{table*}[h!]
  \centering
  \footnotesize
  \resizebox{\textwidth}{!}{%
    \begin{tabular}{l|c|c|c|c|c|c|c|c}
      \toprule
      $L_{\max}$ & $M_l$ & $T_{\mathrm{total}}$ (s)
      & $T_{\mathrm{parallel}}$ (s) & FLOPs (TFLOP) & Acc. (\%) & NLL & Brier & ECE \\
      \midrule
      $2$ & $[7]$ & 635.08 & 9.07 & 0.32 & 83.51±0.05 & 0.470±0.000 & 0.239±0.000 & 0.023±0.001 \\
      \midrule
      $3$ & $[13, 5]$ & 1633.67 & 10.26 & 1.29 & 91.71±0.07 & 0.568±0.005 & 0.126±0.000 & 0.020±0.000 \\
      \midrule
      $4$ & $[30, 13, 5]$ & 4240.72 & 10.47 & 4.53 & 95.34±0.10 & 0.376±0.002 & 0.079±0.001 & 0.024±0.001 \\
      \midrule
      $5$ & $[77, 33, 14, 6]$ & 11259.92 & 9.41 & 14.65 & 97.06±0.05 & 0.241±0.001 & 0.057±0.000 & 0.035±0.001 \\
      \midrule
      $6$ & $[200, 87, 37, 16, 7]$ & 29716.96 & 9.43 & 44.32 & 97.33±0.04 & 0.204±0.003 & 0.051±0.000 & 0.035±0.000 \\
      \midrule
      $7$ & $[527, 229, 99, 43, 18, 8]$ & 79113.25 & 9.46 & 129.12 & 97.54±0.02 & 0.204±0.003 & 0.049±0.000 & 0.036±0.000 \\
      \bottomrule
    \end{tabular}%
  }
  \caption{Metrics for S-MLPMMH on the MNIST dataset: 
  {The starting level is $K=2$. The reported metrics include accuracy, NLL, Brier score, ECE, total wall-clock runtime $T_{\mathrm{total}}$, pseudo-parallel runtime $T_{\mathrm{parallel}}$, and estimated FLOPs for different choices of $L_{\max}$. The QOI is the predictive probability. Results are averaged over 3 replicates, with error bar gives by one standard deviation.}}
  \label{tab:practical_mlpmmh}
\end{table*}

Overall, these experiments suggest that the TNN width-decay structure is empirically compatible with effective neural network training.
The rank-wise diagnostic shows that trained networks already exhibit substantial compressibility, while the index-wise diagnostic shows that width-decay turns this compressibility into an ordered, truncation-friendly structure.
This provides empirical support for using the TNN prior as a structural prior in the MLPMMH construction.

\subsection{Practical application of MLPMMH} 



The original MLPMMH algorithm targets the joint posterior of the hyperparameters and model parameters by combining an outer pseudo-marginal Markov chain with an inner multilevel SMC estimator.
Although this provides a principled approach to inference under strong parameter-hyperparameter dependence, it is computationally expensive: each outer proposal requires a complete inner MLSMC run, while the outer Markov chain remains sequential. These features make the method difficult to scale to large datasets and high-dimensional feature
spaces. We therefore introduce 
practical modifications and evaluate the resulting approximation on the full MNIST dataset, using
$60000$ training and $10000$ test images.

\subsubsection{Feature matrix from the pre-trained LeNet NN}

We first pre-train the full LeNet architecture on the first \(N_{\sf tr}\) MNIST images using plain supervised learning. The network comprises two \(\text{Conv} \to \text{ReLU} \to \text{MaxPool}\) blocks followed by three fully–connected layers, the latter regularized with \(50\%\) dropout. All parameters are optimized for \(100\) epochs with cross–entropy loss and the Adam optimiser (initial learning rate \(10^{-3}\), halved every \(10\) epochs). Once training is complete, the parameters in the LeNet NN are fixed, i.e. a frozen LeNet NN. The output of the second MaxPool layer is then flattened into a \(256\)-dimensional feature vector, and these vectors are collected for all \(N_{\sf tr}\) training images. The resulting \(N_{\sf tr} \times 256\) matrix of features is 
the feature matrix used for the subsequent Bayesian analysis.

\subsubsection{PCA hierarchy and multilevel coupling.}

To obtain the TNN-style (isotropic) weight–decay and the nested sequence of models required for the multilevel scheme, we apply Principal Component Analysis (PCA) to the \(N_{\sf tr}\times 256\) training-feature matrix and express both training and test features in this basis. At level \(l\), the input layer width is set to $n_0 = 2^{l}$, i.e. we keep only the first \(2^{l}\) principal components (PCs). Followed the coupling construction introduced above, level \(l-1\) uses the first \(2^{l-1}\) PCs from the level \(l\), i.e. the fine and coarse likelihoods share their leading coordinates, yielding the strong correlation that drives multilevel variance reduction. The prior, $\bar{\pi}_{l}$, for the parameter in single Bayesian logistic-regression layer is the Gaussian prior now, i.e. $N(0,\sigma_A^2 I_{m\times n_0})$  and $N(0,\sigma_b^2 I_{m \times 1})$ where $\mu=(\sigma_A,\sigma_b)=(\sigma,\sigma)$. The hyperparameter prior, $\pi_0$, is the Log-Normal distribution. 
In our implementation, we set $\bbE[\mu] = \frac{1}{\sqrt{2^8/2}}$
\footnote{Here $2^8=256$ is the full feature dimension. The
choice $\bbE[\mu]=\sqrt{2/256}$ uses the He/Kaiming
standard-deviation scale as a convenient dimension-dependent
reference value.}
, $\operatorname{Var}(\mu) = 0.25\,\mathbb{E}[\mu]^2$. So, this gives the location parameter as $-2.54$ and scale parameter as $0.47$.

\subsubsection{Results}

We then apply the practical S-MLPMMH approximation introduced in Section~\ref{sec:smlpmmh}, using the PCA hierarchy described above to construct the multilevel estimators.

Table \ref{tab:practical_mlpmmh} reports the results for the practical
MLPMMH method as the computational budget increases. In these experiments, $L_{\max}$ is increased gradually, together with the corresponding number of outer SBMC samples $M_l$ at each level $l$.
The runtime is reported in two forms: the total wall-clock runtime
$T_{\mathrm{total}}$ and the pseudo-parallel runtime $T_{\mathrm{parallel}}$.
In our implementation, the independent outer SBMC samples could be run in parallel, and the inner SBMC samples within each outer SBMC sample could also be parallelized. Therefore, we estimate the parallel runtime as $T_{\mathrm{parallel}}=\max_l \frac{T_l}{N M_l}$, where $T_l$ denotes the total runtime at level $l$, $N$ is the number of inner SBMC samples, and $M_l$ is the number of outer SBMC samples at level $l$. We also report the estimated floating-point cost in TFLOPs.



With $d_l=2^l$ for $l=2,\ldots,7$, the level-dependent Bayesian
models are relatively small, and the idealized parallel runtime is therefore nearly constant across levels. The estimated FLOPs, however, increase with $L_{\max}$, as expected from the theoretical sample allocation. Consequently, this experiment is not well suited to demonstrating level-dependent wall-clock savings, although it illustrates the predictive behaviour of the practical multilevel approximation as the maximum level increases.
The ECE remains low for $L_{\max}\leq4$ but increases from $L_{\max}=5$ onward. To investigate this behaviour, we also compute the signed calibration gap, defined as
$\operatorname{Gap}
=
\frac{1}{N_{\mathrm{test}}}
\sum_{i=1}^{N_{\mathrm{test}}}
\left(\operatorname{acc}_i-\operatorname{conf}_i\right).
$
The gap is consistently positive across the considered values of
$L_{\max}$, indicating that the model is, on average, under-confident.
This suggests conservative uncertainty estimates,
which is a positive sign for reliability and robustness.




\section{Conclusion}
\label{sec:conc}

This paper develops and applies a
Bayesian method for
joint parameter and hyperparameter estimation for neural networks.
The methodology builds on multilevel Monte Carlo particle Metropolis methods.
and a particular neural network prior.
The prior structure is based on a trace-class neural network, which imposes 
decay in the width direction and enjoys strong convergence.
We provide a theoretical complexity result relating MSE to computational cost, 
along with numerical verification.
Numerical experiments were conducted on both a Bayesian linear regression problem and on the MNIST data set for classification. 
Our results demonstrate the effectiveness of our proposed method, 
both in terms the model prior and the algorithm used for posterior inference. 
 For the former, we compared our network architecture to that of standard training, and training where there is no regularization.
 For moderate values of the decay parameter $s$, width-decay
regularization achieves predictive accuracy and NLL comparable to the unregularized and standard weight-decay baselines, whereas excessively strong decay degrades predictive performance. Separately, the synthetic regression experiments support the predicted multilevel regularity and the canonical MSE-cost rate for the proposed MLPMMH method.
We also
presented an SBMC-based multilevel approximation using frozen LeNet
features and a PCA hierarchy. Since this practical variant replaces
both the outer PMMH chain and the inner MLSMC sampler, it should be
viewed as a scalable approximation rather than a direct 
verification of the exact MLPMMH.


\section*{Acknowledgements}
NKC is supported by an EPSRC-UKRI AI for Net Zero Grant: ``{Enabling CO2 Capture And Storage Projects Using AI}", (grant EP/Y006143/1). NKC is also supported by a City University of Hong Kong Start-up Grant, project number 7200809, and a RGC-ECS Grant, project number 21300826. KJHL and XL gratefully acknowledge the support of IBM and EPSRC in the form of an Industrial Case Doctoral Studentship Award.

\bibliography{refs}
\bibliographystyle{plainnat}

\appendix
\section{Model Formulation}
\label{app:model_background}

In this section, we present and review our model setup for the paper. This includes a review of our Bayesian posterior distribution of interest, while discussing this in the context of deep neural networks. Related to this, we also discuss our prior, which is based on trace-class neural networks. 

\subsection{Model Formulation}

We now define the deep neural network model as follows. Let $D \in \bbN$, $(n_0,...,n_D) \in \bbN^{D+1}$ with $n_0=n$ for the input layer and $n_D=m$ with output layer. Let weights be $A_{d} \in \bbR^{n_d \times n_{d-1}}$ and biases be $b_d \in \bbR^{n_d}$ for $d \in \{1,...,D\}$, we denote $\theta := ((A_1,b_1),...,(A_D,b_D)) \in \Theta = \otimes_{d=1}^{D} \{\bbR^{n_d \times n_{d-1}} \times \bbR^{n_d} \}$ and $\mu$ is the hyperparameter related to $\theta$. The layer is defined by
\begin{equation*}
\begin{aligned}
    g_1(x,\theta,\mu) & := A_1 x + b_1, \\
    g_d(x,\theta,\mu) & := A_d \sigma_{n_{d-1}}(g_{d-1}(x)) + b_{d}, \ \ d \in \{2,...,D-1\}, \\
    g(x,\theta,\mu) & := A_{D}\sigma_{n_{D-1}}(g_{D-1}(x)) + b_{D},
\end{aligned}
\end{equation*}
where $\sigma_{k}(z) := (\nu(z_1),...,\nu(z_k))^{T}$ with ReLU activation $\nu(z) = \max \{0,z\}$.

In order to infer the parameter $\theta$ and hyperparameter $\mu$, we establish the Bayesian model of the DNNs, which is called Bayesian DNNs. Given a proper prior $\bar{\pi}$ on $\Theta$ and a prior $\pi_0$ on $\mu$, the joint posterior distribution of the Bayesian DNNs \citep{chadaTNN,neal} can be computed up to a constant of proportionality
\begin{equation}\label{eqn:posterior_single}
    \pi(\theta) \propto \cL(\theta,\mu) \bar{\pi}(\theta|\mu) \pi_0(\mu),
\end{equation}
where $\cL(\theta,\mu)$ is the joint likelihood on $\Theta \times \sS$.

Define the quantity of interest through a map \(\Psi\) applied to the output. That is,
$$
\varphi(x,\theta,\mu)
:=
(\Psi\circ g)(x,\theta,\mu)
=
\Psi(g(x,\theta,\mu)).
$$
We then aim to compute the posterior expectation of the quantity of interest,
\begin{equation}\label{eqn:exp_qoi}
    \pi(\varphi)
    :=
    \bbE_{\pi}[\varphi].
\end{equation}

\subsection{Trace-class neural network priors}

The neural network priors on $\theta$, the weights and bias, we consider are the trace class neural networks, first proposed by \citet{sell2020dimension}. These priors were introduced to mimic Gaussian priors $\overline{\pi}_0 \sim \cN(0,C)$, for function-space inverse problems, where a common way to simulate Gaussian random fields is to use the Karhunen-Lo\`{e}ve expansion
\begin{equation}
\label{eq:kle}
\theta = \sum_{j \in \mathbb{Z}^+} \sqrt{\lambda_j} \xi_j \Phi_j, \qquad \xi_j \sim \cN(0,I),
\end{equation}

where $(\lambda_j,\Phi_j)_{j \in \bbZ^+}$ are the associated eigenbasis of the covariance function $C$, and $\{\xi_j\}_{j \in \bbZ^+}$ is Gaussian white noise. For a detail description of the derivation and its application to stochastic numerical problems, we refer the reader to \cite{LPS14}. An issue that can arise with using \eqref{eq:kle} is that using such priors do not scale well with high-dimensional problems. This acts as the initial motivation for trace-class neural network (TNN) priors, where the work of \citet{sell2020dimension} provided a justification for such priors. 

The hyperparameters $\sigma_A,\sigma_b,s$ significantly influence the inductive bias of the prior, and should be tuned for optimal performance. TNN priors are {\em not} based on the Gaussian processes, which have been popular in a variety of Bayesian-related tasks. Recent literature has understood GP Neural Networks (GPNNs) in the context of width and depth limits of neural networks \citep{cho2009kernel, agrawal2020wide,lee2018deep}.


\begin{figure}[h!]
\centering
\includegraphics[scale=0.25]{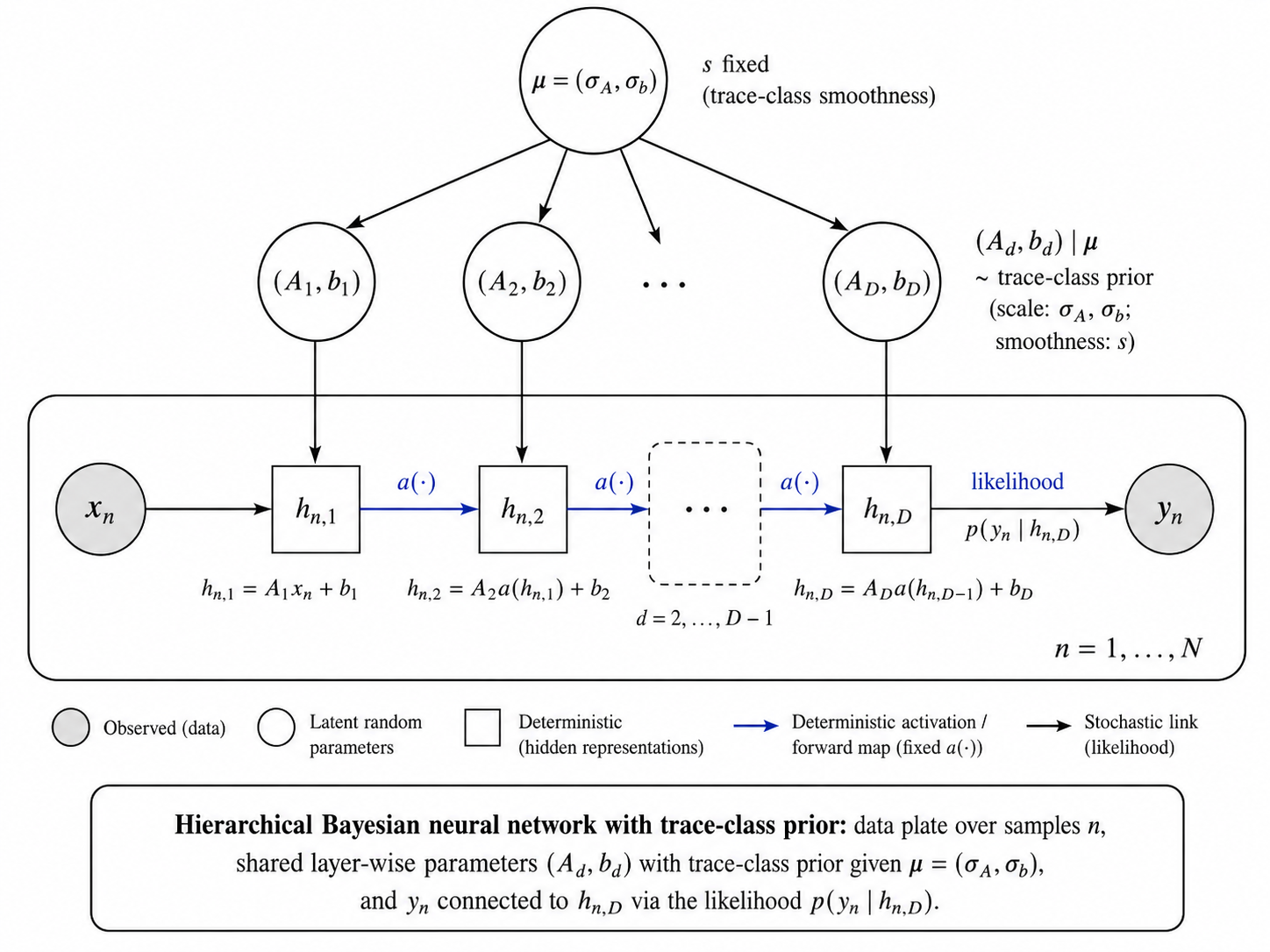}
\centering
\caption{Trace class nerual network architecture.}
\label{fig:tnn}
\end{figure}

\section{State-of-the-Art Estimation Methods for Bayesian Neural Networks}
\label{sec:param_method}

In the context of the Bayesian neural network with fixed hyperparameters $\mu$ and $s$, the parameter estimation methods introduced here can be utilized for computing the marginal distribution of $\theta$ conditional on the hyperparameter $\mu$.

\subsection{Sequential Monte Carlo estimator}

First, we look at the sequential Monte Carlo (SMC) estimator \citep{beskos2018multilevel,del2006sequential}. Consider the conditioned posterior distribution, $\pi(\theta|\mu)$, in \eqref{eqn:posterior_single} with the fixed hyperparameters, we define the target distribution as 
$\pi(\theta|\mu) = f(\theta|\mu)/Z$, where $Z = \int_{\Theta} f(\theta|\mu) d\theta$
and $f(\theta|\mu):= \cL(\theta|\mu) \bar{\pi}(\theta)$. Given a quantity of interest $\varphi: \Theta \rightarrow \bbR$, for simplicity, we define 
\begin{equation*}
    f(\varphi|\mu) := \int_{\Theta} \varphi(\theta|\mu) f(\theta|\mu) d\theta = f(1) \pi(\varphi|\mu),
\end{equation*}
where $f(1) = \int_{\Theta} f(\theta|\mu) d\theta = Z$. So that the conditional target distribution is $\pi(\varphi) = \frac{f(\varphi)}{f(1)}$.

Define $h_1,...,h_{J-1}$ by $h_j = f_{j+1}/f_j$, where $f_1 = \bar{\pi}, f_{J} = f=\cL\bar{\pi}$, and define $\pi_j = f_j/Z_j$ where $Z_j = \int_{\Theta} f_j d\theta$. For $j=2,...,J$, we let $f_j$ define in a annealing scheme:
\begin{equation*}
    f_j = f_1^{1-\lambda_j}f_J^{\lambda_j}=\cL(\theta)^{\lambda_j}\bar{\pi}(\theta),
\end{equation*}
where $0=\lambda_1<\lambda_2<\cdots<\lambda_J=1$ and $\lambda_j$ will be chosen adaptively according to the effective sample size (ESS), see Example~\ref{ex:adaptive_temp}.

\begin{example}[Adaptive tempering]
\label{ex:adaptive_temp}
    In order to keep a sufficient diversity of sample population, we let the effective sample size be at least $ESS_{\min} = N/2$ at each tempering $\lambda_{j-1}$ and use it to compute the next tempering $\lambda_j$. For $j$th tempering, we have weight samples $\{ w^{k}_{j-1}, \theta^{k}_{j-1} \}_{k=1}^{N}$, then the ESS is computed by
    \begin{equation*}
        ESS = \frac{1}{\sum_{k=1}^{N} (w^{k}_{j-1})^2},
    \end{equation*}
    where $w^{k}_{j-1} = \cL(\theta^{k}_{j-1})^{\lambda_{j} - \lambda_{j-1}}/\sum_{k=1}^N \cL(\theta_{j-1}^k)^{\lambda_j-\lambda_{j-1}}$. Let $\Delta \lambda = \lambda_j - \lambda_{j-1}$, the effective sample size can be presented as a function of $\Delta \lambda$, ESS$(\Delta \lambda)$. Using suitable root finding method, one can find $\Delta \lambda^{*}$ such that $ESS(\Delta \lambda^{*})=ESS_{\min}$, then set the next tempering $\lambda_j = \lambda_{j-1}+\Delta \lambda^{*}$.
\end{example}

\begin{algorithm}[H]
   \caption{SMC sampler}
   \label{alg:smc}
\begin{algorithmic}[1]
   \State Generate $\theta^i_1 \sim \bar{\pi}$ for $i=1,...,N$ and $Z^N_1=1$.
   \For{$j=2$ {\bfseries to} $J$}
    \State Store $Z^N_j = Z^N_{j-1} \frac{1}{N} \sum_{k=1}^N \cL(\theta_{j-1}^k)^{\lambda_j-\lambda_{j-1}}$.
    \For{$i=1$ {\bfseries to} $N$} 
    \State Define $w^i_j = \cL(\theta_{j-1}^i)^{\lambda_j-\lambda_{j-1}}/\sum_{k=1}^N \cL(\theta_{j-1}^k)^{\lambda_j-\lambda_{j-1}}$.
    \State Resample: Select $I_j^i \sim \{ w_j^1,...,w_j^N\}$ and let $\hat{\theta}_j^i = \theta_{j-1}^{I_j^i}$.
    \State Mutate: Draw $\theta_j^i \sim \cM_j(\hat{\theta}_j^i, \cdot )$.
    \EndFor
    \EndFor
\end{algorithmic}
\end{algorithm}

Now let $\cM_{j}$ for $j=2,\dots,J$ be Markov transition kernels such that $(\pi_{j} \cM_{j})(d\theta) = \pi_{j}(d\theta)$, analogous to any suitable MCMC kernel \citep{geyer1992practical,robert1999monte,cotter2013mcmc,law2014proposals}. This operation must sufficiently decorrelate the samples, and as such, we typically define the MCMC kernels ${\cM_{j}}$ by a suitable number of steps of some basic MCMC kernel, where it may or may not be 1.

One can observe that
\begin{equation*}
    \pi_{j}(h_j) = \frac{1}{Z_j} \int \frac{f_{j+1}(\theta)}{f_j(\theta)} f_j(\theta) d\theta = \frac{Z_{j+1}}{Z_j}.
\end{equation*}
Since we let $\pi_J=\pi$ be the target distribution and $Z_1 = 1$, the normalization constant of $\pi$ is given by
\begin{equation*}
    Z  = Z_1\prod_{j=1}^{J-1} \pi_{j}(h_j)
       = \prod_{j=1}^{J-1} \pi_{j}(h_j).
\end{equation*}

From Algorithm~\ref{alg:smc}, we can define the following estimators:
\begin{equation}
\begin{aligned}
\label{eqn:est_smc_unnorm}
    \pi_j^N(h_j) & :=  \frac{1}{N} \sum_{i=1}^{N} h_j(\theta_{j}^{i}), \ j = 1,...,J-1, \\
    Z^N & := \prod_{j=1}^{J-1} \pi_j^N(h_j), \\
    f^N(\varphi) & := \prod_{j=1}^{J-1} \pi_j^N(h_j) \pi^N(\varphi) = Z^N \pi^N(\varphi).
\end{aligned}
\end{equation}
Hence, the sequential Monte Carlo estimator for estimating the expectation of the quantity of interest is given by
\begin{equation}
\label{eqn:est_smc}
    \hat{\varphi}_{\text{SMC}} := \pi^N(\varphi) =\frac{f^N(\varphi)}{f^N(1)}.
\end{equation}

\subsection{Novel Multilevel sequential Monte Carlo estimators}

Integrating a multilevel scheme into the single SMC estimator further decreases the variance in the estimator, yielding particularly more gains in high-dimensional settings. In this section, we restate three multilevel Monte Carlo methods: MLSMC \citep{law2022multi}, randomized MLSMC \citep{cotter2022randomized} ratio estimators, and the original serial MLSMC \citep{chadaTNN}. 


\subsubsection{Multilevel Structure}

There are two fundamentally different ways used here to couple across levels: the “top‐down” construction and the “bottom‐up”. 
\begin{itemize}

    \item "top-down": one starts with a fine‐level sample at level $l$ and then constructs its coarse analogue at level $l-1$, typically by truncating parameters in the data, inputs, or network width (as in (randomized) MLSMC estimators). 

    \item "bottom-up": one first samples a coarse-level parameter $\theta_{l-1}\in\Theta_{l-1}$, and then extends or propagates it to a fine-level parameter $\theta_l\in\Theta_l$. This direction is used later for the serial MLSMC estimator. 
\end{itemize}
In what follows, unless otherwise stated, the approximate coupling $\bar{\Pi}_l$ refers to the top-down truncation coupling.

\paragraph{Paralleization.} This distinction also affects the available parallelization. In the "top-down" construction in (randomized) MLSMC, the estimators at different levels can be run independently, and within each level the SMC particles can be propagated and evaluated in parallel between resampling steps. 
By contrast, the "bottom-up" construction in serial MLSMC has a stronger dependence across levels, since the fine-level sample is obtained by propagating information from the previous coarse level. Hence this construction is generally less parallel across levels, although particle-wise operations and independent replications can still be parallelized.

\paragraph{Approximate Coupling.} At each level $l$, we intend to simulate a pair of samples $(\theta_l^i,\theta_{l-1}^i) $ follows $(\theta_l^i,\theta_{l-1}^i) \sim \Pi_l(\theta_l,\theta_{l-1}|\mu)$, where $\Pi_l(\theta_l,\theta_{l-1}|\mu)$ is the exact coupling distribution such that
\begin{equation*}
     \int_{\Theta_l} \Pi_{l}(\theta_l,\theta_{l-1}|\mu)d\theta_l = \pi_{l-1}(\theta_{l-1}|\mu), \quad
      \int_{\Theta_{l-1}} \Pi_{l}(\theta_l,\theta_{l-1}|\mu)d\theta_{l-1} = \pi_{l}(\theta_{l}|\mu).
\end{equation*}
However, it is not a trivial approach to compute the exact coupling; instead, we introduce the approximate coupling technique. Following the same procedure in rewriting the posterior distribution at the single level in SMC, we have

There are two fundamentally different ways used here to couple across levels. The first runs “top‐down”: one starts with a fine‐level sample at level $l$ and then constructs its coarse analogue at level $l-1$, typically by truncating parameters in the data, inputs, or network width (as in (randomized) MLSMC estimators). The second runs “bottom‐up”: one begins with a coarse‐level sample at level $l-1$ and serially propagates it to level $l$ (as in the serial MLSMC estimator).

Since the data $x$ is the same and the hyperparameters are fixed, we omit the $x$ and $\mu$ in the notations below.

\subsubsection{Multilevel sequential Monte Carlo estimators}

Given a starting level $L_{\text{min}} \geq 1$ and a maximum level $L_{\text{max}} \geq L_{\text{min}} $, the approximation of the expectation \eqref{eqn:exp_qoi} is given by
\begin{equation*}
\begin{aligned}
    \pi(\varphi) \approx \sum_{l=L_{\text{min}}}^{L_{\text{max}}} \pi_l(\Delta \varphi) & := \sum_{l=L_{\text{min}}}^{L_{\text{max}}} \pi_l(\varphi_l(\theta_l)-\varphi_{l-1}(\theta_{l-1})) \approx \sum_{l=L_{\text{min}}}^{L_{\text{max}}} \frac{1}{N_l} \sum_{i=1}^{N_l} \Big\{ \varphi_l(\theta_l^i)-\varphi_{l-1}(\theta_{l-1}^i) \Big\}, 
\end{aligned}
\end{equation*}
where $\theta^i_{l}$ are i.i.d. samples at each level $l$, $N_l$ is the number of samples at level $l$ and $\varphi_{L_{\text{min}}-1} := 0$.

Here, we use the sequential Monte Carlo sampler, stated in Algorithm \ref{alg:smc}, to sample from the approximate coupling distribution at each level, shown in Algorithm~\ref{alg:mlsmc}. We now define intermediate distributions based on the coupling distribution, $F_{l}$, ${\bf L}_{l}$ and $\bar{\Pi}_l$. 
Define $H_{l,1},...,H_{l,J-1}$ by $H_{l,j} = F_{l,j+1}/F_{l,j}$
where $F_{l,j} = {\bf L}_{l}^{\lambda_j} \bar{\Pi}_l$ with $0=\lambda_1<\lambda_2<\cdots<\lambda_J=1$\footnote{Use the same adaptive tempering technique as in the SMC estimator shown in Example \ref{ex:adaptive_temp}}. And let $\cM_{l,j}$ for $j=1,...,J$ be such that $(\Pi_{l,j} \cM_{l,j}) (d\theta) = \Pi_{l,j}(d\theta)$.

For $j=1,...,J$,
\begin{equation*}
    \Pi_{l,j}^{N_{l}} (d\theta) := \frac{1}{N_{l}} \sum_{i=1}^{N_{l}} \delta_{\theta_{j}^{i}} (d\theta),
\end{equation*}
and then, we define
\begin{equation*}
    Z_{l}^{N_{l}} := \prod_{j=1}^{J-1} \Pi_{l,j}^{N_{l}} (H_{l,j}),
    \ \ F_{l}^{N_{l}}(d\theta) := Z_{l}^{N_{l}} \Pi_{l,J}^{N_{l}} (d\theta).
\end{equation*}

\begin{algorithm}[h!]
\begin{algorithmic}[1]
\caption{Sequential Monte Carlo \citep{MBG08,giles2015multilevel} sampler for coupled estimation at level $l$}
\State Simulate $\theta_{l,1}^i \sim \bar{\pi}_{l}$ for $i=1,\ldots,N_l$ and set $\theta_{l-1,1}^i = T_l(\theta_{l,1}^i)$.
\State Set $Z_{l,1}^{N_{l}}=1$
\State Compute the likelihoods $\cL_{l,1}(\theta_{l,1}^{i}),\cL_{l-1,1}(\theta_{l-1,1}^{i})$
\State Store the approximate coupling
\begin{equation*}
    F_{l,1}(\theta_{l,1}^{i},\theta_{l-1,1}^{i}) = \bar{\Pi}_{l}(\theta_{l,1}^{i},\theta_{l-1,1}^{i})
\end{equation*}
\For{$j=2,...,J$}
\State Store $Z_{l,j}^{N_{l}} = Z_{l,j-1}^{N_{l}} \frac{1}{N_{l}} \sum_{p=1}^{N_{l}} H_{l,j-1}(\theta_{l,j-1}^{p})$
\For{$i=1,...,N_{l}$}
\State Define $w_{l,j}^i = \frac{H_{l,j-1}(\theta^i_{l,j-1})}{\sum_{p=1}^{N_l}H_{l,j-1}(\theta^p_{l,j-1})}$
\State Resample: Select $I_{l,j}^i \sim \{ w_{l,j}^1,\ldots,w_{l,j}^{N_l} \}$, and set
$\hat{\theta}_{l,j}^i=\theta_{l,j-1}^{I_{l,j}^i}$,$\hat{\theta}_{l-1,j}^i=\theta_{l-1,j-1}^{I_{l,j}^i}$.
\State Mutate: Draw $\theta_{l,j}^i \sim \cM_{l,j}(\hat{\theta}_{l,j}^i,\cdot)$ and set
$\theta_{l-1,j}^i=T_l(\theta_{l,j}^i)$.
\State Compute the likelihoods $\cL_{l,j}(\theta_{l,j}^{i})$ and $\cL_{l-1,j}(\theta_{l-1,j}^{i})$.
\State Store the approximate coupling
\begin{equation*}
    F_{l,j}(\theta_{l,j}^{i},\theta_{l-1,j}^{i}) = \bar{\Pi}_{l}(\theta_{l,j}^{i},\theta_{l-1,j}^{i}) {\bf L}_{l}(\theta_{l,j}^{i},\theta_{l-1,j}^{i})^{\lambda_j} 
\end{equation*}
\State where ${\bf L}_{l}(\theta_{l,j}^{i},\theta_{l-1,j}^{i})  = \max \{ \cL_{l,j}(\theta_{l,j}^{i}),\cL_{l-1,j}(\theta_{l-1,j}^{i}) \}$
\EndFor
\EndFor
\State Define $\theta_l^i=\theta_{l,J}^i$, $\theta_{l-1}^i=\theta_{l-1,J}^i$. Store
$f_l(\theta_l^i)=\bar{\pi}_l(\theta_l^i)\cL_{l,J}(\theta_l^i)$, $f_{l-1}(\theta_{l-1}^i)=\bar{\pi}_{l-1}(\theta_{l-1}^i)\cL_{l-1,J}(\theta_{l-1}^i)$, and $F_l(\theta_l^i,\theta_{l-1}^i)=F_{l,J}(\theta_{l,J}^i,\theta_{l-1,J}^i)$.

\label{alg:mlsmc}
\end{algorithmic}
\end{algorithm}

\paragraph{Self-normalized estimator}
The trivial strategy for constructing the MLSMC estimator is to first normalized itself at each level and then compute the telescope sum.
Then, the MLSMC self-normalized estimator is given as 
\begin{equation}
\begin{aligned}
    \hat{\varphi}^{\text{MLSN}}_{L} 
    & = \sum_{l=L_{\text{min}}}^{L_{\text{max}}} \Bigg( \frac{F^{N_l}_{l}(\varphi_{l})}{F^{N_l}_{l}(1)} - \frac{F^{N_{l}}_{l-1}(\varphi_{l-1})}{F^{N_{l}}_{l-1}(1)} \Bigg)\\
    & = \sum_{l=L_{\text{min}}}^{L_{\text{max}}} \Bigg( \frac{\sum_{i=1}^{N_l}\varphi_l(\theta_l^i) \frac{f_l(\theta_l^i)}{F_l(\theta_l^i,\theta_{l-1}^i)}}{\sum_{i=1}^{N_l}\frac{f_l(\theta_l^i)}{F_l(\theta_l^i,\theta_{l-1}^i)}}-\frac{\sum_{i=1}^{N_l}\varphi_{l-1}(\theta_{l-1}^i)\frac{f_{l-1}(\theta_{l-1}^i)}{F_{l}(\theta_l^i,\theta_{l-1}^i)}}{\sum_{i=1}^{N_l}\frac{f_{l-1}(\theta_{l-1}^i)}{F_{l}(\theta_l^i,\theta_{l-1}^i)}} \Bigg). \label{eqn:mlsmcsn}
\end{aligned}
\end{equation}



\paragraph{Ratio estimator}
Besides the self-normalized estimator, a ratio estimator is given below. Let $\zeta=\{1,\varphi\}$, Define the difference function $\Delta\zeta: \Theta \rightarrow \bbR^m$ as
\begin{equation*}
    \Delta \zeta = \zeta_l(\theta_l)\frac{f_l(\theta_l)}{F_l(\theta_l,\theta_{l-1})} - \zeta_{l-1}(\theta_{l-1})\frac{f_{l-1}(\theta_{l-1})}{F_l(\theta_l,\theta_{l-1})},
\end{equation*}
and define the MLSMC ratio estimator as
\begin{equation}
\begin{aligned}
    \hat{\varphi}^{\text{MLRE}}_{L} 
    & = \frac{\sum_{l=L_{\text{min}}}^{L_{\text{max}}}F^{N_l}_{l}(\Delta \varphi)}{\max \{ \sum_{l=L_{\text{min}}}^{L_{\text{max}}} F^{N_l}_{l}(\Delta 1), \ Z_{\min} \}} \\
    & = \frac{\sum_{l=L_{\text{min}}}^{L_{\text{max}}} Z_l^{N_l} \Big(\frac{1}{N_l}\sum_{i=1}^{N_l}\varphi_l(\theta_l^i)\frac{f_l(\theta_l^i)}{F_l(\theta_l^i,\theta_{l-1}^i)}-\varphi_{l-1}(\theta_{l-1}^i)\frac{f_{l-1}(\theta_{l-1}^i)}{F_{l}(\theta_l^i,\theta_{l-1}^i)}\Big)}{\max \{\sum_{l=L_{\text{min}}}^{L_{\text{max}}} Z_l^{N_l} \Big(\frac{1}{N_l}\sum_{i=1}^{N_l}\frac{f_l(\theta_l^i)}{F_l(\theta_l^i,\theta_{l-1}^i)}-\frac{f_{l-1}(\theta_{l-1}^i)}{F_{l}(\theta_l^i,\theta_{l-1}^i)}\Big) , Z_{\min} \}}. \label{eqn:mlsmcrat}
\end{aligned}
\end{equation}

It is worth noting that when computing the likelihoods in practice, it may occur computationally small number that the computer will recognize as 0. However, the computer can store a much smaller number if one takes the logarithm of it. So, here we store and run with the log-likelihood instead of the likelihood and take the exponential of it when needed. For the small number of IS weights, we work with the log-likelihood for unnormalized weights and modify it with a constant.


The practical settings for the MLSMC estimators are shown in Appendix \ref{sec:mlsmc_practical_set}.

\subsubsection{Randomized multilevel sequential Monte Carlo ratio estimator}

In randomized method, give a starting level $L_{\text{min}}$ as before, and instead of giving a maximum level $L_{\text{max}}$ in advance, we choose the sample of levels randomly from a distribution from numbers larger or equal than $L_{\text{min}}$. Let ${\bf p}$ be a probability measure on $\bbP=\{l \in \bbN: l \geq L_{\text{min}} \}$, where $p_{l}:={\bf p}(l)$. We simulate $N$ i.i.d samples $l_i \sim {\bf p}$ and define the number of samples at level $l$ by $N_{l}=\#\{i:l_i=l\}$ for any $l \in \bbP$. Then, the posterior is 
\begin{equation*}
\begin{aligned}
    \pi(\varphi) = \sum_{l\in \bbP} \pi_{l}(\Delta \varphi)  
    \approx \sum_{l\in \bbP} \frac{1}{p_l N} \sum_{i=1}^{N_l} \Big\{ \varphi_l(\theta_l^i)-\varphi_{l-1}(\theta_{l-1}^i) \Big\}, 
\end{aligned}
\end{equation*}
where $\theta^i_{l}$ are i.i.d. samples at each level $l$ and $\varphi_{L_{\text{min}}-1} := 0$.

Following the same approximate technique and SMC sampler on coupling distributions in Algorithm \ref{alg:mlsmc},
The randomized MLSMC ratio estimator is defined as
\begin{equation}
\begin{aligned}
    \hat{\varphi}^{\text{rML}} & = \frac{\sum_{l\in \bbP} \frac{N_l}{p_l N} F^{N_l}_{l}(\Delta \varphi)}{\max\{\sum_{l\in \bbP} \frac{N_l}{p_l N} F^{N_l}_{l}(\Delta 1), Z_{\min}\}} \\
    & = \frac{\sum_{l\in \bbP} Z_l^{N_l} \Big(\frac{1}{p_l N}\sum_{i=1}^{N_l}\varphi_l(\theta_l^i)\frac{f_l(\theta_l^i)}{F_l(\theta_l^i,\theta_{l-1}^i)}-\varphi_{l-1}(\theta_{l-1}^i)\frac{f_{l-1}(\theta_{l-1}^i)}{F_{l}(\theta_l^i,\theta_{l-1}^i)}\Big)}{\max \{\sum_{l\in \bbP} Z_l^{N_l} \Big(\frac{1}{p_l N}\sum_{i=1}^{N_l}\frac{f_l(\theta_l^i)}{F_l(\theta_l^i,\theta_{l-1}^i)}-\frac{f_{l-1}(\theta_{l-1}^i)}{F_{l}(\theta_l^i,\theta_{l-1}^i)}\Big) , Z_{\min} \}},
\label{eqn:rmlsmcrat}
\end{aligned}
\end{equation}
where $Z_{\min}$ is added here to prevent an unexpected computational small number.

The practical settings for the rMLSMC estimator are shown in Appendix \ref{sec:rmlsmc_practical_set}.

\subsubsection{Serial multilevel sequential Monte Carlo estimator}

\begin{algorithm}[H]
\begin{algorithmic}[1]
\caption{Serial Sequential Monte Carlo sampler}
\State Simulate $\theta_0^{i} \sim \bar{\pi}_{0}$ for $i=1,...,N_0$
\For{$l=L_{\text{min}},...,L_{\text{max}}$}
\State Give $\theta_{l-1}^{i}$ for $i=1,...,N_{l-1}$ from previous level $l-1$ 
\State Simulate only the new/missing dofs for the next level ($l$) from the prior sample, with $\theta_{l}^i|\theta_{l-1}^i \sim \bar{\pi}_{l}$ for $i=1,...,N_{l}$
\State Compute and store the likelihood $\cL_{l}(\theta_{l}^{i})$
\For{$i=1,...,N_{l}$}
\State Define $w_{l,j}^i = \frac{\cL_{l}(\theta_{l}^i)}{\cL_{l-1}(\theta^i_{l-1})}$
\State Resample: Select $I_{l}^i \sim \{ w_{l}^1,...,w_{l}^{N_{l}} \}$, and let $\hat{\theta}_{l}^i=\theta_{l-1}^{I_{l}^i}$
\State Mutate: Draw $\theta_{l}^i \sim \cM_{l}(\hat{\theta}_{l}^i, \cdot)$
\State Compute and store the likelihood $\cL_{l}(\theta_{l}^{i})$
\EndFor
\EndFor
\label{alg:smlsmc}
\end{algorithmic}
\end{algorithm}

The algorithm of the serial multilevel sequential Monte Carlo sampler is shown in Algorithm~\ref{alg:smlsmc}, which is taken from \cite{chadaTNN,beskos2017multilevel, beskos2018multilevel}. This is a serial method in which the sampling in the current level is based on the previous level. Here, give the level parameter $L=\{L_{\text{min}}, L_{\text{max}}\}$, we follow the same path as in MLSMC to generate the intermediate distributions and mutation kernel here.

The serial MLSMC estimator is defined as
\begin{equation}
\begin{aligned}
    \hat{\varphi}^{\text{sML}} & = \sum_{l=L_{\text{min}}}^{L_{\text{max}}} \Bigg( \frac{ \sum_{i=1}^{N_l}\varphi_{l}(\theta_{l}^{i}) \cL_{l}(\theta_{l}^{i})}{ \sum_{i=1}^{N_l}\cL_{l}(\theta_{l}^{i})} - \frac{1}{N_{l}}\sum_{i=1}^{N_l}\varphi_{l-1}(\theta_{l-1}^{i})  \Bigg),
\label{eqn:smlsmc}
\end{aligned}
\end{equation}
where $\varphi_{L_{\text{min}}-1}:=0$. The practical settings follow the same procedure as the MLSMC in Appendix \ref{sec:mlsmc_practical_set}. However, the serial MLSMC is not parallelizable in level-level since its procedure is serial.

\section{PMMH}

\begin{algorithm}[h!]
\begin{algorithmic}[1]
\caption{PMMH at a given state \citep{andrieu2010particle, andrieu2009pseudo}}
\State Given $\mu,\theta$ and $p^{N}(\mu)$ 
\State Propose $\mu^{*} \sim Q(\cdot,\mu)$ and sample $\theta^{*} \sim P(\cdot|\mu^{*})$
\State Simulate $u \sim U[(0,1)]$ and compute 
\begin{equation*}
    \alpha = \min \Bigg\{ \frac{\pi_0(\mu^{*})Q(\mu|\mu^{*})p^{N}(\mu^{*})}{\pi_0(\mu)Q(\mu^{*}|\mu)p^{N}(\mu)}, \ 1 \Bigg\}.
\end{equation*}
\State If $u \leq \alpha$, then $(\mu,\theta,p^{N}(\mu)) = (\mu^{*},\theta^{*},p^{N}(\mu^{*}))$; otherwise,  $(\mu,\theta,p^{N}(\mu)) = (\mu,\theta,p^{N}(\mu))$.
\label{alg:pmmh}
\end{algorithmic}
\end{algorithm}

We first describe the pseudo-marginal Metropolis-Hastings (PMMH) method for inferring the hyperparameter $\mu$.
The target posterior distribution of this case is, given data $\cD$,
\begin{equation*}
    \pi(\mu|\cD) \propto \pi(\cD|\mu) \pi_0(\mu),
\end{equation*}
and one often found the posterior $\pi(\mu|\cD)$ might be analytically intractable or too complex to evaluate. The pseudo-marginal method then gives us an approach to computing it on an 
joint state space $\Theta \times \sS$, in which case we could compute $\pi(\mu|\cD)$ by computing the marginal of the joint posterior distribution
\begin{equation*}
    \pi(\theta,\mu|\cD) \propto \pi(\cD,d\theta|\mu) \pi_0(\mu) \propto f(\theta|\mu) \pi_0(\mu).
\end{equation*}
to get the target distribution as follows
\begin{equation*}
    \pi(\mu|\cD) = \int_{\Theta} \pi(d\theta,\mu|\cD) \propto \int_{\Theta} \pi(\cD,d\theta|\mu) \pi_0(\mu).
\end{equation*}
Then, the likelihood $p(\mu):=\pi(\cD|\mu)$ can be computed by
\begin{equation*}
    p(\mu) = \int_{\Theta} \pi(\cD,d\theta|\mu) = \int_{\Theta} f(d\theta|\mu).
\end{equation*}
We often call this likelihood $p(\mu)$ the marginal distribution since it computes the marginal of the joint likelihood.



One can then run a SMC sampler with the selection functions $h_j$ in each tempering step $j=1,...J-1$ to give an unbiased estimator of the marginal distribution
\begin{equation*}
    p^{N}(\mu) = \prod_{j=1}^{J-1} \frac{1}{N} \sum_{i=1}^{N} h_j(\theta_j^i|\mu), 
\end{equation*}
i.e. $\bbE[p^{N}(\mu)] = p(\mu)$, by sampling from the following auxiliary joint, on the space $\{1,...,N\}^{J} \times \Theta^{NJ}$,
\begin{equation*}
\begin{aligned}
    P(I_{1:J}^{1:N},\theta_{1:J}^{1:N}|\mu) & = \prod_{i=1}^{N} \bar{\pi}(\theta_1^{I_{1}^{i}}|\mu) \prod_{j=1}^{J-1} \cM(\theta_{j+1}^{I_{j+1}^{i}},\theta_{j}^{I_{j}^{i}}|\mu) \frac{h_{j}(\theta_{j}^{I_{j}^{i}}|\mu)}{\sum_{k=1}^{N} h_{j}(\theta_{j}^{k}|\mu)}.
\end{aligned}
\end{equation*}
Moreover, $p^{N}(\mu) \pi_0(\mu)$ is an unbiased estimator of the joint distribution $\pi(\cD,\mu)$ which is proportional to our target distribution $\pi(\mu|\cD)$. A general approach of the PMMH is given in Algorithm \ref{alg:pmmh}.

In the previous discussions, we fix the hyperparameter's value $\mu$ and omit it in the computation. We now expand the hyperparameter $\mu$ term in these expressions by writing joint variable $\varphi(x,\theta,\mu)$ (or conditional variable $\varphi(x,\theta|\mu)$) and joint variable $\pi(\theta,\mu)$ (or conditional variable $\pi(\theta|\mu)$ ). Also, sometimes we omit the term $x$ for symbol simplicity since the data are the same.

\section{Details of the S-MLPMMH approximation}
\label{app:smlpmmh}

We provide further details of the empirical-Bayes, Laplace, and
MAP-centered SBMC construction used in S-MLPMMH.

\paragraph{Hyperparameter transformation and empirical-Bayes estimate.}
Consider the transformed hyperparameter variable
$\eta=\log\mu$, where $\mu>0$ denotes the standard deviation
hyperparameter. We place a Gaussian hyper-prior on the transformed
variable $\eta$,
\begin{equation*}
\eta \sim \cN(m_{\eta},\sigma_{\eta}^{2}),
\end{equation*}
which is equivalent to assigning a log-normal prior to $\mu$.
Given the mean and variance of $\mu$, the corresponding Gaussian
parameters for $\eta=\log\mu$ are
\begin{equation*}
\sigma_{\eta}^{2} = \log\left(1+\frac{\operatorname{Var}(\mu)}{\mathbb{E}[\mu]^{2}}\right), \qquad
m_{\eta} = \log\left(\mathbb{E}[\mu]\right) - \frac{1}{2}\sigma_{\eta}^{2}.
\end{equation*}

At each level $l$, we first compute an empirical-Bayes MAP estimate
of the hyperparameter. For each candidate value of $\eta$, with
$\mu=\exp(\eta)$, we compute a MAP estimate of the model parameters
$\theta$ conditional on $\mu$. Around this conditional MAP estimate,
we approximate the marginal likelihood using a diagonal Laplace
approximation. The level-dependent empirical-Bayes objective is then
\begin{equation*}
J_l(\eta) = \log \hat{\pi}_{l}\left(\cD\mid\mu=\exp(\eta)\right) + \log \pi_{0}(\mu),
\end{equation*}
where
$\hat{\pi}_{l}(\cD\mid\mu=\exp(\eta))$
is the diagonal Laplace approximation of the marginal likelihood at level $l$, and $\pi_{0}(\mu)$ is the hyper-prior.

The resulting hyperparameter MAP center is
\begin{equation}
\label{eq:smlpmmh_eta_map}
\eta_{l,\sf MAP} = \arg\max_{\eta} J_l(\eta), \qquad \mu_{l,\sf MAP} = \exp(\eta_{l,\sf MAP}).
\end{equation}

\paragraph{Outer Laplace approximation.}
After obtaining $\eta_{l,\sf MAP}$, we construct a local Laplace
approximation to the hyperparameter posterior centered at
$\eta_{l,\sf MAP}$. Specifically, we estimate the local curvature of
the empirical-Bayes objective by
\begin{equation*}
H_{l,\eta} = - \left. \frac{\partial^{2}J_l(\eta)} {\partial\eta^{2}}
\right|_{\eta=\eta_{l,\sf MAP}},
\end{equation*}
and use the corresponding Laplace variance $V_{l,\eta}=H_{l,\eta}^{-1}$.
The outer SBMC hyperparameter samples are then independently drawn from
\begin{equation*}
\eta_{k,l} \sim \mathcal{N} \left(\eta_{l,\sf MAP},V_{l,\eta}\right),
\qquad \mu_{k,l} = \exp(\eta_{k,l}),\qquad k=1,\ldots,M_l.
\end{equation*}
In practice, $H_{l,\eta}$ is estimated using a finite-difference
approximation of $J_l$ around $\eta_{l,\sf MAP}$.

\paragraph{Inner MAP-centered SBMC approximation.}
For each sampled hyperparameter $\mu_{k,l}$, we then construct an inner MAP-centered SBMC sampler. The original inner prior is
\begin{equation*}
\bar{\pi}_{l} = \mathcal{N} \left(0, \mu_{k,l}^{2}\sf I\right).
\end{equation*}
Conditional on $\mu_{k,l}$, we compute the MAP estimator
\begin{equation*}
\theta_{l,\sf MAP} = \arg\max_{\theta} {\bf L}_{l} \left(\theta_l,\theta*{l-1} \right) \bar{\pi}_{l}\left(\theta_l,\theta_{l-1}\right).
\end{equation*}
The MAP estimate is recomputed for each hyperparameter sample
$\mu_{k,l}$.

Following the SBMC framework, we then establish a new inner prior
centered at the MAP estimate in \eqref{eqn:inner_map} and shrink its
covariance by a tuning parameter $s_{\theta}$. The resulting prior is
\begin{equation*}
\hat{\bar{\pi}}_{l}=\cN\left(\theta_{l,\sf MAP},s_{\theta}\mu_{k,l}^{2}\sf I\right).
\end{equation*}
In our numerical experiments, we use $s_{\theta}=0.1$.
This MAP-centered construction reduces the cost of the inner sampling stage while retaining the multilevel correction structure across
successive levels.

\section{Theoretical results}
\label{app:theory_background}

In this section, we present our main results related to our proposed methodology, discussed in Section \ref{sec:hyper_method}. Specifically, we first restate the theoretical results for MLSMC methods, provide a variance bound on the trace-class neural network and a convergence theorem related to our joint posterior distribution of interest is followed by.

We write $\sV = \Theta \times \sS$ with associated $\sigma-$algebra as $\cV$. Let $(\sW, \cW)$ be a measurable space such that $\sV \in \sW$. Let $K: \sW \times \cW \rightarrow [0,1]$ be any Markov kernel of invariant measure $\pi$ such that, for any $B \subset \cV$,
\begin{equation*}
    \int_{B \times (\sW \backslash \sV)} \pi(dw) = \Pi(B).
\end{equation*}

\subsection{Assumptions}

\paragraph{MLMC bound assumptions.} The assumption for boundedness of the function $f_{i,j}(\theta|\mu)$ is given in Assumption \ref{ass:G1_hyper} and for rate decay in parameters required for the application of the MLMC is given in Assumption \ref{ass:rate_hyper}.
\begin{ass} \label{ass:G1_hyper}
Let $J\in\bbN$ be given. For each $j\in\{0,\dots,J\}$ there exists some $C>0$ such that for all $(l,\theta,\mu)\in\bbN \times \Theta \times \sS$
$$
C^{-1} < Z_{l}, \ f_{l,j}(\theta|\mu) \leq C \, .
$$
\end{ass}

\begin{ass}\label{ass:rate_hyper}
For any $\varphi \in \cB_{b}(\Theta_{l} \times \Theta_{l-1} \times S) \cap \text{Lip} (\Theta_{l} \times \Theta_{l-1} \times \sS)$ there exist $c_1, c_2, c_3, \beta, \alpha, \gamma >0$ such that the following holds, under the joint posterior $\Pi_{l}$,
\begin{itemize}
\item[(B)] $\bbE\Big[\big|\varphi_{l}(\theta_l,\mu) \frac{f_{l}(\theta_l|\mu)}{F_{l}(\theta_l,\theta_{l-1}|\mu)} - f(\varphi(\theta,\mu)) \big| \Big] =: B_{l} \leq c_1  2^{-\alpha l}$;
\item[(V)] $ \bbE \Big[ \big|\varphi_{l} (\theta_l,\mu) \frac{f_{l}(\theta_l|\mu)}{F_{l}(\theta_l,\theta_{l-1}|\mu)} - \varphi_{l-1}(\theta_{l-1},\mu) \frac{f_{l-1}(\theta_{l-1}|\mu)}{F_{l}(\theta_l,\theta_{l-1}|\mu)} \big|^2 \Big] =: V_{l} \leq c_2 2^{-\beta l}$; 
\item[(C)] ${\rm COST}(\Pi_{l}(\varphi_l), \Pi_{l-1}(\varphi_{l-1})) =: C_{l} \leq c_3  2^{\gamma l}$.
\end{itemize}
\end{ass}
Assumption \ref{ass:rate_hyper} is related to the standard assumptions required in MLMC, where one has assumptions on the ``weak" error where (B) denotes the bias, the ``strong" error where (V) denotes the variance and the costs hence (C), where the cost is related to the cost of functional evaluations, and not time taken to complete the experiment.
Fix the value of the hyperparameter $\mu$, Assumption \ref{ass:G1_hyper} and \ref{ass:rate_hyper} falls into the original assumptions needed in MLMC.

\paragraph{PMMH assumption.} The assumption for transition in the PMMH is given in Assumption \ref{ass:K_hyper}.

\begin{ass}\label{ass:K_hyper}
    There exists a $\xi \in (0,1)$ and $\nu \in \cP(\sW)$ such that, for each $w \in \sW$, $\varphi \in \cB_{b}(\sW) \cap \text{Lip}(\sW)$,
    \begin{equation*}
        \int_{\sW} \varphi(w^{\prime}) K(w,dw^{\prime}) \geq \xi \int_{W} \varphi(v) \nu(dv).
    \end{equation*}
\end{ass}

The verification of Assumption \ref{ass:rate_hyper} and \ref{ass:K_hyper} above are given in Appendix \ref{app:verify_ass}.

\subsection{MLSMC results}

We restated theoretical results for MLSMC with self-normalized estimator\footnote{It is worth noting that only the ratio estimator has a rigorous theoretical result in the MIMC case (an extension from the MLMC), where the self-normalized does not.}, MLSMC with ratio estimator, randomized MLSMC estimator and serial MLSMC estimator, which are shown in Theorem \ref{thm:main_mlsmc_sn}, \ref{thm:main_mlsmc}, \ref{thm:main_rmlsmc} and \ref{thm:main_smlsmc}, respectively.

\begin{theorem}\label{thm:main_mlsmc_sn}
Assume Assumptions \ref{ass:G1_hyper} and \ref{ass:rate_hyper} with fixed $\mu$ and $\beta>\gamma$. 
Then for any $\varepsilon>0$ and suitable $\varphi: \Theta \rightarrow \bbR$, 
it is possible to choose a level paramemter $L=\{L_{\text{min}}, L_{\text{max}}\}$, where $L_{\text{min}}$ be the starting level and $L_{\text{max}}$ be the maximum level, such that for some $C>0$ 
$$
\bbE [ ( \widehat \varphi^{\rm MLSN}_L - \pi(\varphi) )^2 ] \leq C \varepsilon^2 \, ,
$$
and ${\rm COST}(\widehat \varphi^{\rm MLSN}_L) \leq C \varepsilon^{-2}$, the canonical rate.
The estimator $\widehat \varphi^{\rm MLSN}_L$ is defined in equation \eqref{eqn:mlsmcsn}.
\end{theorem}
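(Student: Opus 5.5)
We prove Theorem \ref{thm:main_mlsmc_sn} with the standard MLMC bias--variance split, carried out level by level for the self-normalized increments in \eqref{eqn:mlsmcsn}. Write $\widehat\varphi^{\rm MLSN}_L-\pi(\varphi)=\sum_{l}\big(Y_l^{N_l}-\bbE[Y_l^{N_l}]\big)+\big(\sum_l \bbE[Y_l^{N_l}]-\pi(\varphi)\big)$, where $Y_l^{N_l}$ is the $l$-th summand. Since the levels are simulated independently, the cross terms of the centred parts vanish. Hence
\[
\bbE[(\widehat\varphi^{\rm MLSN}_L-\pi(\varphi))^2]\le 2\sum_l \mathrm{Var}(Y_l^{N_l})+2\Big(\sum_l \bbE[Y_l^{N_l}]-\pi(\varphi)\Big)^2 .
\]

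\textbf{Step 1 (per-level variance).} Under Assumption \ref{ass:G1_hyper}, the SMC normalizing-constant and particle estimators $F_l^{N_l}(\varphi_l)$ and $F_l^{N_l}(1)$ obey standard $L^2$ bounds. These are the Del Moral-type bounds used in \cite{beskos2017multilevel,law2022multi}. Each ratio is a smooth function of two such averages, and the denominator is bounded below by $C^{-1}$ because of the lower bound on $Z_l$ and on the weights. We therefore linearize each ratio around its limit. The difference of the two linearized terms is then an SMC average of the increment functional appearing in (V), up to centring constants. This yields $\mathrm{Var}(Y_l^{N_l})\le C(V_l/N_l+ N_l^{-2})\le C2^{-\beta l}/N_l$.

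\textbf{Step 2 (per-level bias).} The self-normalization bias of each ratio is $\cO(N_l^{-1})$, and the two such terms difference to $\cO(V_l^{1/2}/N_l)$. Combined with (B), which makes the telescoping sum of limits converge to $\pi(\varphi)$ with error at most $B_{L_{\max}}\le c_12^{-\alpha L_{\max}}$, the total bias is at most $C(2^{-\alpha L_{\max}}+\sum_l 2^{-\beta l/2}/N_l)$. The second sum is dominated by the variance term once the $N_l$ are chosen as below.

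\textbf{Step 3 (optimization).} Choose $L_{\max}\asymp \alpha^{-1}\log_2\varepsilon^{-1}$, so that the squared bias is $\cO(\varepsilon^2)$. Then choose $N_l=\lceil\varepsilon^{-2}2^{-(\beta+\gamma)l/2}K\rceil$ with $K=\sum_{l}2^{(\gamma-\beta)l/2}$. Since $\beta>\gamma$, $K$ is bounded uniformly in $L_{\max}$. The variance is then $\sum_l 2^{-\beta l}/N_l\le \varepsilon^2$, and the cost is $\sum_l N_l 2^{\gamma l}\le C\varepsilon^{-2}K^2+\sum_l 2^{\gamma l}$. The last term is $\cO(\varepsilon^{-\gamma/\alpha})$, which is $\le\varepsilon^{-2}$ provided $\gamma\le 2\alpha$. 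This holds in the regime $\alpha=\beta/2>\gamma/2$ indicated by the paper; otherwise we would need to add a mild assumption.

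\textbf{Main obstacle.} The hardest part is Step 1: obtaining a variance bound of order $V_l/N_l$ for the difference of two separately self-normalized ratios, rather than the crude $\cO(1/N_l)$. The approach we would try first is the decomposition
\[
\frac{a}{b}-\frac{c}{d}=\frac{a-c}{b}-\frac{c(b-d)}{bd}.
\]
Both numerator differences $a-c$ and $b-d$ are SMC averages of increment functionals, namely $\varphi_l f_l/F_l-\varphi_{l-1}f_{l-1}/F_l$ and $f_l/F_l-f_{l-1}/F_l$. We would then apply the MLSMC $L^2$ inequality to these increments, with denominators bounded away from zero. The same argument controls the higher-order remainder terms.
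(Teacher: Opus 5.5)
Your overall architecture matches the standard MLSMC argument: independence across levels, a per-level bound of order $V_l/N_l$, $L_{\max}\asymp\alpha^{-1}\log_2\varepsilon^{-1}$, and $N_l\propto\varepsilon^{-2}2^{-(\beta+\gamma)l/2}$. The paper itself gives no proof of this theorem and only cites that literature. Your caveat that the ceiling cost $\sum_l 2^{\gamma l}$ needs $\gamma\le 2\alpha$ is also correct.

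\textbf{The gap is in Step 1.} You get $\mathrm{Var}(Y_l^{N_l})\le C(V_l/N_l+N_l^{-2})$ and then absorb the remainder via $N_l^{-2}\le 2^{-\beta l}/N_l$. That requires $N_l\ge 2^{\beta l}$, which your own allocation violates at the top levels. For $\alpha=\beta/2$ you have $2^{\beta L_{\max}}\asymp\varepsilon^{-2}$ but $N_{L_{\max}}\asymp\varepsilon^{-1+\gamma/\beta}$. If the remainder really were a bare $N_l^{-2}$, it would contribute $\sum_l N_l^{-2}\ge c\,\varepsilon^{2-2\gamma/\beta}$, which is much larger than $\varepsilon^2$, and the theorem would fail.

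Linearising each ratio separately produces exactly such level-independent quadratic remainders. They become small only after you difference them across the fine and coarse ratios, and that must be done explicitly. Your fallback identity $\frac{a}{b}-\frac{c}{d}=\frac{a-c}{b}-\frac{c(b-d)}{bd}$ does not do it when applied to the raw estimators. There $a-c$ has mean close to $\Pi_l(\varphi_l w_l)-\Pi_l(\varphi_{l-1}w_{l-1})\neq 0$, so its second moment is $\cO(V_l)$ with no factor $N_l^{-1}$.

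\textbf{The fix} is to centre at the limits and use an exact second-order identity, namely Lemma \ref{lem:separate_terms}. The paper uses this lemma for the analogous bound in Theorem \ref{thm:var_term}. Set $w_l=f_l/F_l$ and $w_{l-1}=f_{l-1}/F_l$. In the lemma's notation, let $a,A,b,B$ be the SMC averages of $\varphi_l w_l$, $w_l$, $\varphi_{l-1}w_{l-1}$, $w_{l-1}$, and let $c,C,d,D$ be their $\Pi_l$-limits. Every one of the six terms then has one of two forms:
\begin{itemize}
\item an SMC fluctuation of an increment average, such as $a-b-(c-d)$ or $A-B-(C-D)$, times a factor built from $1/A$, $1/B$ and $b/B$, all bounded under Assumption \ref{ass:G1_hyper};
\item a single $\cO(N_l^{-1/2})$ fluctuation such as $A-C$ or $b-d$, times the deterministic increment $c-d$ or $C-D$, which is $\cO(V_l^{1/2})$ by Jensen and (V).
\end{itemize}
Define $X_l:=Y_l^{N_l}-(\pi_l(\varphi_l)-\pi_{l-1}(\varphi_{l-1}))$. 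Using only the second-moment SMC bound you already invoke, this gives $\bbE[X_l^2]\le CV_l/N_l$ with no bare remainder.

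This also makes your Step 2 unnecessary; its $\cO(V_l^{1/2}/N_l)$ self-normalisation bias is only asserted. By independence, $\bbE[(\sum_l X_l)^2]\le\sum_l\bbE[X_l^2]+(\sum_l\bbE[X_l^2]^{1/2})^2$. Under your allocation, $\bbE[X_l^2]^{1/2}\le C\varepsilon\,2^{-(\beta-\gamma)l/4}$, which is summable because $\beta>\gamma$. This is the same Cauchy--Schwarz treatment of cross terms used in the paper's proof of Theorem \ref{thm:mlpmmh}.
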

\begin{proof}
    The proof is given in \cite{moral2017multilevel}.
\end{proof}

\begin{theorem}\label{thm:main_mlsmc}
Assume Assumptions \ref{ass:G1_hyper} and \ref{ass:rate_hyper} with fixed $\mu$ and $\beta>\gamma$. 
Then for any $\varepsilon>0$ and suitable $\varphi: \Theta \rightarrow \bbR$, 
it is possible to choose a level paramemter $L=\{L_{\text{min}}, L_{\text{max}}\}$, where $L_{\text{min}}$ be the starting level and $L_{\text{max}}$ be the maximum level, such that for some $C>0$ 
$$
\bbE [ ( \widehat \varphi^{\rm MLRE}_L - \pi(\varphi) )^2 ] \leq C \varepsilon^2 \, ,
$$
and ${\rm COST}(\widehat \varphi^{\rm MLRE}_L) \leq C \varepsilon^{-2}$, the canonical rate.
The estimator $\widehat \varphi^{\rm MLRE}_L$ is defined in equation \eqref{eqn:mlsmcrat}.
\end{theorem}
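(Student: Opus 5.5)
The statement to prove is Theorem~\ref{thm:main_mlsmc}, for the MLSMC ratio estimator \eqref{eqn:mlsmcrat} with $\mu$ fixed. I would follow the standard ratio-estimator argument for MLSMC. Write the numerator and denominator as
\[
\widehat{A}:=\sum_{l=L_{\text{min}}}^{L_{\text{max}}} F_l^{N_l}(\Delta\varphi), \qquad \widehat{B}:=\sum_{l=L_{\text{min}}}^{L_{\text{max}}} F_l^{N_l}(\Delta 1).
\]
Telescoping identifies their limits as $A_L:=f_{L_{\text{max}}}(\varphi_{L_{\text{max}}})$ and $B_L:=f_{L_{\text{max}}}(1)=Z_{L_{\text{max}}}$. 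These use the importance identities
\[
\Pi_l\!\left(\zeta_l \frac{f_l}{F_l}\right)F_l(1)=f_l(\zeta_l), \qquad \Pi_l\!\left(\zeta_{l-1}\frac{f_{l-1}}{F_l}\right)F_l(1)=f_{l-1}(\zeta_{l-1}),
\]
where the second holds because $\bar\Pi_l$ has coarse marginal $\bar\pi_{l-1}$ under $T_l$. Assumption~\ref{ass:G1_hyper} gives $Z_{L_{\text{max}}}>C^{-1}$, so choose $Z_{\min}<C^{-1}/2$. Then
\[
\frac{\widehat{A}}{\max\{\widehat{B},Z_{\min}\}}-\frac{A_L}{B_L}
=\frac{\widehat{A}-A_L}{\max\{\widehat{B},Z_{\min}\}}
+A_L\,\frac{B_L-\max\{\widehat{B},Z_{\min}\}}{B_L\max\{\widehat{B},Z_{\min}\}}.
\]
Since $Z_{\min}\le B_L$, we have $|B_L-\max\{\widehat B,Z_{\min}\}|\le|B_L-\widehat B|$. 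Using $|A_L|\le C\|\varphi\|_\infty$, this yields
\[
\bbE\big[(\widehat\varphi^{\rm MLRE}_L-\pi(\varphi))^2\big]\le C\Big(\bbE[(\widehat A-A_L)^2]+\bbE[(\widehat B-B_L)^2]\Big)+2\big(\pi_{L_{\text{max}}}(\varphi)-\pi(\varphi)\big)^2 .
\]

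The bias term is controlled by (B) of Assumption~\ref{ass:rate_hyper}: $|\pi_{L}(\varphi)-\pi(\varphi)|\le C 2^{-\alpha L_{\text{max}}}$, after again using the lower bound on $Z_l$ to pass from unnormalised to normalised quantities. For the variance terms, the levels are run independently. Hence
\[
\bbE[(\widehat A-A_L)^2]\le \sum_l \bbE\big[(F^{N_l}_l(\Delta\varphi)-F_l(\Delta\varphi))^2\big]+\Big(\sum_l \big|\bbE[F_l^{N_l}(\Delta\varphi)]-F_l(\Delta\varphi)\big|\Big)^2 .
\]
The key SMC input is the level-wise bound for unnormalised SMC estimates of the increment $\Delta\zeta$:
\[
\bbE\big[(F^{N_l}_l(\Delta\zeta)-F_l(\Delta\zeta))^2\big]\le \frac{C}{N_l}\,\|\Delta\zeta\|_{L^2}^2 \le \frac{C V_l}{N_l},
\]
with $V_l\le c_2 2^{-\beta l}$ from (V). In addition, the unnormalised estimator is unbiased, so the second (cross-level bias) sum vanishes.

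Obtaining this bound is the main obstacle. The usual $C\|\Delta\zeta\|_\infty^2/N$ bound is too weak, because it does not capture the decay of the increment. I would follow \cite{beskos2017multilevel}: write $F^N_l(\Delta\zeta)-F_l(\Delta\zeta)$ as a Del Moral martingale-difference telescoping sum over the tempering steps $j=1,\dots,J$. Each term is bounded by $C N^{-1}$ times the $L^2$ norm, or the oscillation, of $Q_{j,J}(\Delta\zeta)$ under the intermediate measure. Uniform-in-$j$ boundedness of $H_{l,j}$ (Assumption~\ref{ass:G1_hyper}) and a fixed, finite $J$ then give the $L^2$-type constant, uniform in $l$. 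Where the paper's assumptions are insufficient (e.g.\ mixing of $\cM_{l,j}$), I would invoke the same uniform-ergodicity condition as in \cite{beskos2017multilevel}. At worst one gets $\|\cdot\|_{L^2}^2+N^{-1}\|\cdot\|_\infty^2$ terms, whose second part is lower order.

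Finally, I would balance cost against error in the standard way. Set $L_{\text{max}}=\lceil \log_2(\varepsilon^{-1})/\alpha\rceil$, so the bias is $\mathcal{O}(\varepsilon^2)$. Then take
\[
N_l=\Big\lceil \varepsilon^{-2}\,2^{-(\beta+\gamma)l/2}\,K_L\Big\rceil, \qquad K_L:=\sum_{l\le L_{\text{max}}}2^{(\gamma-\beta)l/2}.
\]
This gives variance $\sum_l V_l/N_l\le C\varepsilon^2$ and cost $\sum_l N_lC_l\le C\varepsilon^{-2}K_L^2$. Since $\beta>\gamma$, $K_L$ is bounded uniformly in $L_{\text{max}}$, so the cost is $\mathcal{O}(\varepsilon^{-2})$. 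The ceilings contribute an extra $\sum_l 2^{\gamma l}\lesssim \varepsilon^{-\gamma/\alpha}$ to the cost. This is dominated by $\varepsilon^{-2}$ when $\gamma\le 2\alpha$, which holds in the regime of interest ($\alpha=\beta/2$, $\beta>\gamma$), or can be absorbed by adjusting $L_{\text{min}}$.
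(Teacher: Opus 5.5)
Your proposal is correct and follows essentially the same approach the paper relies on: the paper gives no proof of its own and simply cites \cite{law2022multi}, and you reconstruct the standard argument from that line of work (unbiased, level-independent unnormalised SMC increments for numerator and denominator, a per-level bound $CV_l/N_l$, the $Z_{\min}$ ratio decomposition, and the usual MLMC allocation with $\beta>\gamma$). One caution on the key SMC step: the pure $L^2$ bound follows from the martingale decomposition using only the two-sided bounds on $H_{l,j}$ and a fixed $J$, via $(Q_{j,J}g)^2\le Q_{j,J}(1)\,Q_{j,J}(g^2)$ and $\bbE[\Phi_j(\eta^N_{j-1})(h)]\le C\,\eta_j(h)$ for $h\ge 0$ (here $\eta_j=\Pi_{l,j}$ and $\Phi_j$ is the SMC update, and this inequality comes from unbiasedness plus the lower bound on the potentials), so no mixing of $\cM_{l,j}$ is needed and you should not use oscillation norms --- your fallback claim that an extra $N_l^{-2}\|\Delta\zeta\|_\infty^2$ term is lower order is false, because with a non-decaying sup norm and $\alpha=\beta/2$ it contributes about $N_{L_{\max}}^{-2}\approx\varepsilon^{2-2\gamma/\beta}\gg\varepsilon^{2}$ under your allocation.
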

\begin{proof}
    The proof is given in \cite{law2022multi}.
\end{proof}

\begin{theorem}\label{thm:main_rmlsmc}
Assume \ref{ass:G1_hyper} and \ref{ass:rate_hyper} with fixed $\mu$ and
$\beta>\gamma$. 
Then, for 
suitable $\varphi: \Theta \rightarrow \bbR$, give a starting level $L_{\text{min}}$,  
it is possible to choose a probability distribution ${\bf p}$ on 
$\bbP=\{l \in \bbN: l \geq L_{\text{min}} \}$ such that for some $C>0$, 
$$
\bbE [ ( \widehat \varphi^{\rm rML} - \pi(\varphi) )^2 ] \leq C\varepsilon^2 \, ,
$$
and expected ${\rm COST}(\widehat \varphi^{\rm rMI}) \leq C \varepsilon^{-2}$, i.e. the canonical rate.
The estimator $\widehat \varphi^{\rm rMI}$ is defined in equation \eqref{eqn:rmlsmcrat}.
\end{theorem}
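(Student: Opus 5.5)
The proof follows the single-term analysis of randomized (single-term) MLMC ratio estimators in \cite{cotter2022randomized}. The plan is to write $\widehat\varphi^{\rm rML}=\widehat A/\max\{\widehat B,Z_{\min}\}$. Here $\widehat A=\sum_{l\in\bbP}\frac{1}{p_l N}\sum_{i:l_i=l} Z_l^{N_l}\,\Delta\varphi_l^i$ and $\widehat B$ is the same expression with $\varphi$ replaced by $1$. Since each level draw is independent of the others given the level, $\widehat A$ and $\widehat B$ are sums of $N$ i.i.d.\ single-term randomized estimators $Y^{(i)}=p_{l_i}^{-1}Z_{l_i}^{N_{l_i}}\Delta\zeta_{l_i}^i$. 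The key identity is $\bbE[Y]=\sum_{l\ge L_{\min}} \bbE[F^{N}_l(\Delta\zeta)]$. I would first use the unbiasedness of the SMC normalizing-constant estimator (as recalled in the PMMH section, $\bbE[Z^N]=Z$) together with the standard SMC bias bound $|\bbE[F_l^{N}(\Delta\zeta)]-F_l(\Delta\zeta)|\le C\,N^{-1}\,(\ldots)$. Under Assumption~\ref{ass:G1_hyper}, and with the coupling construction, $F_l(\Delta\zeta)=f_l(\zeta_l)-f_{l-1}(\zeta_{l-1})$. The series then telescopes to $f(\zeta)$, which gives $\pi(\varphi)=f(\varphi)/f(1)$. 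Assumption~\ref{ass:rate_hyper}(B) ensures the telescoping converges, since $|f_l(\zeta_l)-f(\zeta)|\le c_12^{-\alpha l}$.

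Next I bound the second moments. Using the $L^2$ bound for SMC estimators with bounded potentials (Assumption~\ref{ass:G1_hyper}), I get $\bbE[(Z_l^{N_l}\Delta\zeta_l)^2]\le C\,V_l\le Cc_22^{-\beta l}$ from Assumption~\ref{ass:rate_hyper}(V). Here the SMC particle number at each level can be fixed, with constants uniform in $l$. This gives $\bbE[Y^2]\le C\sum_l p_l^{-1}2^{-\beta l}$ and expected cost per sample $\sum_l p_l C_l\le c_3\sum_l p_l 2^{\gamma l}$. The choice is $p_l\propto 2^{-\frac{\beta+\gamma}{2}l}$ for $l\ge L_{\min}$. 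Then both $\sum_l p_l^{-1}2^{-\beta l}\propto\sum_l 2^{-\frac{\beta-\gamma}{2}l}$ and $\sum_l p_l 2^{\gamma l}\propto\sum_l 2^{-\frac{\beta-\gamma}{2}l}$ are finite precisely because $\beta>\gamma$. Note that this choice also makes the expected cost finite even though $\bbP$ is unbounded.

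Finally I handle the ratio. Since $Z_{\min}<f(1)$ by Assumption~\ref{ass:G1_hyper} (take $Z_{\min}\le C^{-1}/2$), the standard decomposition
\[
\frac{\widehat A}{\max\{\widehat B,Z_{\min}\}}-\frac{f(\varphi)}{f(1)}=\frac{(\widehat A-f(\varphi))f(1)-f(\varphi)(\max\{\widehat B,Z_{\min}\}-f(1))}{f(1)\max\{\widehat B,Z_{\min}\}}
\]
applies. Using $|\max\{\widehat B,Z_{\min}\}-f(1)|\le|\widehat B-f(1)|$ and the lower bound $Z_{\min}$ on the denominator, this yields $\bbE[(\widehat\varphi^{\rm rML}-\pi(\varphi))^2]\le C(\bbE[(\widehat A-f(\varphi))^2]+\bbE[(\widehat B-f(1))^2])\le C/N$ plus the squared SMC bias. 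Taking $N\asymp\varepsilon^{-2}$ gives MSE $\le C\varepsilon^2$ and expected cost $N\sum_lp_lC_l\le C\varepsilon^{-2}$.

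The main obstacle is the unbiasedness step. The within-level SMC estimator $F_l^{N}(\Delta\zeta)$ is not exactly unbiased for $\varphi\neq 1$, so the randomized sum is biased at order $1/N_{\rm SMC}$ uniformly in $l$. I would first try to show that this bias is $O(2^{-\beta l/2}/N_{\rm SMC})$ at each level, using the SMC bias expansion with the increment function, so that it is summable against $p_l^{-1}p_l=1$. Failing that, I would let the per-level SMC particle count grow mildly, which stays within budget because $\beta>\gamma$ leaves slack in the cost series.
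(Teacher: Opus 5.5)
Your skeleton matches the argument the paper relies on. The paper gives no proof of its own; it cites \cite{cotter2022randomized}, and the variance step there is Proposition 3 of \cite{law2022multi}. The shared ingredients are: unbiased randomized estimators $\widehat A,\widehat B$ of $f(\varphi),f(1)$; a bound $\frac{C}{N}\sum_l p_l^{-1}2^{-\beta l}$ on their mean-square errors; expected cost $N\sum_l p_lC_l$; the choice $p_l\propto 2^{-(\beta+\gamma)l/2}$; and the $Z_{\min}$ ratio decomposition.

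However, there is a genuine gap at the step you call the main obstacle, and it rests on a false premise. The unnormalized particle estimator $F_l^{N}(\psi)=Z_l^{N}\,\Pi^{N}_{l,J}(\psi)$ is \emph{exactly} unbiased for $F_l(\psi)$. This holds for every bounded $\psi$, not only $\psi\equiv 1$, and for every particle number $N\ge 1$. It is Del Moral's lack-of-bias property of unnormalized Feynman--Kac particle measures, valid for a deterministic tempering schedule as implicit in Assumption~\ref{ass:G1_hyper}. The $\mathcal O(N^{-1})$ bias you have in mind belongs to the normalized estimator $\Pi^N_{l,J}(\psi)$.

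Hence $\bbE[F_l^{N_l}(\Delta\zeta)\mid N_l]=F_l(\Delta\zeta)=f_l(\zeta_l)-f_{l-1}(\zeta_{l-1})$. Together with $\bbE[N_l]=p_lN$, this gives $\bbE[\widehat A]=f(\varphi)$ and $\bbE[\widehat B]=f(1)$ exactly, so there is no ``squared SMC bias'' term at all.

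This fact is indispensable. In \eqref{eqn:rmlsmcrat} the particle number at level $l$ is the random count $N_l=\#\{i:l_i=l\}$, which is $0$ or $1$ at most high levels, and $\bbP$ is unbounded. An asymptotic-in-$N$ bias bound says nothing when $N_l=1$, and if it is uniform in $l$ it is not summable. Your first fallback holds only because the bias is identically zero; the normalized-estimator bias expansion would not deliver it. Your second fallback, growing per-level particle counts, changes the estimator.

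Second, $\widehat A$ is not a sum of $N$ i.i.d.\ single-term estimators. The $N_l$ summands at level $l$ come from one interacting SMC run, share the factor $Z_l^{N_l}$, and are coupled through resampling. Also, an individual particle does not have law $\Pi_l$, so Assumption~\ref{ass:rate_hyper}(V) does not directly bound $\bbE[(Z_l^{N_l}\Delta\zeta^i_l)^2]$. Running an independent fixed-size SMC for each draw $l_i$ would make your i.i.d.\ picture correct, but that is a different estimator from \eqref{eqn:rmlsmcrat}.

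For the actual estimator and $\zeta\in\{\varphi,1\}$, decompose
\[
\sum_{l\in\bbP}\frac{N_l}{p_lN}F_l^{N_l}(\Delta\zeta)-f(\zeta)=\sum_{l\in\bbP}\frac{N_l}{p_lN}\big(F_l^{N_l}(\Delta\zeta)-F_l(\Delta\zeta)\big)+\frac1N\sum_{i=1}^N\Big(\frac{F_{l_i}(\Delta\zeta)}{p_{l_i}}-f(\zeta)\Big).
\]
Then condition on $(l_i)_{i\le N}$. The runs at distinct levels are conditionally independent and, by unbiasedness, conditionally centred. So all cross terms vanish, both between levels and between the two sums.

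For the first sum, use the per-level SMC $L^2$ bound $\bbE[(F_l^{N_l}(\Delta\zeta)-F_l(\Delta\zeta))^2\mid N_l]\le C2^{-\beta l}/N_l$, uniform in $l$ under Assumption~\ref{ass:G1_hyper}, together with $\bbE[N_l]=p_lN$. This bounds the first sum by $\frac CN\sum_l p_l^{-1}2^{-\beta l}$. For the multinomial second sum, Jensen gives $F_l(\Delta\zeta)^2\le C2^{-\beta l}$, which yields the same bound.

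With these two repairs, the rest of your argument goes through with $N\asymp\varepsilon^{-2}$: the choice of ${\bf p}$, the cost bound, and the ratio step using $|\max\{\widehat B,Z_{\min}\}-f(1)|\le|\widehat B-f(1)|$.
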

\begin{proof}
    The proof is given in \cite{cotter2022randomized}.
\end{proof}

\begin{theorem}\label{thm:main_smlsmc}
Assume Assumptions \ref{ass:G1_hyper} and \ref{ass:rate_hyper} with fixed $\mu$ and $\beta>\gamma$. 
Then for any $\varepsilon>0$ and suitable $\varphi: \Theta \rightarrow \bbR$, 
it is possible to choose a level paramemter $L=\{L_{\text{min}}, L_{\text{max}}\}$, where $L_{\text{min}}$ be the starting level and $L_{\text{max}}$ be the maximum level, such that for some $C>0$ 
$$
\bbE [ ( \widehat \varphi^{\rm sML}_L - \pi(\varphi) )^2 ] \leq C \varepsilon^2 \, ,
$$
and ${\rm COST}(\widehat \varphi^{\rm sML}_L) \leq C \varepsilon^{-2}$, the canonical rate.
The estimator $\widehat \varphi^{\rm sML}_L$ is defined in equation \eqref{eqn:smlsmc}.
\end{theorem}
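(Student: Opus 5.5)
The argument follows the standard MLMC template used for Theorems \ref{thm:main_mlsmc_sn}--\ref{thm:main_mlsmc} (see \cite{beskos2017multilevel,moral2017multilevel}). The only new ingredient is that the estimator of \eqref{eqn:smlsmc} is a telescoping sum of level-$l$ increments $\hat Y_l$, each built from $N_l$ \emph{serially generated} particles. Here level $l$ is propagated from level $l-1$ through reweighting by $\cL_l/\cL_{l-1}$, resampling and mutation with $\cM_l$, so the increments are not independent across levels.

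\textbf{Step 1 (bias--variance split).} I would write
\[
\hat\varphi^{\rm sML}_L-\pi(\varphi)=\sum_{l=L_{\min}}^{L_{\max}}\bigl(\hat Y_l-Y_l\bigr)+\bigl(\pi_{L_{\max}}(\varphi_{L_{\max}})-\pi(\varphi)\bigr),
\]
with $Y_l=\pi_l(\varphi_l)-\pi_{l-1}(\varphi_{l-1})$. The last term is bounded by $c_12^{-\alpha L_{\max}}$ using Assumption \ref{ass:rate_hyper}(B) at fixed $\mu$.

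\textbf{Step 2 (single-level $L^2$ bounds).} For each $l$, view the level-$l$ increment as an SMC step from $\pi_{l-1}$ to $\pi_l$ with incremental weight $G_{l-1}=\cL_l/\cL_{l-1}$. Under Assumption \ref{ass:G1_hyper}, $G_{l-1}$ is bounded above and below uniformly in $l$. The standard SMC $L^2$ bounds and bias bounds for self-normalized estimators (as in \cite{beskos2017multilevel}) then give
\[
\bbE[(\hat Y_l-Y_l)^2]\le \frac{C}{N_l}\|G_{l-1}/\eta(G_{l-1})-1\|_\infty^2+\frac{C}{N_{l-1}}(\cdots),\qquad |\bbE[\hat Y_l-Y_l]|\le \frac{C}{N_l}(\cdots).
\]
I would then relate $\|G_{l-1}/\eta(G_{l-1})-1\|^2$ to the strong rate $V_l\lesssim 2^{-\beta l}$ of Assumption \ref{ass:rate_hyper}(V). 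This holds because the likelihood ratio between successive truncations is controlled by the same Lipschitz differences, and the trace-class decay gives $\bbE|\cL_l-\cL_{l-1}|^2\lesssim 2^{-\beta l}$.

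\textbf{Step 3 (cross terms).} This is the main obstacle, because the serial construction correlates $\hat Y_l$ and $\hat Y_{l'}$ for $l\neq l'$. I would follow the argument of Beskos et al.\ for MLSMC. Expand $\hat Y_l-Y_l$ as a martingale difference plus a remainder of order $N_l^{-1}$, via the Del Moral decomposition along the Feynman--Kac flow $\eta_{l}=\Phi_l(\eta_{l-1})$. Assumption \ref{ass:G1_hyper}, together with a uniform mixing condition on $\cM_l$ (which I would take from the Doeblin-type condition analogous to Assumption \ref{ass:K_hyper}), lets the martingale parts be bounded by stability of the flow. The cross terms are then bounded by
\[
\sum_{l<l'}\frac{C\,2^{-\beta l/2}2^{-\beta l'/2}}{N_l^{1/2}N_{l'}^{1/2}}\Bigl(\frac{1}{N_l}\Bigr)^{1/2}.
\]
This is of lower order than the diagonal sum $\sum_l 2^{-\beta l}/N_l$ once the $N_l$ are chosen large. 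This gives the total MSE bound
\[
C\Bigl(\sum_l \frac{2^{-\beta l}}{N_l}+\Bigl(\sum_l\frac{2^{-\beta l/2}}{N_l}\Bigr)^2+2^{-2\alpha L_{\max}}\Bigr).
\]

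\textbf{Step 4 (optimization).} I would choose $L_{\max}$ with $2^{-\alpha L_{\max}}\asymp\varepsilon$ and $N_l=\lceil\varepsilon^{-2}2^{-(\beta+\gamma)l/2}\sum_{k}2^{(\gamma-\beta)k/2}\rceil$. Since $\beta>\gamma$, the sum converges, so the variance is $O(\varepsilon^2)$. The squared-bias remainder is $O(\varepsilon^4)$, and the cost $\sum_l N_lC_l\le C\varepsilon^{-2}$ by Assumption \ref{ass:rate_hyper}(C). Together these give the canonical rate claimed in Theorem \ref{thm:main_smlsmc}.
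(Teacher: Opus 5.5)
Your outline follows the route the paper relies on; its proof is only a pointer to \cite{beskos2017multilevel}. However, the step that is meant to produce the multilevel variance decay does not go through as written.

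\textbf{The sup-norm gap in Step~2.} The per-level SMC bounds you quote are stated in terms of the uniform quantity $\|\tfrac{Z_{l-1}}{Z_l}G_{l-1}-1\|_\infty$. The only decay you actually have is in mean square: Assumption~\ref{ass:rate_hyper}(V), and behind it Proposition~\ref{prop:tnn}. Even after transferring (V) from $\Pi_l$ to the law the serial scheme samples ($\theta_{l-1}\sim\pi_{l-1}$, new coordinates from the prior), which Assumption~\ref{ass:G1_hyper} permits up to constants, you obtain only $\bbE|\cL_l-\cL_{l-1}|^2\le C2^{-\beta l}$, not a sup bound.

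In this setting the sup bound is actually false. The coordinates switched on at level $l$ are Gaussian with unbounded support. Setting them to zero leaves $g_l=g_{l-1}$, while choosing them large moves $g_l$ by an $O(1)$ amount. So for any likelihood that genuinely depends on the output, $\sup_\theta|\tfrac{Z_{l-1}}{Z_l}\cL_l(\theta_l)/\cL_{l-1}(\theta_{l-1})-1|$ stays bounded below uniformly in $l$; Assumption~\ref{ass:G1_hyper} only makes it bounded. Inserted into the bound you quote, this gives $O(1/N_{l-1})$ per level with no factor $2^{-\beta l}$, i.e.\ no multilevel gain at all.

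To close the argument you must do one of two things:
\begin{itemize}
\item add the hypothesis $\|\tfrac{Z_{l-1}}{Z_l}G_{l-1}-1\|_\infty^2\le C2^{-\beta l}$, which is what the cited analysis actually uses; or
\item redo the SMC $L^2$ analysis with second-moment or Lyapunov-weighted norms on the non-compact parameter space, which is substantial work rather than a ``relation''.
\end{itemize}

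\textbf{The missing mixing hypothesis.} Step~3 needs uniform mixing of the $\cM_l$. This is not a hypothesis of the theorem, and it cannot be borrowed from Assumption~\ref{ass:K_hyper}. That assumption is a minorization of the PMMH kernel $K$ on $\sW$, not of the SMC mutation kernels. It has to be added explicitly; it is equally implicit in the paper's citation.

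\textbf{Bookkeeping in Steps~3--4 (repairable).}
\begin{itemize}
\item The correlation between the martingale parts of levels $l<l'$ is not of lower order. Under mixing it contributes terms like $2^{-\beta(l+l')/2}\varsigma^{l'-l}/N_l$ with $\varsigma<1$, which are of the same order as the diagonal. They are absorbed into $C\sum_l 2^{-\beta l}/N_l$ only because $\sum_k \varsigma^k2^{-\beta k/2}<\infty$.
\item With your choice of $N_l$, $\sum_l 2^{-\beta l/2}/N_l$ is of order $\varepsilon^2 2^{\gamma L_{\max}/2}$, not $\varepsilon^2$. The squared-bias term is therefore of order $\varepsilon^{4-\gamma/\alpha}$, not $\varepsilon^4$.
\item The ceilings can add up to $\sum_{l\le L_{\max}}C_l\asymp 2^{\gamma L_{\max}}\asymp\varepsilon^{-\gamma/\alpha}$ to the cost.
\end{itemize}
Both of the last two quantities stay within target only if $2\alpha\ge\gamma$, which is not among the hypotheses. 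You should derive it, for example by bounding the weak error by the sum of the strong increments (Jensen), which gives $\alpha\ge\beta/2>\gamma/2$.
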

\begin{proof}
    The proof is given in \cite{beskos2017multilevel,beskos2018multilevel}.
\end{proof}

\subsection{Trace-class neural network priors}

We begin this section by providing a result on the strong convergence, i.e. a bound on the variance related to the TNN prior. In particular, as shown in \cite{chadaTNN}, for a trace‐class neural network prior with smoothness parameter $s\geq 1/2$, 
the truncation error of the network output satisfies the decay rate in \eqref{eq:tnn_decay}. If the quantity of interest is obtained by applying a Lipschitz map $\Psi$ to the network output, the same rate transfers directly to $\varphi_l=\Psi\circ g_l$, see the Proposition \ref{prop:tnn}.

\begin{prop}\label{prop:tnn}
Assume that for all $z\in \bbR$, $\sigma(z) \leq |z|$. Then for $s \geq 1/2$, the priors defined as in~\eqref{eqn:prior_tnn} are trace-class. Furthermore, let $x \in \sX$ and $\mu \in \sS$, and consider the TNN $g_l(x,\theta_l,\mu)$ truncated at $n_l=2^l$ terms, for $l\in \bbN$, with the limit denoted $g(x,\theta,\mu) = \lim_{l \rightarrow \infty} g_l(x,\theta_l,\mu)$, as described above. Then there are $C^{\prime}(x), \ C(x)< +\infty$ such that 
\begin{equation}\label{eq:tnn_decay}
    \int_{\Theta \times \sS} |g_l(x,\theta_l,\mu)-g(x,\theta,\mu)|^2 \bar{\pi}(d\theta) \pi_0(d\mu) \leq C^{\prime} 2^{-(2s-1)l}.
\end{equation}
if $\Psi$ is Lipschitz continuous, then naturally we have
\begin{equation*}
    \int_{\Theta \times \sS} |\varphi_l(x,\theta_l,\mu)-\varphi(x,\theta,\mu)|^2 \bar{\pi}(d\theta) \pi_0(d\mu) \leq C 2^{-(2s-1)l}.
\end{equation*}
\end{prop}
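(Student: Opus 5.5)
The plan is to prove the two claims of Proposition \ref{prop:tnn} in order: first the trace-class property, then the truncation bound \eqref{eq:tnn_decay} by induction over the depth $D$, and finally the transfer to $\varphi_l$ through the Lipschitz map $\Psi$. The mixture over the hyperparameter $\mu=(\sigma_A,\sigma_b)$ is handled last, by averaging against $\pi_0$.

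\textbf{Trace-class property.} For fixed $\mu$, I sum the prior variances in \eqref{eqn:prior_tnn}. For the hidden layers,
\[
\sum_{i,j} \sigma_A^2 (ij)^{-s} = \sigma_A^2 \Big(\sum_i i^{-s}\Big)^2 .
\]
The input and output layers contribute $\sigma_A^2 \sum_i i^{-s}$ times the finite dimensions $n$ or $m$, and the biases contribute $\sigma_b^2\sum_i i^{-s}$. Finiteness of these sums needs $s>1$. For the range $s\ge 1/2$ I will instead follow the convention of \cite{chadaTNN} and \cite{sell2020dimension}: the relevant object is the random linear operator $A_d$. Its Hilbert--Schmidt norm satisfies $\bbE\|A_d\|_{HS}^2 \propto (\sum_i i^{-s})^2$, so what is actually used is summability of the tail sums
\[
\sum_{i>n_l} i^{-2s} \lesssim n_l^{-(2s-1)} .
\]
This is where $s\ge 1/2$ (strictly, $s>1/2$) enters. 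I will state that convention explicitly rather than reprove \cite{chadaTNN}.

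\textbf{Truncation bound \eqref{eq:tnn_decay}, induction on layers.} Write $g^{(d)}_l$ for the output of layer $d$ truncated at width $n_l$, embedded in $\ell^2$ by zero padding, and $g^{(d)}$ for its limit. Then
\[
g^{(d)} - g^{(d)}_l = A_d\big(\sigma(g^{(d-1)}) - \sigma(g^{(d-1)}_l)\big) + (A_d - A^l_d)\,\sigma(g^{(d-1)}_l) + (b_d - b_d^l).
\]
I bound the second moment of each of the three terms.
\begin{itemize}
\item The bias term has second moment $\sigma_b^2\sum_{i>n_l} i^{-s}$ in the $\ell^2$ sense, or $\sum_{i>n_l} i^{-2s}$ under the operator convention, which is $\lesssim 2^{-(2s-1)l}$.
\item For the middle term I use independence of $A_d$ from $g^{(d-1)}_l$ and the bound $|\sigma(z)|\le|z|$. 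This gives
\[
\bbE\big|(A_d-A_d^l)\sigma(g^{(d-1)}_l)\big|^2 \le \text{(tail of the variance weights)} \times \bbE|g^{(d-1)}_l|^2 .
\]
\item For the first term I use that ReLU is $1$-Lipschitz and again independence of $A_d$ from the previous layer. This gives $\bbE\|A_d\|^2\,\bbE|g^{(d-1)}-g^{(d-1)}_l|^2$.
\end{itemize}
The induction also needs a uniform-in-$l$ moment bound $\bbE|g^{(d)}_l|^2\le C$. I prove it alongside, by the same recursion, using $|\sigma(z)|\le|z|$ and finiteness of $\bbE\|A_d\|^2$. The base case $d=1$ is the explicit Gaussian computation $A_1x+b_1$, with $\bbE|x|^2$ finite for fixed $x$. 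Iterating through the $D$ layers gives, for fixed $\mu$,
\[
\bbE|g_l-g|^2\le C(x,\mu)\,2^{-(2s-1)l},
\]
with $C(x,\mu)$ polynomial in $\sigma_A^2$ and $\sigma_b^2$ of degree at most $D$. Integrating against the log-normal $\pi_0$ keeps this constant finite, because log-normals have all moments. This yields the constant $C'(x)$ in \eqref{eq:tnn_decay}.

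\textbf{Transfer to $\varphi_l$.} If $\Psi$ is Lipschitz with constant $L_\Psi$, then pointwise
\[
|\varphi_l-\varphi|^2\le L_\Psi^2\,|g_l-g|^2 .
\]
Integrating gives the second claim with $C=L_\Psi^2C'$.

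\textbf{Main obstacle.} The hard step is the hidden-to-hidden layers $2\le d\le D-1$. There the weights are infinite matrices with covariance $(ij)^{-s}$, and the error from the preceding layer is multiplied by the full operator $A_d$, not by a finite matrix. I expect to need Cauchy--Schwarz on the weighted sum $\sum_j j^{-s/2}\,\xi_{ij}\,\big(\sigma(g^{(d-1)})_j-\sigma(g^{(d-1)}_l)_j\big)$ to control this. The delicate bookkeeping is making the tail rate $2^{-(2s-1)l}$, and not a worse one, come out at every layer. That depends on which of the summability conventions above is in force, so I would follow the layerwise argument of \cite{chadaTNN} closely at this point.
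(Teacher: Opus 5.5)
\textbf{Comparison with the paper.} Your argument is more self-contained than the paper's. For the trace-class property and \eqref{eq:tnn_decay}, the paper's proof only cites \cite{chadaTNN}. It never addresses the integration against $\pi_0(d\mu)$ that appears in the statement. Its one actual step is the Lipschitz transfer $|\varphi_l-\varphi|\le L|g_l-g|$, which is exactly your last step. Your layerwise induction has three ingredients: the three-term splitting, independence of $(A_d,b_d)$ from the earlier layers, and a uniform bound on $\bbE|g^{(d)}_l|^2$ carried along. Combined with your observation that the constant is a polynomial of degree at most $D$ in $(\sigma_A^2,\sigma_b^2)$, and hence $\pi_0$-integrable for a log-normal, it is a genuine proof of the hierarchical version, which the paper leaves implicit. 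The argument closes, but two things need fixing.

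\textbf{The variance convention.} Drop the ``operator convention''. $\bbE\|A_d\|_{HS}^2$ \emph{is} the trace of the covariance, so it cannot weaken anything. With the variances exactly as written in \eqref{eqn:prior_tnn}, trace-class requires $s>1$, and every tail is $\sum_{i>n_l}i^{-s}\asymp 2^{-(s-1)l}$. So your bias bullet's claim $\sigma_b^2\sum_{i>n_l}i^{-s}\lesssim 2^{-(2s-1)l}$ is false. The stated rate and range are consistent only if $(ij)^{-s}$ is read as a standard deviation, i.e.\ $\mathrm{Var}(A_{ij,d})=\sigma_A^2(ij)^{-2s}$, and likewise for the other layers and the biases. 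Under that reading, trace-class holds iff $s>1/2$, so $s=1/2$ must be excluded, as you noticed. Every tail is then $\sum_{i>n_l}i^{-2s}\le 2^{-(2s-1)l}/(2s-1)$. State that reading once and use it throughout.

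\textbf{The hidden layers.} Your ``main obstacle'' is not one, and it needs no weighted Cauchy--Schwarz. Set $\Delta:=\sigma(g^{(d-1)})-\sigma(g^{(d-1)}_l)$. Since $A_d$ has independent mean-zero entries and is independent of $\Delta$,
\[
\bbE|A_d\Delta|^2=\sum_{i,j}\mathrm{Var}(A_{ij,d})\,\bbE\Delta_j^2\le\sigma_A^2\zeta(2s)\,\bbE|\Delta|^2 ,
\]
using $j^{-2s}\le 1$. The middle term gives $\sigma_A^2\big(\sum_{i>n_l}i^{-2s}\big)\bbE|g^{(d-1)}_l|^2$, because $\sigma(g^{(d-1)}_l)$ vanishes for $j>n_l$. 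With $e_d:=\bbE|g^{(d)}-g^{(d)}_l|^2$ this gives $e_d\le 3\big[\sigma_A^2\zeta(2s)\,e_{d-1}+c\,2^{-(2s-1)l}\big]$. The rate therefore passes through the $D$ layers with only a constant factor per layer. You are right to use that $\sigma$ is ReLU here: the 1-Lipschitz bound and $|\sigma(z)|\le|z|$ do not follow from the one-sided hypothesis $\sigma(z)\le|z|$.
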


\begin{proof}
The proof of the first part of Proposition \ref{prop:tnn} can be found in \cite{chadaTNN}. The second part can be derived instantly since $$|\varphi_l-\varphi| =|\Psi(g_l)-\Psi(g)| \leq L  |g_l-g|, \quad L>0,$$ where $L$ is the Lipschtiz constant.
\end{proof}


\subsection{Supporting results}

\begin{theorem}\label{thm:var_term}
    Assume Assumptions \ref{ass:G1_hyper}, \ref{ass:rate_hyper}. For any $\varphi: \Theta \times \sS \rightarrow \bbR^{d}$, there exists a $C < +\infty$ such that
    \begin{equation}
        \bbE\Bigg[ \Big( \Pi_{l}^{N,M_l}(\Delta \varphi_l) - \Pi_l(\Delta  \varphi_l) \Big)^2 \Bigg] \leq \frac{C 2^{-\beta l}}{M_l}.
    \end{equation}
\end{theorem}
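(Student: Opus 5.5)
We read Theorem~\ref{thm:var_term} as an $L^2$ ergodic bound for the level-$l$ PMMH chain of Algorithm~\ref{alg:mlpmmh_l}. The chain lives on the extended space $\sW$: hyperparameter, all SMC particles, ancestor indices and the selected index. Its invariant law $\pi$ has the coupled joint $\Pi_l(d(\theta_l,\theta_{l-1}),d\mu)$ as the marginal of the selected pair $(\theta_{k,l},\theta_{k,l-1},\mu_{k,l})$, by the standard particle MCMC argument of \cite{andrieu2010particle}. Here $\Pi_l^{N,M_l}$ is the empirical measure of the chain, and $\Delta\varphi_l$ is the increment
\[
\varphi_l\frac{f_l}{F_l}-\varphi_{l-1}\frac{f_{l-1}}{F_l}.
\]
This is a bounded function of the selected pair, by Assumption~\ref{ass:G1_hyper} and boundedness of $\varphi$. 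We also use $\Delta\varphi_l$ for its lift $\overline{\Delta\varphi_l}(w)$ to $\sW$, which depends only on the selected coordinates, so that $\pi(\overline{\Delta\varphi_l})=\Pi_l(\Delta\varphi_l)$. The target is therefore the MSE of an MCMC ergodic average of a bounded function. We want this MSE to be bounded by a mixing constant times the stationary second moment of the increment, and Assumption~\ref{ass:rate_hyper}(V) bounds that moment by $c_2 2^{-\beta l}$.

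\textbf{Step 1 (uniform ergodicity).} Assumption~\ref{ass:K_hyper} is a global Doeblin minorisation, $K(w,\cdot)\ge \xi\nu(\cdot)$. It gives uniform geometric ergodicity,
\[
\|K^n(w,\cdot)-\pi\|_{TV}\le (1-\xi)^n .
\]
It also gives a Poisson equation solution $\hat g=\sum_{n\ge0}K^n(g-\pi(g))$ for centred $g$. The constant $\xi$ must be independent of $l$; we take this from the assumption as stated, uniformly in $(l,\mu)$.

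\textbf{Step 2 (bias--variance split).} Set $g=\overline{\Delta\varphi_l}-\Pi_l(\Delta\varphi_l)$. Expand
\[
\bbE\Big[\Big(M_l^{-1}\sum_{k=1}^{M_l} g(W_k)\Big)^2\Big]
\]
into diagonal terms $\bbE[g(W_k)^2]$ and cross terms $\bbE[g(W_k)g(W_{k+n})]=\bbE[g(W_k)(K^ng)(W_k)]$. The initial law is not $\pi$, and we handle this with the coupling-from-minorisation construction. Each step, with probability $\xi$, the chain regenerates from $\nu$, so every $W_k$ has law $\pi$ up to a TV error of order $(1-\xi)^k$. The resulting initialisation contribution is $O(M_l^{-2}\sup|g|^2\xi^{-2})$. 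It remains to show this is also $O(2^{-\beta l}/M_l)$, and this is the delicate point.

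\textbf{Step 3 (main obstacle: variance, not sup-norm).} The naive bound $|K^ng|\le 2(1-\xi)^n\sup|g|$ only yields $C/M_l$ with $C\propto\sup|g|^2$, which does not decay in $l$. We need the variance-sensitive version. A uniformly ergodic chain has an $L^2(\pi)$ spectral gap bound. A Doeblin kernel is not necessarily reversible, but the PMMH kernel is a Metropolis--Hastings kernel with the standard $\pi$-reversibility. This gives
\[
\|K^n g\|_{L^2(\pi)}\le (1-\xi)^{n/2}\|g\|_{L^2(\pi)}\cdot C,
\]
and by Cauchy--Schwarz
\[
|\bbE_\pi[gK^ng]|\le C(1-\xi)^{n/2}\,\pi(g^2).
\]
Summing over $n$ gives a stationary MSE of at most $C\xi^{-1}\pi(g^2)/M_l$.

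Finally, $\pi(g^2)\le\Pi_l(|\Delta\varphi_l|^2)\le c_2 2^{-\beta l}$ by (V). For the start-up term we can assume $\mu_0$ and the initial selected pair are drawn so that $W_0\sim\pi$; this is essentially how Algorithm~\ref{alg:mlpmmh_l} initialises via SMC at $\mu_0\sim\pi_0$, up to the weighting, or it can be enforced after burn-in. Otherwise we would bound the start-up term through $\chi^2$-distance times $\pi(g^2)$. Collecting the pieces gives the stated bound $C2^{-\beta l}/M_l$.
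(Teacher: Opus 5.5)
There is a genuine gap: you have bounded a different quantity from the one in the statement. In the paper, $\Pi_l^{N,M_l}(\Delta\varphi_l)-\Pi_l(\Delta\varphi_l)$ is the error of the level-$l$ term of \eqref{eqn:mlpmmh}. That term is a difference of two \emph{self-normalised} ratios $a/A-b/B$, with $a=\frac{1}{M_l}\sum_k\varphi_l f_l/F_l$, $A=\frac{1}{M_l}\sum_k f_l/F_l$, and $b,B$ the level-$(l-1)$ analogues, and it is centred at $\pi_l(\varphi_l)-\pi_{l-1}(\varphi_{l-1})$. This centring is what the telescoping in the proof of Theorem~\ref{thm:mlpmmh} needs.

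You instead bound the plain ergodic average $\frac{1}{M_l}\sum_k\tilde{\varphi}_l(V^k)=a-b$ about its mean $\Pi_l(\tilde{\varphi}_l)$ under the approximate coupling. That is precisely Proposition~\ref{prop:bound_term}, and your Steps 1--3 are essentially its proof. Your use of reversibility to justify the $L^2(\pi)$ contraction is a welcome addition, since the paper takes it for granted. But it does not yield the theorem. $\Pi_l(\tilde{\varphi}_l)$ is $f_l(\varphi_l)-f_{l-1}(\varphi_{l-1})$ divided by the level-dependent normaliser of the coupled target, not a difference of normalised expectations, so these centrings do not telescope to $\pi_L(\varphi_L)$. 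Moreover, the implemented estimator divides by the random normalisers $A$ and $B$, which your argument never controls.

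The missing idea is the ratio decomposition the paper uses.
\begin{itemize}
\item Scale $c,C,d,D$ so that $a,A,b,B$ are unbiased for them under stationarity, with $c/C=\pi_l(\varphi_l)$ and $d/D=\pi_{l-1}(\varphi_{l-1})$.
\item Apply the identity of Lemma~\ref{lem:separate_terms} to $\frac{a}{A}-\frac{b}{B}-(\frac{c}{C}-\frac{d}{D})$, then use $(\sum_{i=1}^{6}x_i)^2\le 6\sum_i x_i^2$.
\item The first two terms involve $a-b-(c-d)$ and $A-B-(C-D)$. These are exactly your centred averages for $\varphi$ and for $\varphi\equiv 1$, so they are $O(2^{-\beta l}/M_l)$.
\item The remaining four terms are products $(C-A)(c-d)$, $(b-d)(C-D)$, $(B-D)(C-D)$ and $(A-C)(C-D)$. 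The random factors have second moment only $O(1/M_l)$, by your ergodic argument applied to non-increment functions. The decay must therefore come from the deterministic factors: $|c-d|,|C-D|\le V_l^{1/2}\le c_2^{1/2}2^{-\beta l/2}$ by Jensen and Assumption~\ref{ass:rate_hyper}(V).
\item The prefactors $1/A$, $b/(AB)$, $d/(CBD)$ and $d/(ACB)$ must be bounded. This needs $C,D$ bounded below, which Assumption~\ref{ass:G1_hyper} gives. It also needs $A,B$ bounded \emph{below}, not just above by $1$, i.e.\ $\cL_l/\max\{\cL_l,\cL_{l-1}\}$ and its level-$(l-1)$ counterpart bounded away from zero. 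This is a two-sided condition of the type in Assumption~\ref{ass:G1_hyper}.
\end{itemize}
With this layer added on top of your Markov-chain bound, the argument goes through.
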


\begin{proof}
    Proof sees Appendix \ref{app:proof_var_term}.
\end{proof}

\begin{lem}\label{lem:separate_terms}
    Let $a,b,c,d,A,B,C,D \in \bbR$ with $A,B,C,D$ non-zero then;
    \begin{equation}
    \begin{aligned}
        \frac{a}{A} - & \frac{b}{B} - \Bigg( \frac{c}{C} - \frac{d}{D} \Bigg) = \frac{[a-b-(c-d)]}{A} - \frac{b[A-B-(C-D)]}{AB} + \frac{1}{AC}[C-A][c-d] \\
        &- \frac{1}{AB}(b-d)(C-D) + \frac{d}{CBD}(B-D)(C-D) + \frac{d}{ACB}(A-C)(C-D).
    \end{aligned}
    \end{equation}
\end{lem}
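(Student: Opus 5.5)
This is a purely algebraic identity, so the plan is to verify it by direct expansion. I will rearrange the six right-hand terms into groups whose sums telescope onto the left-hand side $\frac{a}{A}-\frac{b}{B}-\frac{c}{C}+\frac{d}{D}$. The strategy is to peel the identity apart in stages. First I use the decomposition
\[
\frac{a}{A}-\frac{b}{B} = \frac{a-b}{A} + b\Big(\frac{1}{A}-\frac{1}{B}\Big) = \frac{a-b}{A} - \frac{b(A-B)}{AB},
\]
and the analogous one for $\frac{c}{C}-\frac{d}{D}$. Subtracting them, the left-hand side becomes
\[
\frac{a-b}{A} - \frac{c-d}{C} - \frac{b(A-B)}{AB} + \frac{d(C-D)}{CD}.
\]

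I will then regroup this expression in four steps.

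First, I write $\frac{a-b}{A}-\frac{c-d}{C} = \frac{[a-b-(c-d)]}{A} + (c-d)\big(\frac{1}{A}-\frac{1}{C}\big)$. This produces the first term of the claimed identity together with $\frac{1}{AC}[C-A][c-d]$, the third term.

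Second, I split $-\frac{b(A-B)}{AB} = -\frac{b[A-B-(C-D)]}{AB} - \frac{b(C-D)}{AB}$. This gives the second term, and it remains to show
\[
-\frac{b(C-D)}{AB} + \frac{d(C-D)}{CD} = -\frac{1}{AB}(b-d)(C-D) + \frac{d}{CBD}(B-D)(C-D) + \frac{d}{ACB}(A-C)(C-D).
\]

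Third, I factor out $(C-D)$ from both sides. Replacing $b$ by $(b-d)+d$ on the left leaves the identity $-\frac{d}{AB}+\frac{d}{CD} = \frac{d(B-D)}{CBD} + \frac{d(A-C)}{ACB}$ to check.

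Fourth, I verify this last identity. Dividing by $d$ (the case $d=0$ is trivial), it says $\frac{1}{CD}-\frac{1}{AB} = \frac{1}{CD}-\frac{1}{CB} + \frac{1}{CB}-\frac{1}{AB}$, which is an exact telescoping.

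No analytic obstacle arises, since the nonzero denominators make every step legitimate. The only delicate point is bookkeeping of signs in the second and third steps, in particular matching the sign of the $(b-d)(C-D)$ term. I would double-check it by setting $A=B=C=D=1$, where both sides reduce to $a-b-c+d$.
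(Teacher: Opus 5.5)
Your proof is correct. Each of your four regrouping steps is an exact add-and-subtract identity, and the final telescoping $\frac{1}{CD}-\frac{1}{AB}=\bigl(\frac{1}{CD}-\frac{1}{CB}\bigr)+\bigl(\frac{1}{CB}-\frac{1}{AB}\bigr)$ closes the argument.

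The paper works in the opposite direction. It expands all six right-hand terms into elementary fractions, for example $\frac{d}{CBD}(B-D)(C-D)=\frac{d}{D}-\frac{d}{C}-\frac{d}{B}+\frac{dD}{CB}$, and checks that everything cancels except $\frac{a}{A}-\frac{b}{B}-\frac{c}{C}+\frac{d}{D}$. That route is purely mechanical and needs no foresight. Your left-to-right route instead shows where each correction term comes from. You successively recentre $\frac{a}{A}-\frac{b}{B}$ and $\frac{c}{C}-\frac{d}{D}$ over common denominators, producing first-order differences plus products of two differences. That is the structure the proof of Theorem~\ref{thm:var_term} relies on when it bounds the six terms separately.

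One minor point concerns your proposed sanity check at $A=B=C=D=1$. It cannot detect the sign errors you are worried about, because every term except the first vanishes there. A numerical check would need $C\neq D$, $A\neq C$ and $B\neq D$.
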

\begin{proof}
    The Lemma is from Lemma C.3. in \citet{jasra2017multilevel} and its proof is given in Appendix \ref{lem:proof_separate_terms} for completeness.
\end{proof}

Define $v^k = (\theta_{k,l}, \mu_{k,l})$ as the relevant variables of $w^k$ from the MCMC kernel, and defining 
$$
\tilde{\varphi}_{l}(v^k) := \Big(\varphi_l(\theta_{k,l},\mu_{k,l}) \frac{f_{l}(\theta_{k,l}|\mu_{k,l})}{F_{l}(\theta_{k,l},\theta_{k,l-1}|\mu_{k,l})} - \varphi_{l-1}(\theta_{k,l-1},\mu_{k,l}) \frac{f_{l-1}(\theta_{k,l-1}|\mu_{k,l})}{F_{l}(\theta_{k,l},\theta_{k,l-1}|\mu_{k,l})} \Big),
$$
and the map $\psi: w^k \rightarrow v^k$.

\begin{prop}\label{prop:bound_term}
    Assume Assumptions \ref{ass:G1_hyper}, \ref{ass:K_hyper}, \ref{ass:rate_hyper}. Suppose that $\{ W^i \}_i$ is a Markov chain with the Markov kernel $K$, and $W^1 \sim \pi$. Then there exists a $\beta > 0$ such that for any $\varphi \in \cB_b(\Theta \times \sS) \cap \text{Lip}(\Theta \times \sS)$ there exists a $C>0$ such that
    \begin{equation*}
        \bbE \Bigg[ \Big( \frac{1}{M_l} \sum_{k=1}^{M_l} \tilde{\varphi}_{l}(V^{k}) - \Pi_{l}(\tilde{\varphi}_{l}) \Big)^2 \Bigg] \leq \frac{C 2^{-\beta l}}{M_l},
    \end{equation*}
    where $V^k = (\theta_{k,l}, \mu)$ is the relevant variables of $W^k$.
\end{prop}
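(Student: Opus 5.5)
The plan is to use the standard $L^2$ bound for ergodic averages of a uniformly ergodic Markov chain started at stationarity. This is the approach of \citet{jasra2018bayesian} for multilevel PMMH.

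\textbf{Step 1: mixing.} Assumption \ref{ass:K_hyper} is a one-step Doeblin minorisation on all of $\sW$, so $K$ is uniformly ergodic. For every bounded measurable $h$ with $\pi(h)=0$,
\[
\norm{K^n h}_\infty \leq 2(1-\xi)^n \norm{h}_\infty ,
\]
for instance via the Nummelin splitting, or via the Dobrushin coefficient being at most $1-\xi$. I will use this in the weighted $L^2(\pi)$ form
\[
\bbE_\pi\bigl[h(W^1)\,K^n h(W^1)\bigr] \leq 2(1-\xi)^n \norm{h}_\infty\, \bbE_\pi|h| ,
\]
or, sharper, $\norm{K^n h}_{L^2(\pi)} \leq C(1-\xi)^{n/2}\norm{h}_{L^2(\pi)}$, which holds for reversible chains. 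The PMMH kernel is reversible with respect to $\pi$, and its $V$-marginal is $\Pi_l$ by the extended-target construction recalled just before the assumptions.

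\textbf{Step 2: variance expansion.} Set $h := \tilde{\varphi}_l\circ\psi - \Pi_l(\tilde{\varphi}_l)$, so that $\pi(h)=0$ because $W^1\sim\pi$ pushes forward to $\Pi_l$. Stationarity gives
\[
\bbE\Bigl[\Bigl(\tfrac{1}{M_l}\textstyle\sum_k h(W^k)\Bigr)^2\Bigr] = \frac{1}{M_l^2}\Bigl(M_l\,\pi(h^2) + 2\sum_{n=1}^{M_l-1}(M_l-n)\,\pi(h\,K^n h)\Bigr).
\]
Bounding each covariance by $\norm{h}_{L^2(\pi)}\norm{K^nh}_{L^2(\pi)} \leq C(1-\xi)^{n/2}\pi(h^2)$ and summing the geometric series yields $\leq C'\pi(h^2)/M_l$.

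\textbf{Step 3: variance of the increment.} Finally,
\[
\pi(h^2) \leq \Pi_l(\tilde{\varphi}_l^2) = V_l \leq c_2 2^{-\beta l}
\]
by Assumption \ref{ass:rate_hyper}(V). Assumption \ref{ass:G1_hyper} makes the ratios $f_l/F_l$ and $f_{l-1}/F_l$ bounded, so $\tilde{\varphi}_l$ is bounded uniformly in $l$ and $h\in\cB_b$, and the constants are independent of $l$.

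\textbf{Main obstacle.} The hard step is the $L^2$ contraction in Step 1. The minorisation alone gives only sup-norm decay, and the crude bound $\norm{h}_\infty \pi(|h|)$ would lose the factor $2^{-\beta l}$: it yields only $2^{-\beta l/2}$. I would first try to get $L^2$ geometric decay from reversibility plus uniform ergodicity, which implies a spectral gap $\geq \xi$ for reversible chains (Roberts--Rosenthal). If reversibility on $\sW$ is unclear, the fallback is the Poisson-equation route: solve $\hat h = \sum_n K^n h$, which satisfies $\norm{\hat h}_\infty \leq C\norm{h}_\infty$, write the average as a martingale plus boundary terms, and bound the martingale increments' second moment by $\pi(h^2)$ plus the boundary terms, which are $O(M_l^{-2})$.
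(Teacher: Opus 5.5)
Your proposal is correct and is essentially the paper's proof, which is itself adapted from Proposition A.1 of \citet{jasra2018bayesian}: the stationary covariance expansion, a geometric bound on the $L^2(\pi)$ correlations from Assumption~\ref{ass:K_hyper}, and $\norm{f}_2^2\le V_l\le c_2 2^{-\beta l}$. You are more careful than the paper, which jumps from the minorisation to a geometric bound on $\norm{K^p}_2$ without justifying the $L^2$ contraction you rightly flag as the crux. That step needs neither reversibility nor your Poisson-equation fallback, because Ibragimov's $\phi$-mixing covariance inequality gives $|\pi(h\,K^n h)|\le 2(1-\xi)^{n/2}\pi(h^2)$ directly from the Doeblin condition; the fallback, using only $\norm{\hat h}_\infty\le C\norm{h}_\infty$, would in any case reintroduce the $2^{-\beta l/2}$ loss through the cross term $\pi(h\,K\hat h)$.
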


\begin{proof}
Proof sees the Appendix \ref{app:proof_bound_term}.
\end{proof}

\section{Additional Numerical Experiments}
\label{app:numerical_details}

We begin by presenting a comparative study of several MLSMC methods for approximating the target conditional posterior distribution. Next, we conduct a numerical verification on a synthetic regression benchmark to confirm both the regularity and complexity rate of the MLPMMH method. Furthermore, we justify the TNN architecture on the MNIST subset through controlled MAP training experiments. 
The COST is measured as the time (arithmetic) complexity, i.e. the number of basic floating-point operations needed in the coupled level.

\subsection{Problem setting}

In the case that data is defined in continuous space, such as the regression problem, we have $Y=\bbR^{m}$, then we can assume the data follows
\begin{equation}\label{eqn:bayes_reg}
    y_i = g(x_i,\theta,\mu) + \varepsilon_i, \ \varepsilon \mysim \cN_{m}(0,\Sigma_i),
\end{equation}
where $i = \{1,...,N_{\text{tr}}\}$ and $\cN_{m}(0,\Sigma_i)$ is the $m$-dimensional Gaussian distribution with mean zero and covariance matrix $\Sigma_i$.

Then the likelihood for the regression problem \eqref{eqn:bayes_reg} can be computed as
\begin{equation*}
    \cL(\theta,\mu) = \prod_{i=1}^{N_{\text{tr}}} \phi_m(y_i;g(x_i,\theta,\mu),\Sigma_i), 
\end{equation*}
where $\psi_{m}(y_i;g(x_i,\theta,\mu),\Sigma_i)$ is the $m$-dimensional Gaussian density of mean $g(x_i,\theta,\mu)$, covariance matrix $\Sigma_i$ evaluated at $y_i$.

In the discrete data set, like the classification problem, we have $\sY = \{1,...,m\}$, then we instead define the so-called \textit{softmax} function as
\begin{equation}\label{eqn:bayes_class}
    h_k(x,\theta,\mu) = \frac{\exp(g_{k}(x,\theta,\mu))}{\sum_{j=1}^{m}\exp(g_{j}(x,\theta,\mu))}, \ k \in \sY,
\end{equation}
and define $h(x,\theta,\mu)=(h_1(x,\theta,\mu),...,h_m(x,\theta,\mu))$ as a categorical distribution on $m$ outcomes based on data $x$. Then we assume that $y_i \mysim h(x_i)$ for $i=\{1,...,N_{\text{tr}}\}$.

The likelihood for the classification problem \eqref{eqn:bayes_class} can be computed as
\begin{equation*}
    \cL(\theta,\mu) = \prod_{i=1}^{N_{\text{tr}}} \prod_{k=1}^{m} h_{k}(x_i,\theta,\mu)^{\dsone_{[y_i=k]}},
\end{equation*}
where $\dsone_{[y_i=k]} =  1$ if $y_i=k$ and $0$ otherwise.


\subsubsection{MNIST dataset}

Consider the MNIST handwritten–digit dataset, which contains a total of $60\,000$ training and $10\,000$ test images of digits $0$–$9$. Denote the first $N_{tr}$ images from the training be the train dataset as $\{(x_i,\,y_i)\bigr\}_{i=1}^{N_{\text{tr}}}\subset\bbR^{784}\times\{0,\dots,9\}$ and first $N_{\text{test}}$ images from test be the test dataset as $\{(x_j^{\text{test}},\,y_j^{\text{test}})\bigr\}_{j=1}^{N_{\text{test}}}\subset\bbR^{784}\times\{0,\dots,9\}$, with every image vectorised and standardized to lie in $[0,1]^{784}$. 

\subsection{Comparison among MLSMC methods}


We compare three MLSMC methods on the synthetic regression example using the Bayesian TNN model with fixed hyperparameters $\mu = (\sigma_A, \sigma_b) = (1,1)$. The number of neural layers is $D=3$, the input neural network width is then set to $n = 10$, and the output neural network width is set to $m=1$. The TNN prior uses a rate decay of $s = 2$, and the pCN kernel 
is applied with $10$ mutation steps in three MLSMC methods.

A regularity test is conducted to estimate the decay of bias and variance for normalized estimator differences between levels under the posterior distribution. The results are shown in Figure~\ref{fig:mlsmc_regularity}, supporting the decay rate assumed in Assumption~\ref{ass:rate_hyper} with fixed $\mu$ for the convergence analysis. Starting from level $l = 4$, the decay of biases and weights aligns with the expected rate. In the MLSMC setting for TNN models, the bias decay is uniformly bounded due to Jensen's inequality, ensuring at least $\alpha = \beta/2$, where the strong convergence rate is $\beta = 2s - 1 = 3$ with $s = 2$. It is important to note that the first level does not need to follow the rate, as it is treated as a single level. 

Taking $L_{\text{min}} = 3$ as the starting level, the convergence test for the three MLSMC methods, (r)MLSMC and serial MLSMC, is presented in Figure~\ref{fig:mlsmc_complexity}. Given a target accuracy level $\varepsilon$, the maximum level $L_{\max}$ and the number of samples per level $N_l$ in both MLSMC and serial MLSMC are followed the computation in Section~\ref{sec:mlsmc_practical_set}. For rMLSMC, level indices $l$ and corresponding $N_l$ are selected based on the index probability distribution, as described in Section~\ref{sec:rmlsmc_practical_set}. The reference solution is computed at accuracy $\varepsilon = 10^{-4}$ using MLSMC.

\begin{figure}[h!]
\begin{subfigure}{.49\textwidth}
  \centering
  \includegraphics[width=.9\linewidth]{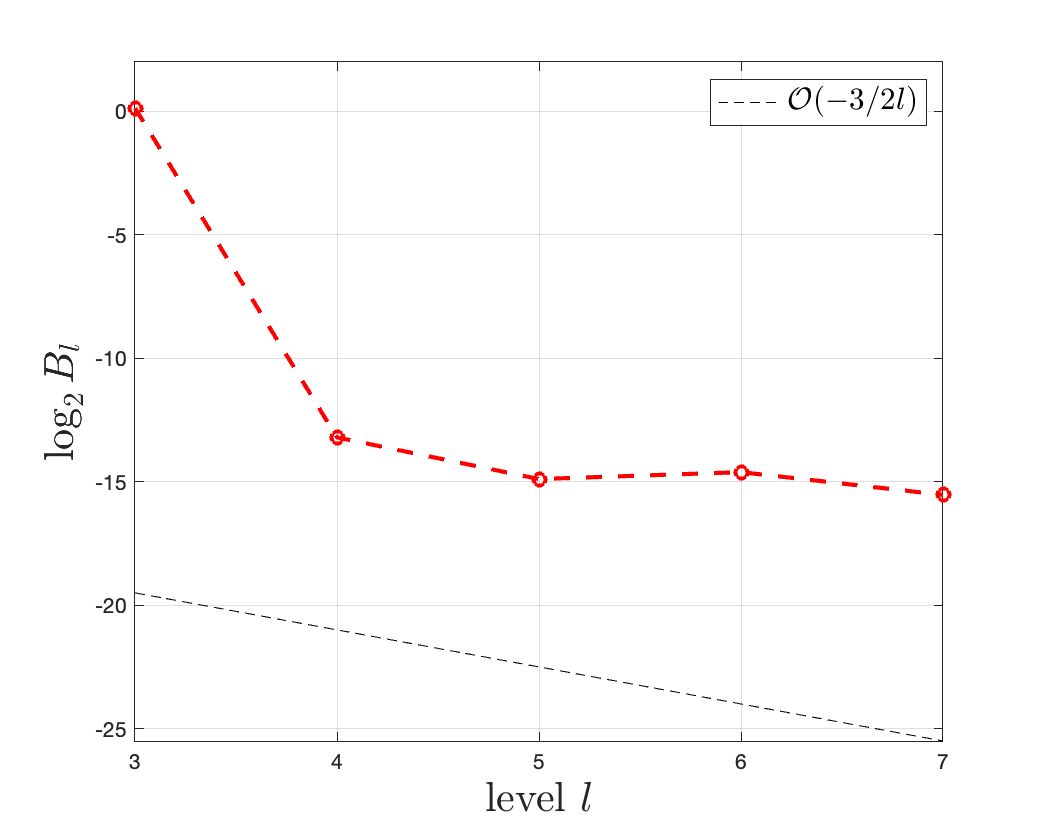}  
  \caption{}
\end{subfigure}
\begin{subfigure}{.49\textwidth}
  \centering
  \includegraphics[width=.9\linewidth]{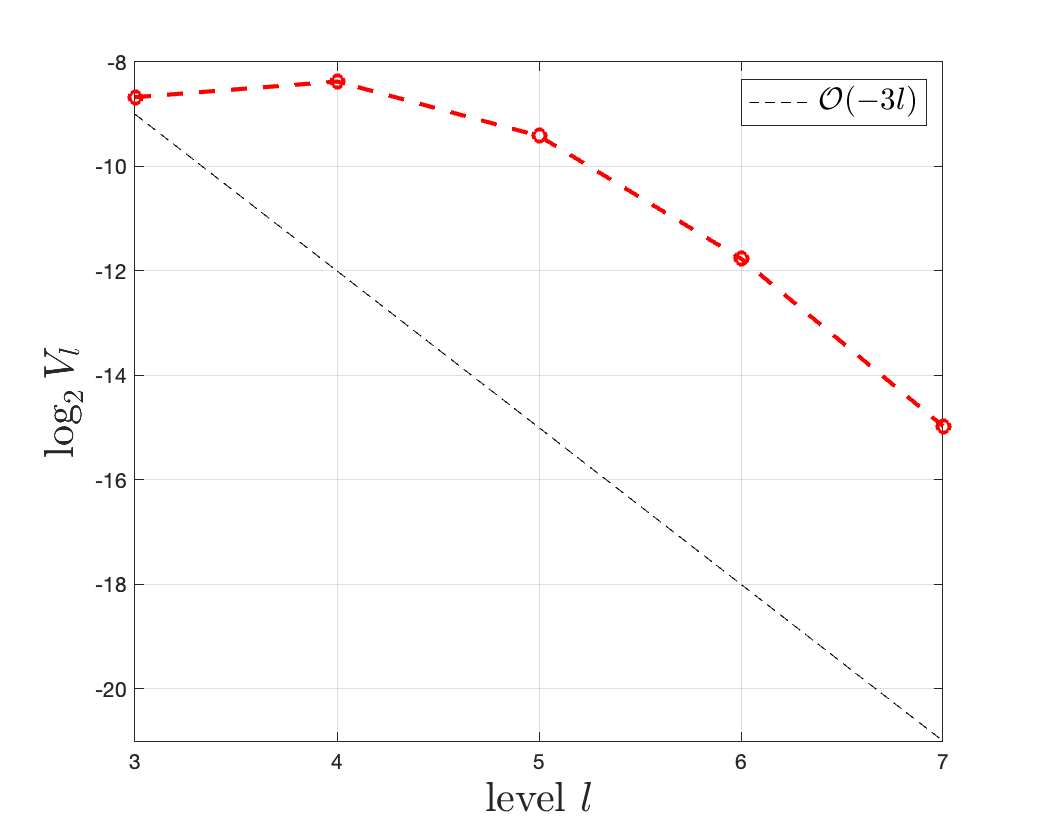}  
  \caption{}
\end{subfigure}
\caption{Regularity test for different MLSMC on Bayesian regression model with fixed $\mu$: \small\textit{Empirical values for strong and weak convergence parameters in normalized estimator. All plots use $30$ realizations with $10000$ samples in each index, as the maximum level $L_{\max}=7$ and starting level as $L_{\text{min}}=3$ at each experiment. (a) Logarithm plot with base $2$ for mean of difference and reference line with slope $-3/2$; (b) Logarithm plot with base $2$ for variance of difference and reference line with slope $-3$.}}
\label{fig:mlsmc_regularity}
\end{figure}

\begin{figure}[h!]
  \centering
  \includegraphics[width=.6\linewidth]{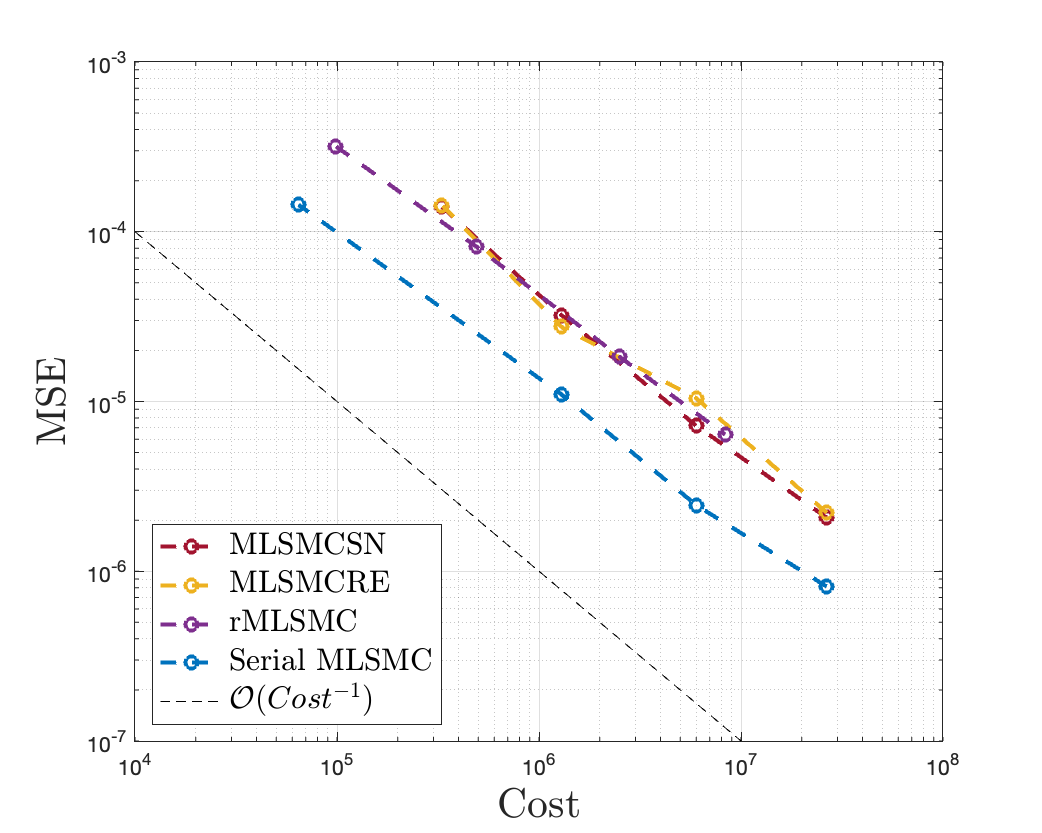}
  \caption{Complexity test for different MLSMC methods on Bayesian regression model: \small\textit{We use $30$ realisations to compute the MSE for each experiment and use the rate of convergence to compare the different methods. Rate of regression: MLSMCSN(red line) is $-0.962$, MLSMCRE (yellow line) is $-0.916$, rMLSMC (purple line) is $-0.885$ and Serial MLSMC (blue line) is $-0.870$.}}
\label{fig:mlsmc_complexity}
\end{figure}

\subsection{Numerical justification for MLPMMH}

\begin{figure}[h!]
\begin{subfigure}{.49\textwidth}
  \centering
  \includegraphics[width=.9\linewidth]{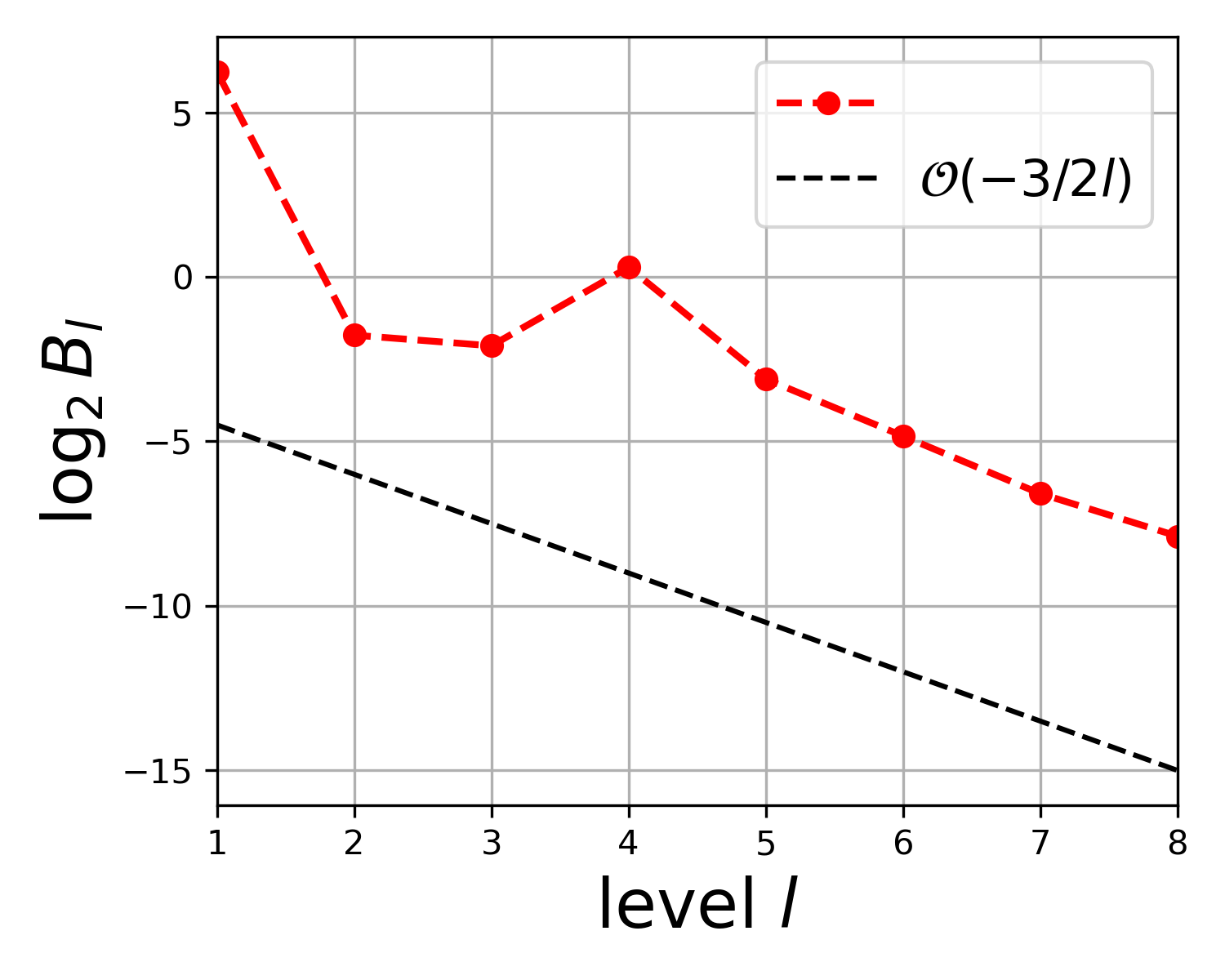}  
  \caption{}
\end{subfigure}
\begin{subfigure}{.49\textwidth}
  \centering
  \includegraphics[width=.9\linewidth]{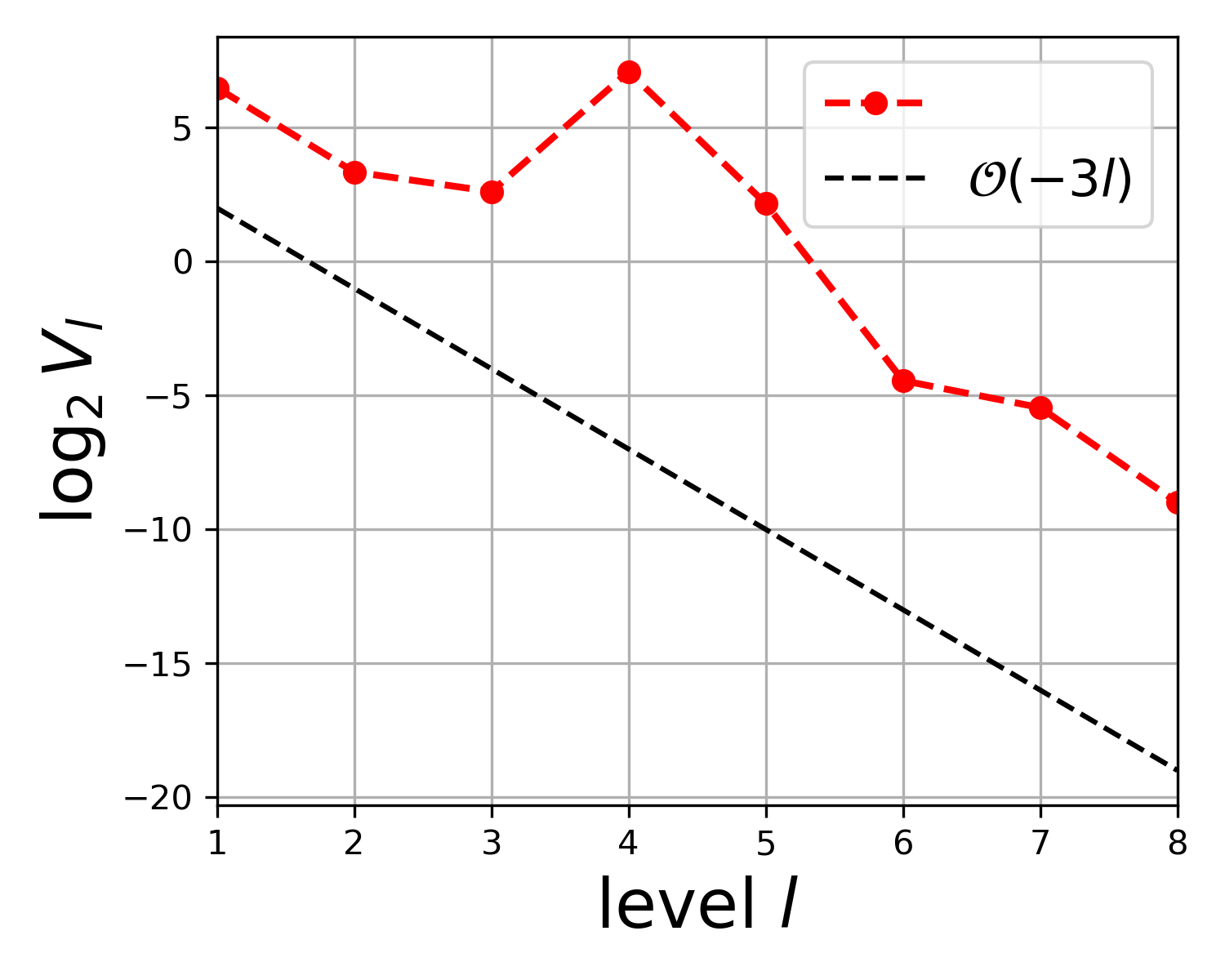}  
  \caption{}
\end{subfigure}
\caption{Regularity test for MLPMMH on Bayesian regression model: \small\textit{Empirical values for strong and weak convergence parameters in normalized estimator. All plots use $50$ realizations with $M_{l}=100$ samples in each index, as the maximum level $L_{\max}=8$ and starting level as $L_{\text{min}}=1$ at each experiment, with $50$ burn-in samples. For the MLSMC sampler, $N=100$ samples are used with an HMC kernel of one leapfrog step and $10$ mutation steps. (a) Logarithm plot with base $2$ for mean of difference and reference line with slope $-3/2$; (b) Logarithm plot with base $2$ for variance of difference and reference line with slope $-3$.}}
\label{fig:mlpmmh_regularity}
\end{figure}

The plots for autocorrelation functions of $\sigma_{A}$ and $\sigma_b$ are given in Figure \ref{fig:mlpmmh_acf}, suggesting that a reasonable number of burn-in samples in the MCMC in MLPMMH is $50$.

\begin{figure}[h!]
\begin{subfigure}{.48\textwidth}
  \centering
  \includegraphics[width=.9\linewidth]{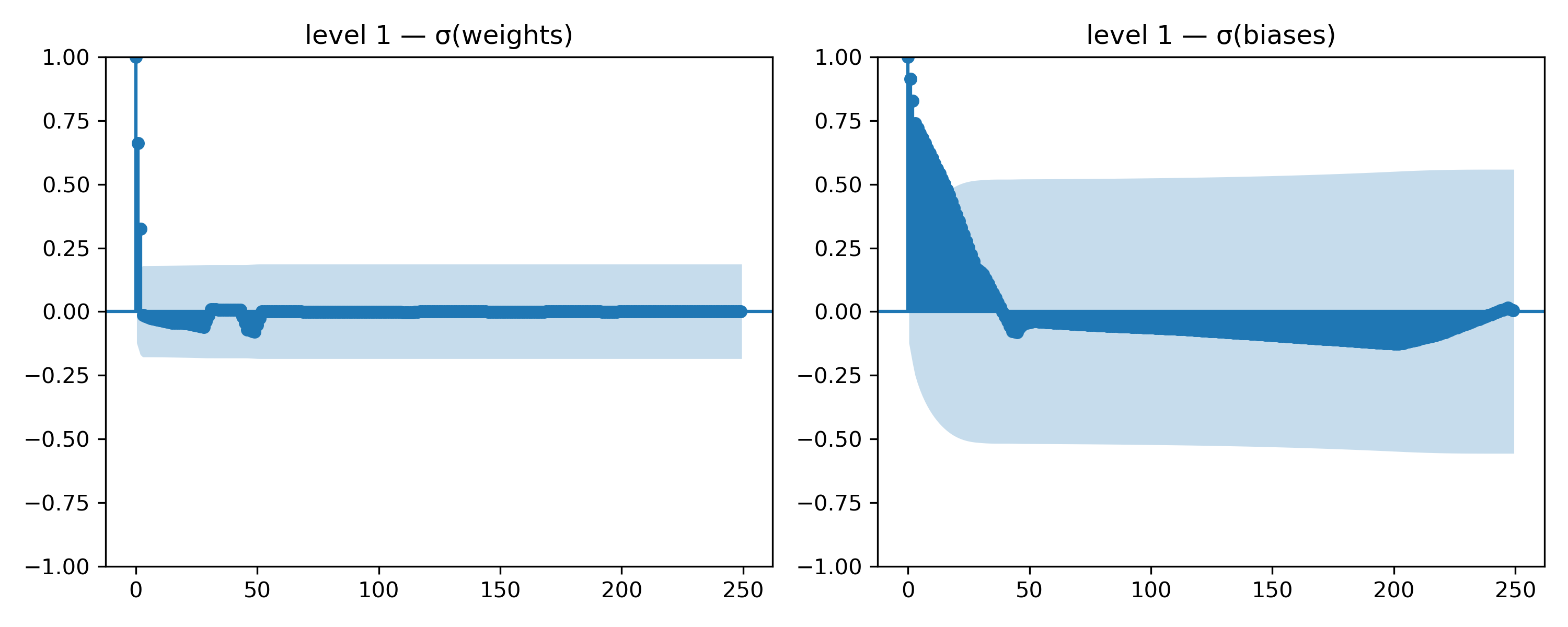}  
  \caption{$l=1$: $\sigma_{A}$ (left) and $\sigma_{b}$ (right)}
\end{subfigure}
\hfill
\begin{subfigure}{.48\textwidth}
  \centering
  \includegraphics[width=.9\linewidth]{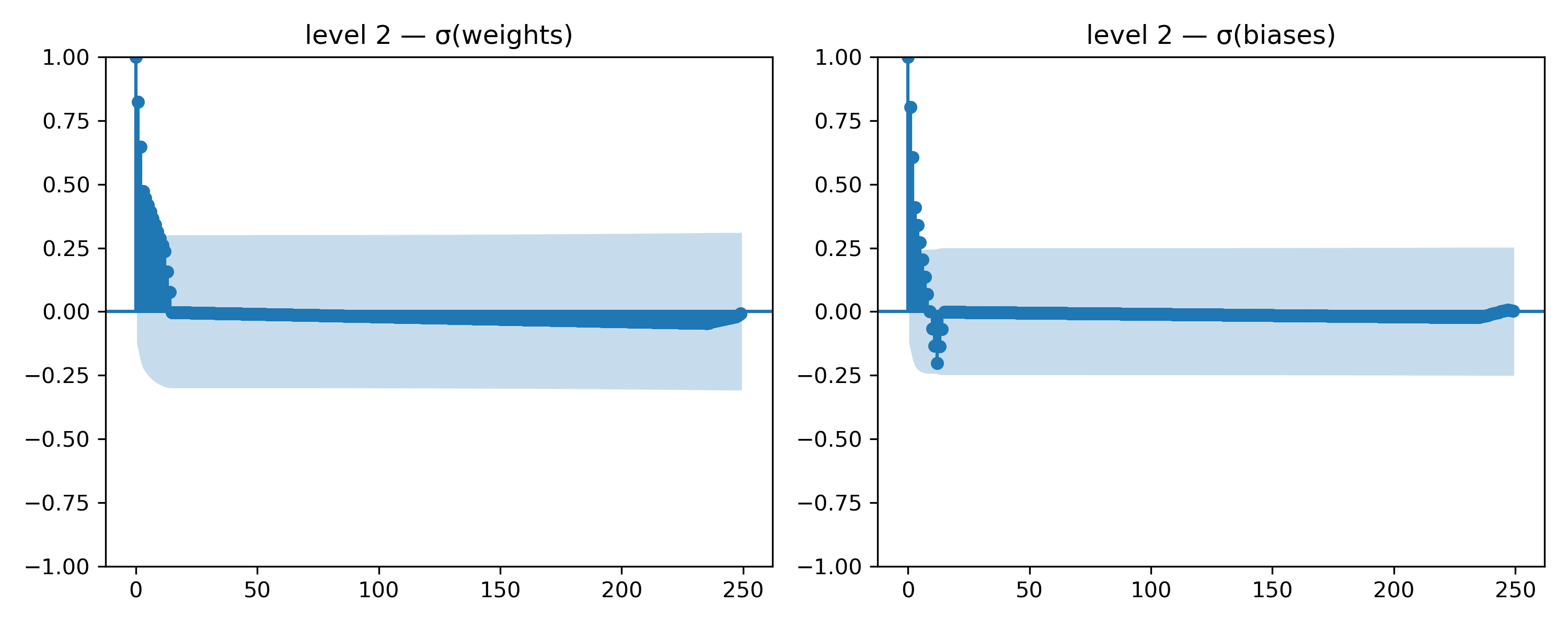}  
  \caption{$l=2$: $\sigma_{A}$ (left) and $\sigma_{b}$ (right)}
\end{subfigure}
\begin{subfigure}{.48\textwidth}
  \centering
  \includegraphics[width=.9\linewidth]{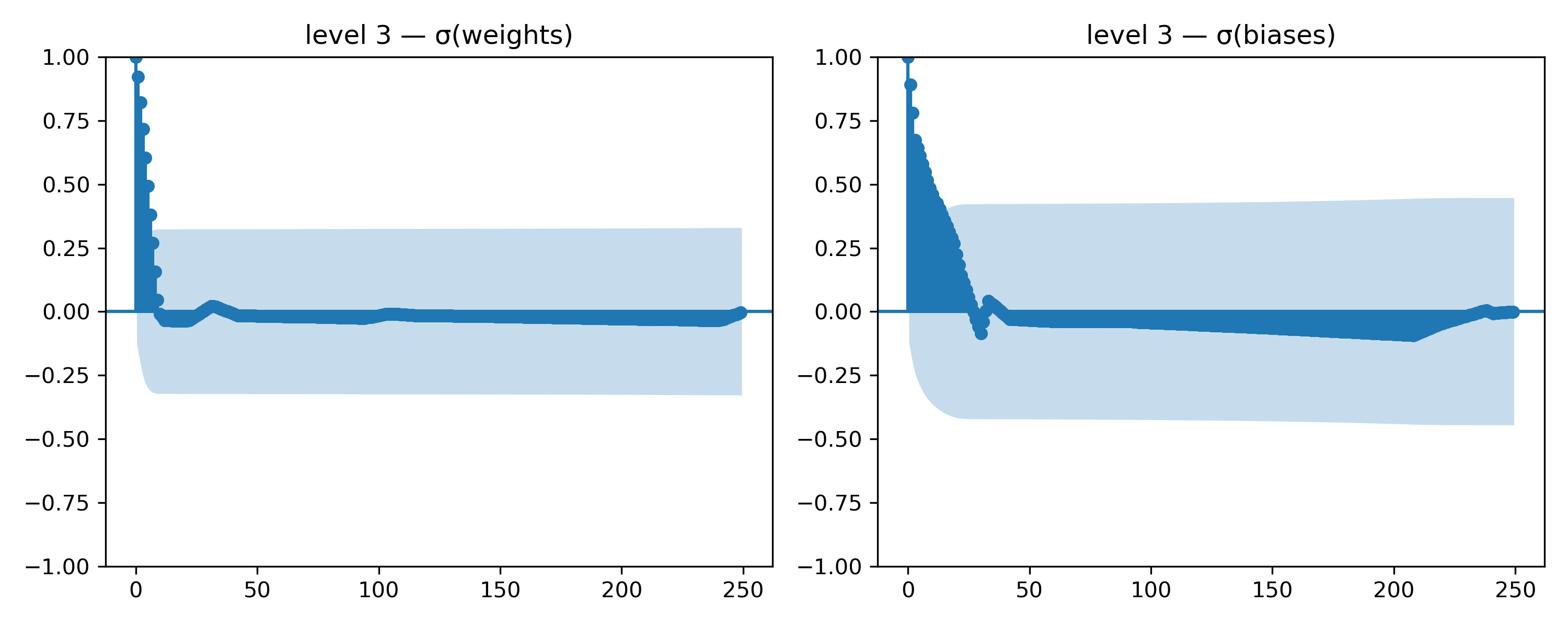}  
  \caption{$l=3$: $\sigma_{A}$ (left) and $\sigma_{b}$ (right)}
\end{subfigure}
\hfill
\begin{subfigure}{.48\textwidth}
  \centering
  \includegraphics[width=.9\linewidth]{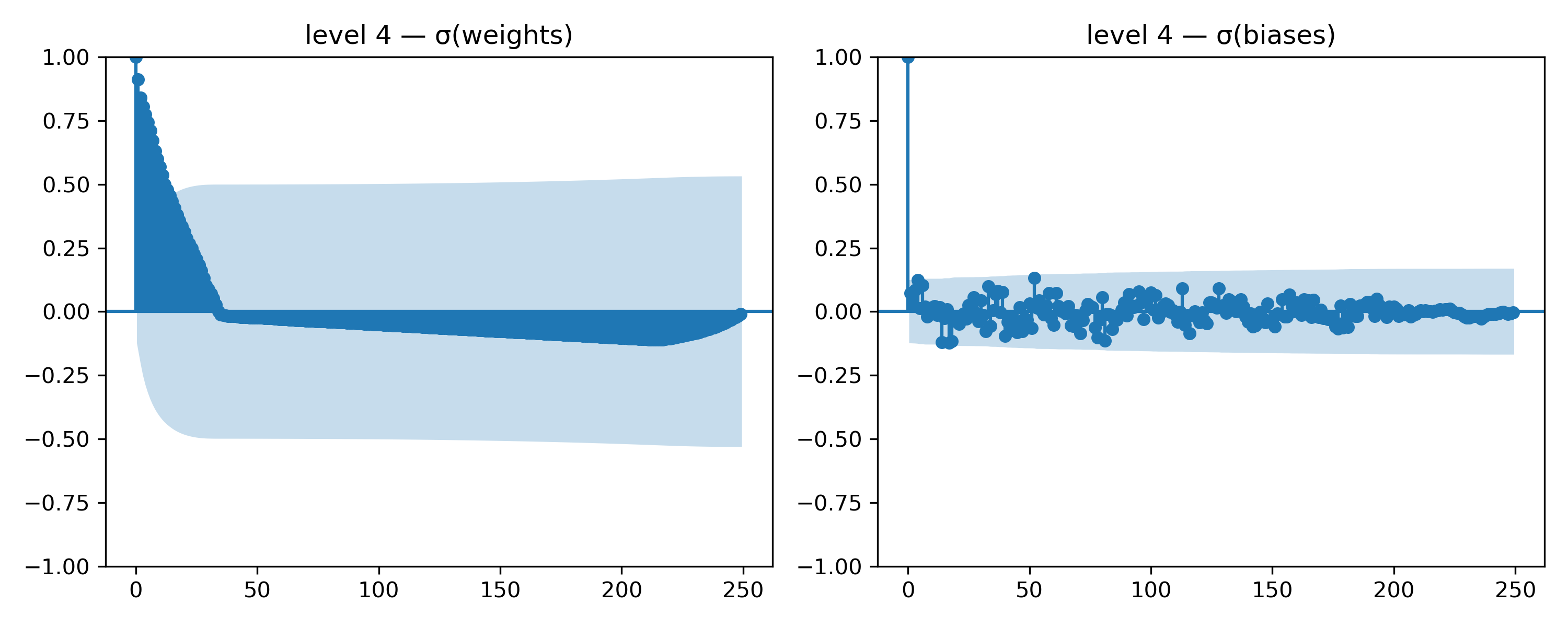}  
  \caption{$l=4$: $\sigma_{A}$ (left) and $\sigma_{b}$ (right)}
\end{subfigure}
\begin{subfigure}{.48\textwidth}
  \centering
  \includegraphics[width=.9\linewidth]{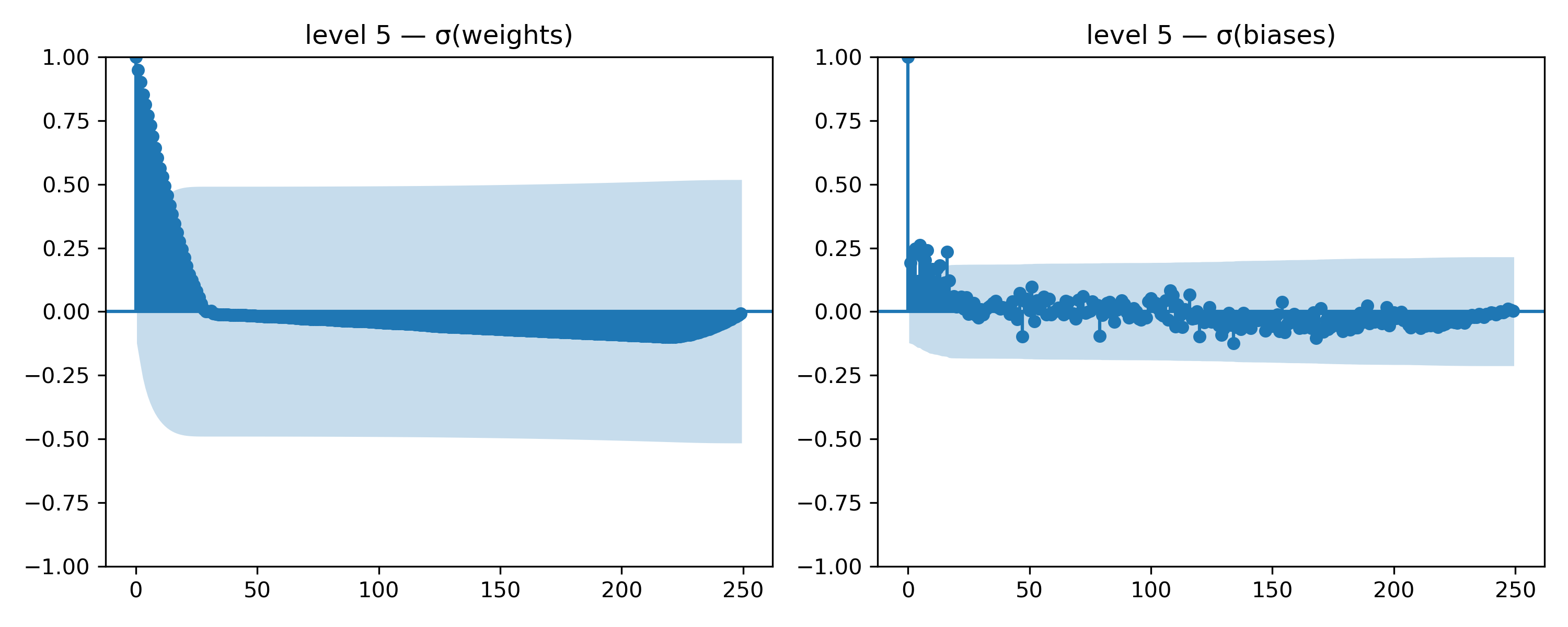}  
  \caption{$l=5$: $\sigma_{A}$ (left) and $\sigma_{b}$ (right)}
\end{subfigure}
\hfill
\begin{subfigure}{.48\textwidth}
  \centering
  \includegraphics[width=.9\linewidth]{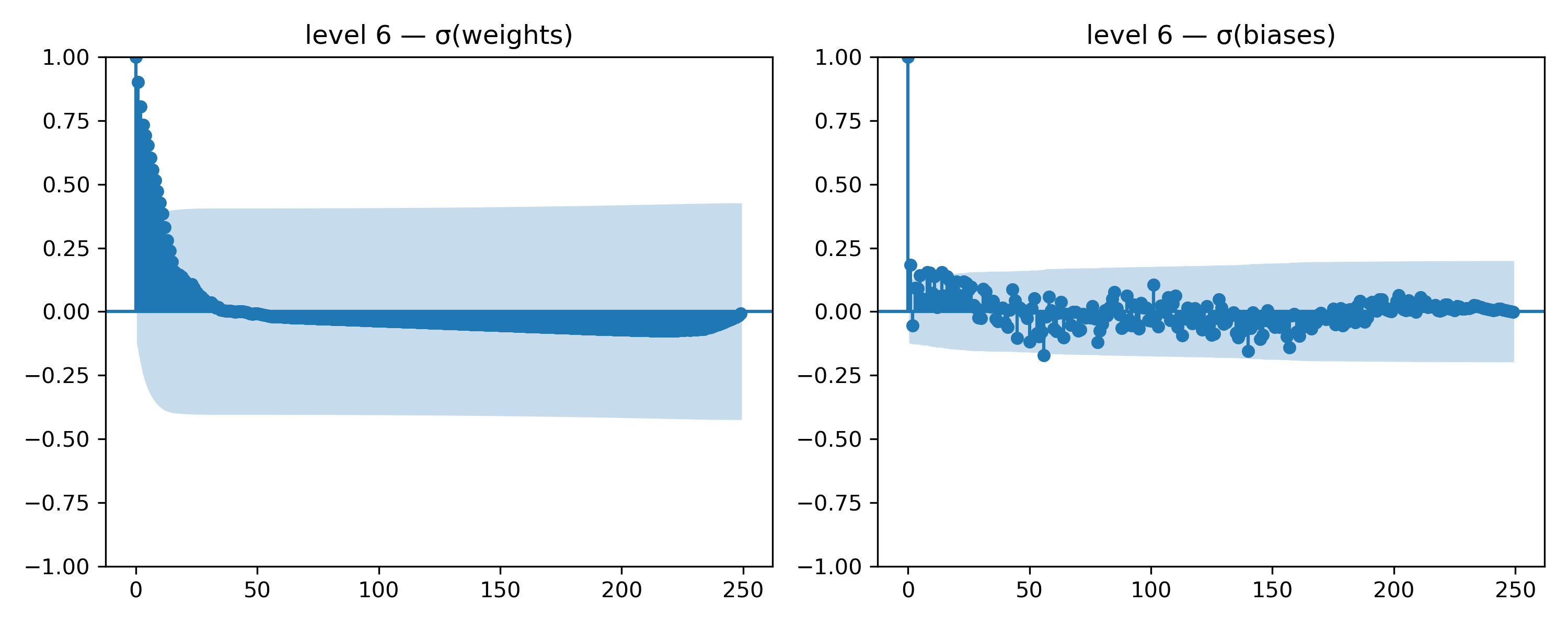}  
  \caption{$l=6$: $\sigma_{A}$ (left) and $\sigma_{b}$ (right)}
\end{subfigure}
\begin{subfigure}{.48\textwidth}
  \centering
  \includegraphics[width=.9\linewidth]{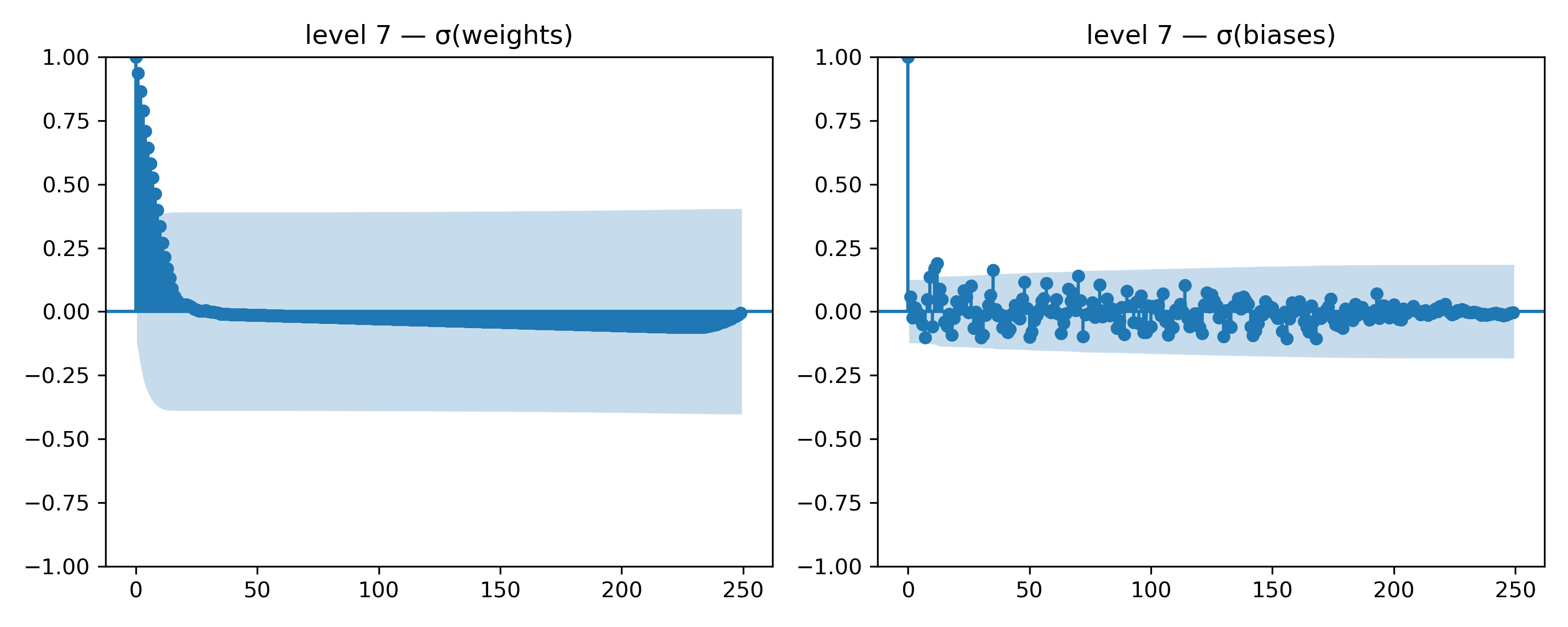}  
  \caption{$l=7$: $\sigma_{A}$ (left) and $\sigma_{b}$ (right)}
\end{subfigure}
\hfill
\begin{subfigure}{.48\textwidth}
  \centering
  \includegraphics[width=.9\linewidth]{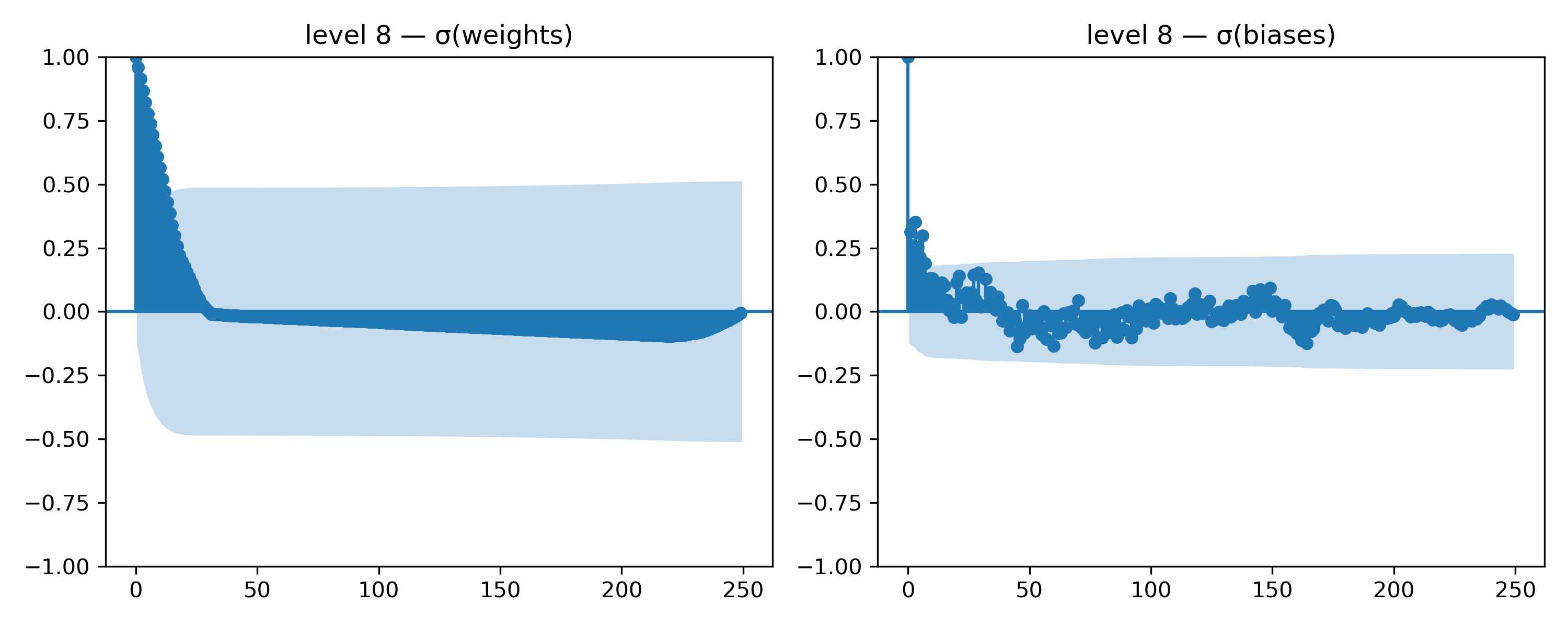}  
  \caption{$l=8$: $\sigma_{A}$ (left) and $\sigma_{b}$ (right)}
\end{subfigure}
\caption{Autocorrelation functions for $\sigma_A$ and $\sigma_B$ on Bayesian regression model: \small\textit{All plots use $50$ realizations with $M_{l}=200$ samples in each index $l$ as the maximum level $L_{\max}=8$ and starting level as $L_{\text{min}}=1$ at each experiment, with $50$ burn-in samples. For the MLSMC sampler, $N=100$ samples are used with an HMC kernel of one leapfrog step and $10$ mutation steps.}}
\label{fig:mlpmmh_acf}
\end{figure}

\subsection{Empirical verification of TNN prior structure}

We use a fully connected neural network ($D=3$) with input width $n = 784$, the hidden neural network width $n_1=128$ (equivalent to $l=7$), and the output neural network width $m=10$. We compare three training regimes:
\begin{itemize}
    \item \textbf{NoReg}: standard training with no regularization, corresponding to pure data fitting;
    \item \textbf{Weight-decay}: MAP training with an isotropic Gaussian prior, corresponding to standard $L_2$ regularization;
    \item \textbf{Width-decay}: MAP training with the TNN-style prior, where weights are regularized according to their expected variance under the TNN prior. In particular, 
    $$Var[w_{ij}] \propto 
    \begin{cases}
        \sigma^2 i^{-s}, & \text{for the input-to-hidden layer},\\
        \sigma^2 (ij)^{-s}, & \text{for hidden-to-hidden layer},\\
        \sigma^2 j^{-s}, & \text{for the hidden-to-output layer},
    \end{cases}
    $$
    where $\sigma=\sigma_A$ for weights and $\sigma=\sigma_b$ for biases.
\end{itemize}

All models were initialized using the 
PyTorch default initialization and trained with the same Adam optimiser (lr=$10^{-3}$) for $20$ epochs\footnote{This is the sufficient number of epochs to ensure convergence and avoid overfit for the model.} for NoReg, Weight-decay, Width-decay respectively.

\subsection{Practical application of MLPMMH}

We first verified that the PCA decay in the featured matrix of the MINST10 training set follows a similar decay as in the TNN-style weight-decay as in Figure \ref{fig:mlpmmh_pca_decay}. 
The eigenvalues decay approximately linearly on the log-log scale, with fitted slope $-0.615$. Interpreting this as
a decay of the form $j^{-s/2}$ gives the empirical value $s=1.23$.

\begin{figure}[h!]
\centering
  \includegraphics[width=.47\linewidth]{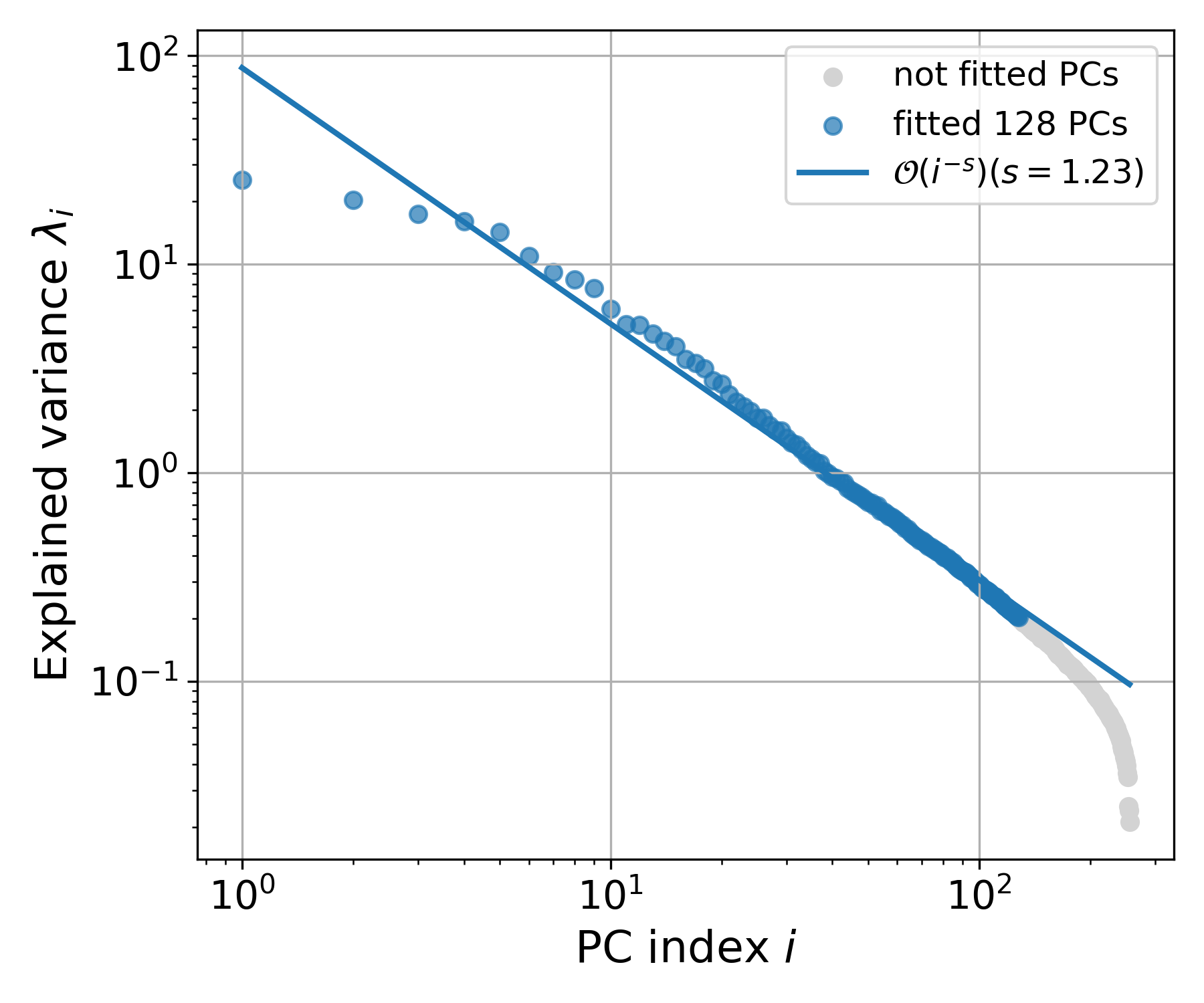} 
\caption{PCA featured matrix on the whole MNIST training dataset: \small\textit{}}
\label{fig:mlpmmh_pca_decay}
\end{figure}

We then numerically justify the PCA hierarchy in level $l$ with Gaussian prior decays in bias and variance as required in Assumption \ref{ass:rate_hyper} for the convergence of MLPMMH, see Figure \ref{fig:mlpmmh_regularity_prior}. It shows the decay of the prior predictive increments in bias and variance. For a NN-based experiment, the cost rate $\gamma=1$. From the Figure \ref{fig:mlpmmh_regularity_prior}, we can observe the variance decay rate $\beta=1.408$, which is close to the theoretical value $2s-1$. Also note that $\beta>1=\gamma$, hence satisfying all the conditions for the convergence of MLPMMH. 

\begin{figure}[h!]
\begin{subfigure}{.49\textwidth}
  \centering
  \includegraphics[width=.9\linewidth]{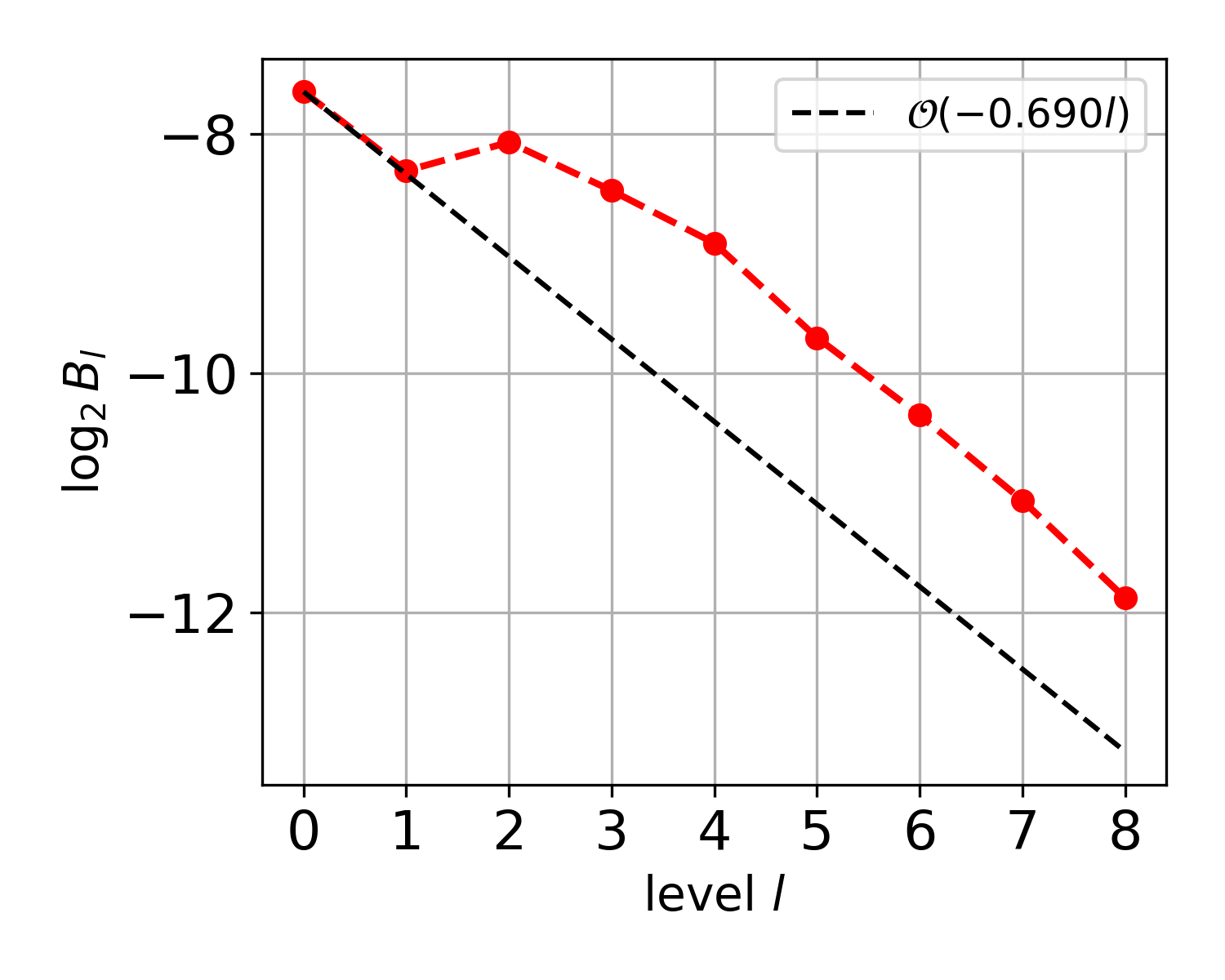}  
  \caption{}
\end{subfigure}
\begin{subfigure}{.49\textwidth}
  \centering
  \includegraphics[width=.9\linewidth]{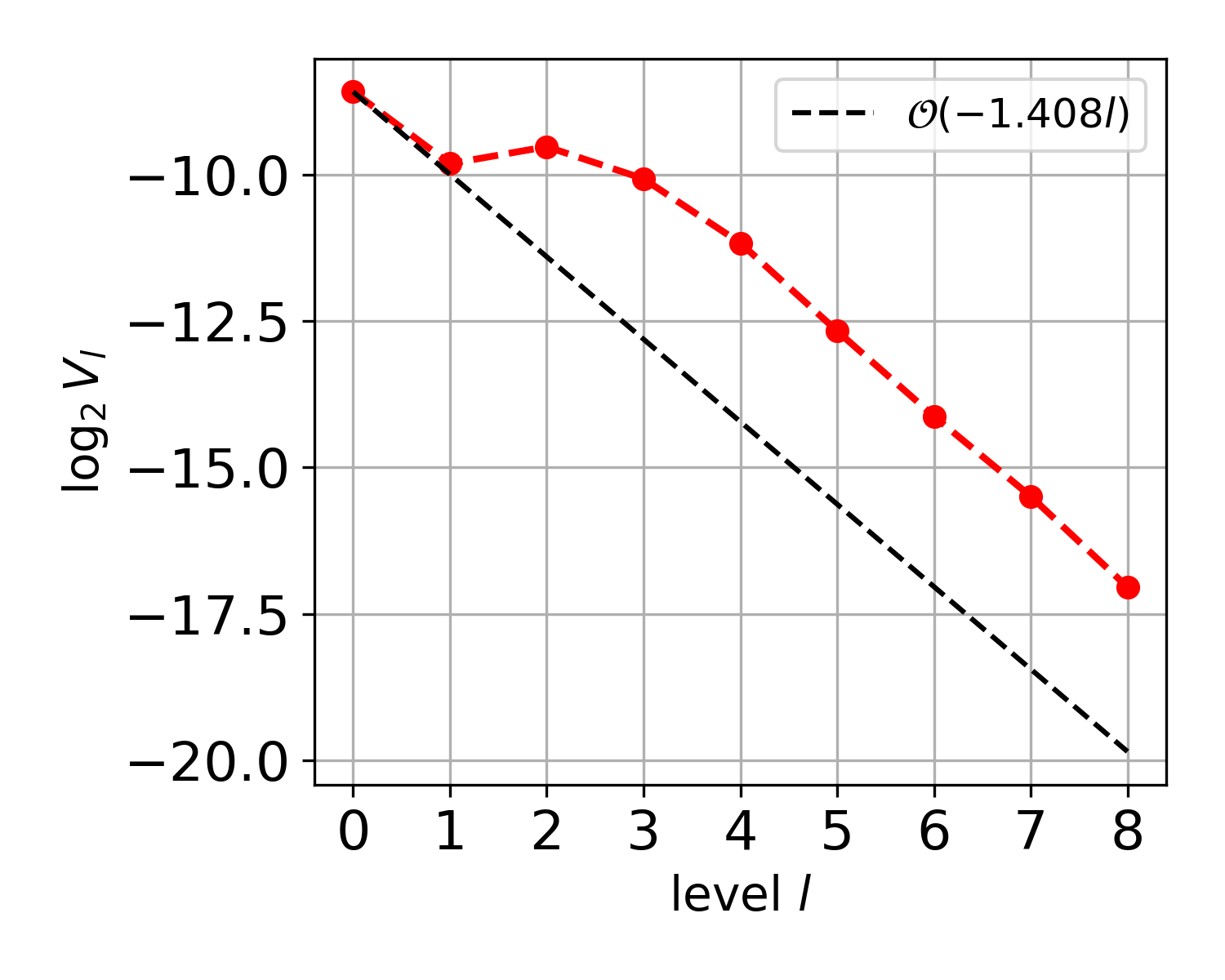}  
  \caption{}
\end{subfigure}
\caption{Regularity test for MLPMMH on MNIST dataset w.r.t prior: \small\textit{Empirical values for strong and weak convergence parameters under prior distribution $\pi_0$. All plots use $100$ realizations with $M_{l}=100$ samples in each index as the maximum level $L_{\max}=8$ and starting level as $L_{\text{min}}=0$ at each experiment. (a) Logarithm plot with base $2$ for mean of difference and reference line with slope $-0.690$; (b) Logarithm plot with base $2$ for variance of difference and reference line with slope $-1.408$.}}
\label{fig:mlpmmh_regularity_prior}
\end{figure}

\section{Practical settings in MLSMC methods}

\subsection{Multilevel sequential Monte Carlo estimators}
\label{sec:mlsmc_practical_set}

First, the empirical value of $L_{\text{min}}$ is determined by the regularity results, where the $B$ and $V$ needs to decay followed the expected rate in Assumption \ref{ass:rate_hyper} with fixed $\mu$ for each level $L_{\text{min}}<l\leq L_{\text{max}}$. Suppose we know the parameter values and the constants, we can compute the empirical values for $L_{\text{max}}$ and $M_l$ if we want the MSE to be smaller than $\varepsilon^2$, i.e. the variance and the bias terms are all smaller than $\frac{\varepsilon^2}{2}$, using the complexity theorem of MLSMC (Theorem \ref{thm:main_mlsmc}):
\begin{itemize}
    \item The value of $L_{\text{max}}$: \\
    If $|\bbE[\varphi_l-\varphi]| \leq c_1h_l^{\alpha}$ and $h_l=h^l$, we have the bias term as 
    \begin{equation}
        \bbE[\varphi_L - \varphi] = \sum_{l=L+1}^{\infty} \bbE[\varphi_l - \varphi_{l-1}] = \frac{\bbE[\varphi_L - \varphi_{L-1}]}{h^{-\alpha}-1}.
    \end{equation}
    In order to ensure $\bbE[\varphi_L - \varphi]^2 \leq \frac{\varepsilon^2}{2}$, we need
    \begin{equation}
    \begin{aligned}
        & \frac{\bbE[\varphi_L - \varphi_{L-1}]^2}{(h^{-\alpha}-1)^2} \leq \frac{c_1^2 h^{2 \alpha}_{L}}{(h^{-\alpha}-1)^2} \leq \frac{\varepsilon^2}{2} \\
        \Rightarrow & L \geq \frac{1}{\alpha}  \log_{h^{-1}}\Big( \frac{\sqrt{2}c_1}{\varepsilon (h^{-\alpha}-1)} \Big). \label{l_max}
    \end{aligned}
    \end{equation}
    We then take $L_{\text{max}} = \Big\lceil \frac{1}{\alpha}  \log_{h^{-1}}\Big( \frac{\sqrt{2}c_1}{\varepsilon (h^{-\alpha}-1)} \Big) \Big\rceil$.
    \item The value of $M_l$:\\
    From the observed $L_{\text{min}}$ and estimated $L_{\text{max}}$ above, we can estimate the number of samples at each level, which is given by
    \begin{equation}
        M_{l} = \Big\lceil 2\varepsilon^{-2} \sqrt{V_l/C_l} \Big( \sum_{l=L_{\text{min}}}^{L_{\text{max}}} \sqrt{V_l C_l} \Big) \Big\rceil, \label{m_l}
    \end{equation}
    to ensure the variance term is smaller than $\frac{\varepsilon^2}{2}$, where $\lceil \cdot \rceil $ denotes the smallest integer greater than the number inside.
\end{itemize}

When the Assumption \ref{ass:rate_hyper} is not satisfied, it leads to a sub-canonical case, stated in Remark \ref{rem:sub_canonical_mlsmc}. Moreover, since each level is independent, we can parallelize the algorithm for all the levels to accelerate the sampler.

\begin{rem}[Sub-canonical regime in MLSMC]\label{rem:sub_canonical_mlsmc}
In the sub-canonical case, where the variance decays more slowly than the cost grows, i.e. $\beta \leq \gamma$, the decay rate violates in the Assumption \ref{ass:rate_hyper}. The MLSMC estimators incur an additional exponent in the $\varepsilon$–COST complexity. Specifically, for level $l$, if the bias decays like $\cO (2^{-\alpha l})$ and the cost grows like $O(2^{\gamma l})$, then to achieve $\mathrm{MSE}\leq \varepsilon^2$, one obtains
$$
\mathrm{COST} = \cO \bigl(\varepsilon^{-2 - (\gamma-\beta)/\alpha}\bigr),
$$
rather than the canonical $O(\varepsilon^{-2})$ when $\beta>\gamma$.
\end{rem}

\subsection{Randomized multilevel sequential Monte Carlo estimators}
\label{sec:rmlsmc_practical_set}

Again, the empirical value of $L_{\text{min}}$ is determined by the regularity results, where the $B$ and $V$ needs to decay followed the expected rate in Assumption \ref{ass:rate_hyper} with fixed $\mu$ for each level $L_{\text{min}}<l\leq L_{\text{max}}$. Proposition 3 in \cite{law2022multi} states the requirement of
\begin{equation}\label{eqn:thm_var}
    \sum_{l \in \bbP} \frac1{p_l} 2^{-l \beta} \leq C
\end{equation}
to let $\bbE\big[ \big(\sum_{l \in \bbP} \frac{N_l}{p_l N} F^{N_l}_{l}(\Delta \zeta) - f(\zeta)\big)^2 \big]$ be of $\cO(N^{-1})$, so that the convergence rate in Theorem \ref{thm:main_rmlsmc}. Additionally, we let the distribution follow from a simple constrained minimization of the expected cost for one sample
\begin{equation}\label{eqn:thm_cost}
    \sum_{l \in \bbP} p_l 2^{l \gamma} \leq C.
\end{equation}
Since $p_l>0$, one finds that
\begin{equation}
    p_l \propto 2^{-l (\gamma+\beta)/2}
\end{equation}
is one suitable choice for probabilities to satisfy the two conditions stated in (\ref{eqn:thm_var}) and (\ref{eqn:thm_cost}). Then define the probability distribution ${\bf p}$ on $\bbP$ with 
${\bf p}(l) =: p_{l} > 0$.
Consider drawing $N/N_{\rm min}$ i.i.d. samples
$l_i \sim {\bf p}$,
and define the number of samples $N_{l}=N_{\rm min}\#i\{l_{i}=l\}$. Like in the MLSMC, since each level is independent, we can parallel the algorithm for all the levels to accelerate this sampler as well. It is worth noting that when the Assumption \ref{ass:rate_hyper} is not satisfied, it leads to a sub-canonical case like in the MLSMC, stated in Remark \ref{rem:sub_canonical_rmlsmc}.

\begin{rem}[Sub-canonical regime in rMLSMC \citep{chada2021unbiased}]\label{rem:sub_canonical_rmlsmc}
Similarly, in the randomized MLSMC estimator, even with randomization that eliminates all bias, the fundamental rate‐limiting factor remains the mismatch between variance decay and cost growth. Let the sampling probabilities be
$$
p_{l} \propto 2^{-\beta l},
$$
which is the asymptotically optimal “borderline” choice, so that the estimator achieves a sub‐canonical rate with a logarithmic penalty:
$$
\mathrm{COST} = \cO \bigl(\varepsilon^{-2}\,\lvert\log\varepsilon\rvert^{q}\bigr),
$$
for any $q>2$.
\end{rem}

\section{Proofs of Theoretical results}
\label{app:proofs}

The proofs of the various results in the paper, along with restatements of the results and verification of the assumptions, are presented here.

\subsection{Proof relating to Theorem \ref{thm:mlpmmh}}
\label{app:main_thm_proof}

{\em Let us assume Assumptions \ref{ass:G1_hyper}, \ref{ass:K_hyper} and \ref{ass:rate_hyper} with $\beta >\gamma$. Then, for suitable $\varphi: \Theta \times \sS \rightarrow \bbR$, there exists a $C < +\infty$ such that for any $\varepsilon >0$ one can choose $(L=\{L_{\text{min}},L_{\text{max}}\},\{M_l\}_{l=L_{\text{min}}}^{L_{\text{max}}})$ such that
    \begin{equation*}
    \bbE[(\hat{\varphi}_{L}^{\mathrm{MLPMMH}} - \pi(\varphi))^2] \leq C \varepsilon^2,
    \end{equation*}
    and expected COST$(\hat{\varphi}_{L}^{\mathrm{MLPMMH}}) \leq C \varepsilon^{-2}$, i.e. the canonical rate. The estimator is defined in equation \eqref{eqn:mlpmmh}.
}

\begin{proof}
We first define 
$$\Delta \Pi_l(\varphi_l) := \Pi_l(\Delta \varphi_l) = \Pi_{l}(\varphi_l - \varphi_{l-1}),$$
where $\Pi_l$
is the exact coupling. Note that $$\Delta \Pi_l(\varphi_l) = \pi_l(\varphi_l) - \pi_{l-1}(\varphi_{l-1}).$$ 
Noting the independence between differences, we derive
    \begin{equation}
    \begin{aligned}
        \bbE[(\hat{\varphi}_{L}^{\mathrm{MLPMMH}} - \pi(\varphi))^2]&= \Big( \pi_{L}(\varphi_L) - \pi(\varphi)\Big)^2 + \bbE \Bigg[ \Big( \hat{\varphi}_{L}^{\mathrm{MLPMMH}} - \pi_{L}(\varphi_L) \Big)^2 \Bigg] \\
        &= \Big( \pi_{L}(\varphi_L) - \pi(\varphi)\Big)^2 + \bbE\Bigg[\Big( \sum_{l=0}^{L}  \Delta \Pi_{l}^{N,M_l}(\varphi_l) - \Delta \Pi_l(\varphi_l) \Big)^2 \Bigg]\\
        &= \Big( \pi_{L}(\varphi_L) - \pi(\varphi)\Big)^2 + \sum_{l=0}^{L} \bbE\Bigg[ \Big( \Delta \Pi_{l}^{N,M_l}(\varphi_l) - \Delta \Pi_l(\varphi_l) \Big)^2 \Bigg] \\
        &+ \sum_{q\neq l=0}^{L} \bbE\Bigg[ \Big( \Delta \Pi_{q}^{N,M_q}(\varphi_q) - \Delta \Pi_q(\varphi_q) \Big) \Big( \Delta \Pi_{l}^{N,M_l}(\varphi_l) - \Delta \Pi_l(\varphi_l) \Big) \Bigg].
    \label{eqn:mse1}
    \end{aligned}
    \end{equation}
In order for us to process, we need to consider three terms, i.e. the bias term, the variance term and the covariance term. We let $\beta = (2s-1)$. Let us consider first the bias term, where a direct application of Assumption \ref{ass:rate_hyper} gives
\begin{align}
    \Big(\pi_{L}(\varphi_L) - \pi(\varphi) \Big)^2 & = \Bigg( \int_{\Theta \times \sS} (\varphi_L(\theta_L,\mu) - \varphi(\theta,\mu) )\pi(d\theta,d\mu) \Bigg)^2 \nonumber \\
    & \leq C 2^{-2\alpha L} \leq C 2^{-\beta L}.
\end{align}

Next, we consider the Variance term, which by Theorem \ref{thm:var_term} we have 
$$\bbE\Bigg[ \Big( \Delta \Pi_{l}^{N,M_l}(\varphi_l) - \Delta \Pi_l(\varphi_l) \Big)^2 \Bigg] \leq \frac{C 2^{-\beta l}}{M_{l}}.$$

By the Cauchy-Schwarz inequality, we have for the covariance term 
\begin{equation*}
\begin{aligned}
   & \bbE\Bigg[ \Big( \Delta \Pi_{q}^{N,M_q}(\varphi_q) - \Delta \Pi_q(\varphi_q) \Big) \Big( \Delta \Pi_{l}^{N,M_l}(\varphi_l) - \Delta \Pi_l(\varphi_l) \Big) \Bigg] \\
   & \leq \sqrt{ \bbE\Bigg[ \Big( \Delta \Pi_{q}^{N,M_q}(\varphi_q) - \Delta \Pi_q(\varphi_q) \Big)^2 \Bigg] } \sqrt{ \bbE\Bigg[ \Big( \Delta \Pi_{l}^{N,M_l}(\varphi_l) - \Delta \Pi_l(\varphi_l) \Big)^2 \Bigg] }\\
   & \leq \sqrt{\frac{C^{\prime} 2^{-\beta q}}{M_{q}}} \sqrt{\frac{C^{\prime \prime} 2^{-\beta l}}{M_{l}}} \leq C \frac{2^{-\beta (q+l)/2}}{\sqrt{M_q M_l}}.
\end{aligned}
\end{equation*}

Then, combining the results above, we have
\begin{equation}
\begin{aligned}\label{eqn:mse2}
    \eqref{eqn:mse1} \leq C^{\prime} 2^{-\beta L} + C^{\prime \prime} \sum_{l=0}^{L} \frac{2^{-\beta l}}{M_l} + C^{\prime \prime \prime} \sum_{q \neq l=0}^{L} \frac{2^{-\beta (q+l)/2}}{\sqrt{M_q M_l}}.
\end{aligned}
\end{equation}

In order to ensure the MSE is $\cO(\varepsilon^2)$, then all three terms in \eqref{eqn:mse2} need to be $\cO(\varepsilon^2)$. Following from the first term in \eqref{eqn:mse2}, it will be satisfied to let $L \propto 2|\log(\varepsilon)|/\beta$. For the the second term in \eqref{eqn:mse2}, suppose we minimize total COST $=\sum_{l=0}^{L} 2^{-\gamma l} M_l$ subject to $\sum_{l=0}^{L} \frac{2^{-\beta l}}{M_l} = \cO(\varepsilon^2)$, we  solve the number of samples at each level $M_0,M_1,...,M_L$. This gives
\begin{equation}
    M_l \propto \varepsilon^{-2} K_L 2^{-(\beta+\gamma)l/2},
\end{equation}
where $K_L = \sum_{l=0}^{L} 2^{-(\beta-\gamma)l/2}$. This gives the total cost of $\cO(\varepsilon^{-2})$. Finally, we need to check that the third term is of $\cO(\varepsilon^2)$:
\begin{equation}
    \sum_{q \neq l=0}^{L} \frac{2^{-\beta (q+l)/2}}{\sqrt{M_q M_l}} = \sum_{q \neq l=0}^{L} 2^{-(\beta-\gamma) (q+l)/4} K_L \varepsilon^2.
\end{equation}
Since we assume $\beta > \gamma$, the quantity $\sum_{q \neq l=0}^{L} 2^{-(\beta-\gamma) (q+l)/4} K_L< \infty$ is finite, and finally 
\begin{equation*}
\sum_{q \neq l=0}^{L} \frac{2^{-\beta (q+l)/2}}{\sqrt{M_q M_l}} = O(\varepsilon^2).
\end{equation*}
Therefore, this completes the proof.

\end{proof}

\subsection{Proof relating to Theorem \ref{thm:var_term}}
\label{app:proof_var_term}

{\em Assume Assumptions \ref{ass:G1_hyper}, \ref{ass:rate_hyper}. For any $\varphi: \Theta \times \sS \rightarrow \bbR^{d}$, there exists a $C < +\infty$ such that
    \begin{equation}\label{eqn:var_term}
        \bbE\Bigg[ \Big( \Pi_{l}^{N,M_l}(\Delta \varphi_l) - \Pi_l(\Delta  \varphi_l) \Big)^2 \Bigg] \leq \frac{C 2^{-\beta l}}{M_l}.
    \end{equation}
}

\begin{proof}
Writing out \eqref{eqn:var_term}, we have
    \begin{equation*}
    \begin{aligned}
        \bbE \Bigg[ & \Bigg( \frac{\frac{1}{M_l}\sum_{k=1}^{M_l} \varphi_l(\theta_{k,l},\mu_{k,l}) \frac{f_{l}(\theta_{k,l}|\mu_{k,l})}{F_{l}(\theta_{k,l},\theta_{k,l-1}|\mu_{k,l})}}{\frac{1}{M_l}\sum_{k=1}^{M_l}\frac{f_{l}(\theta_{k,l}|\mu_{k,l})}{F_{l}(\theta_{k,l},\theta_{k,l-1}|\mu_{k,l})}} - \frac{\frac{1}{M_l}\sum_{k=1}^{M_l} \varphi_{l-1}(\theta_{k,l-1},\mu_{k,l}) \frac{f_{l-1}(\theta_{k,l-1}|\mu_{k,l})}{F_{l}(\theta_{k,l},\theta_{k,l-1}|\mu_{k,l})}}{\frac{1}{M_l}\sum_{k=1}^{M_l}\frac{f_{l-1}(\theta_{k,l-1}|\mu_{k,l})}{F_{l}(\theta_{k,l},\theta_{k,l-1}|\mu_{k,l})}}\\
        &- \bigg( \frac{f_l(\varphi_l(\theta_{l},\mu))}{Z_l} - \frac{f_{l-1}(\varphi_{l-1}(\theta_{l-1},\mu))}{Z_{l-1}}  \bigg) \Bigg)^2 \Bigg].
    \end{aligned}
\end{equation*}
Now we simplify the notation, let
\begin{align*}
 a&=\frac{1}{M_l}\sum_{k=1}^{M_l} \varphi_l(\theta_{k,l},\mu_{k,l}) \frac{f_{l}(\theta_{k,l}|\mu_{k,l})}{F_{l}(\theta_{k,l},\theta_{k,l-1}|\mu_{k,l})}, \\
A &= \frac{1}{M_l}\sum_{k=1}^{M_l}\frac{f_{l}(\theta_{k,l}|\mu_{k,l})}{F_{l}(\theta_{k,l},\theta_{k,l-1}|\mu_{k,l})}, \\
b &= \frac{1}{M_l}\sum_{k=1}^{M_l} \varphi_{l-1}(\theta_{k,l-1},\mu_{k,l}) \frac{f_{l-1}(\theta_{k,l-1}|\mu_{k,l})}{F_{l}(\theta_{k,l},\theta_{k,l-1}|\mu_{k,l})}, \\
B &= \frac{1}{M_l}\sum_{k=1}^{M_l}\frac{f_{l-1}(\theta_{k,l-1}|\mu_{k,l})}{F_{l}(\theta_{k,l},\theta_{k,l-1}|\mu_{k,l})}, \\
c &= f_l(\varphi_l(\theta_{l},\mu)), \quad
C = Z_{l}, \quad
d = f_{l-1}(\varphi_{l-1}(\theta_{l-1},\mu)), \quad
D = Z_{l-1}.
\end{align*}


By Lemma \ref{lem:separate_terms} and Jensen's inequality, we have
\begin{align}
    \Bigg( \frac{a}{A} - & \frac{b}{B} - \Bigg( \frac{c}{C} - \frac{d}{D} \Bigg) \Bigg)^2 = \Bigg( \frac{[a-b-(c-d)]}{A} - \frac{b[A-B-(C-D)]}{AB} + \frac{1}{AC}[C-A][c-d] \nonumber \\
        &- \frac{1}{AB}(b-d)(C-D) + \frac{d}{CBD}(B-D)(C-D) + \frac{d}{ACB}(A-C)(C-D) \Bigg)^2 \nonumber \\
    & \leq 6 \Bigg[ \Bigg( \frac{[a-b-(c-d)]}{A} \Bigg)^2 + \Bigg( \frac{b[A-B-(C-D)]}{AB} \Bigg)^2 + \Bigg( \frac{1}{AC}[C-A][c-d] \Bigg)^2  \nonumber \\
        &+ \Bigg( \frac{1}{AB}(b-d)(C-D) \Bigg)^2 + \Bigg( \frac{d}{CBD}(B-D)(C-D) \Bigg)^2 + \Bigg( \frac{d}{ACB}(A-C)(C-D) \Bigg)^2 \Bigg]. \label{eqn:six_terms}
\end{align}

Now, we need to prove that all six terms in \eqref{eqn:six_terms} are $\cO(\frac{2^{-\beta l}}{M_l})$. By Proposition \ref{prop:bound_term}, the first term is verified directly and the second term is also verified by letting $\varphi == 1$.

By Assumption \ref{ass:rate_hyper} (V) and (B) with $\alpha \leq \beta/2$, we have 
$$
\bbE[(b-d)^2] \leq C \frac{2^{-\beta l}}{M_l}, \quad \bbE[(A-C)^2] \leq C \frac{2^{-\beta l}}{M_l},
$$
with $\varphi == 1$ (the same for $\bbE[(B-D)^2] \leq C \frac{2^{-\beta l}}{M_l}$). Since both
$$
\frac{f_{l}(\theta_{k,l}|\mu_{k,l})}{F_{l}(\theta_{k,l},\theta_{k,l-1}|\mu_{k,l})} \leq 1, \quad \frac{f_{l-1}(\theta_{k,l-1}|\mu_{k,l})}{F_{l}(\theta_{k,l},\theta_{k,l-1}|\mu_{k,l})} \leq 1,
$$
by the construction of coupled distribution $F_{l}(\theta_{k,l},\theta_{k,l-1}|\mu_{k,l})$, both
$$
A = \frac{1}{M_l}\sum_{k=1}^{M_l}\frac{f_{l}(\theta_{k,l}|\mu_{k,l})}{F_{l}(\theta_{k,l},\theta_{k,l-1}|\mu_{k,l})}, \quad B = \frac{1}{M_l}\sum_{k=1}^{M_l}\frac{f_{l-1}(\theta_{k,l-1}|\mu_{k,l})}{F_{l}(\theta_{k,l},\theta_{k,l-1}|\mu_{k,l})},
$$
are bounded. Noting that $c,C,d,D$ are not random, and they are all bounded following Assumption \ref{ass:G1_hyper}. Hence, the last four terms are bounded by $\cO(\frac{2^{-\beta l}}{M_l})$.
    
\end{proof}

\subsection{Proof relating to Lemma \ref{lem:separate_terms}}
\label{lem:proof_separate_terms}

{\em Let $a,b,c,d,A,B,C,D \in \bbR$ with $A,B,C,D$ non-zero then;
    \begin{equation}
    \begin{aligned}\label{eqn:lem_right}
        \frac{a}{A} - & \frac{b}{B} - \Bigg( \frac{c}{C} - \frac{d}{D} \Bigg) = \frac{[a-b-(c-d)]}{A} - \frac{b[A-B-(C-D)]}{AB} + \frac{1}{AC}[C-A][c-d] \\
        &- \frac{1}{AB}(b-d)(C-D) + \frac{d}{CBD}(B-D)(C-D) + \frac{d}{ACB}(A-C)(C-D).
    \end{aligned}
    \end{equation}
}

\begin{proof}
    Expand the terms in the right-hand side of \eqref{eqn:lem_right},
    \begin{align*}
        & \Bigg( \frac{a}{A} - \frac{b}{A} - \frac{c}{A}+\frac{d}{A} \Bigg) - \Bigg( \frac{bA}{AB} - \frac{bB}{AB} - \frac{bC}{AB} + \frac{bD}{AB} \Bigg) + \Bigg( \frac{Cc}{AC} - \frac{Cd}{AC} - \frac{Ac}{AC} + \frac{Ad}{AC} \Bigg) \\
        & - \Bigg( \frac{bC}{AB} - \frac{bD}{AB} - \frac{dC}{AB} + \frac{dD}{AB} +\Bigg) + \Bigg( \frac{dBC}{CBD} - \frac{dBD}{CBD} - \frac{dDC}{CBD} + \frac{dD^2}{CBD} \Bigg) \\
        & + \Bigg( 
        \frac{dAC}{ACB} - \frac{dAD}{ACB} - \frac{dC^2}{ACB} + \frac{dCD}{ACB} \Bigg)\\
        & = \Bigg( \frac{a}{A} - \frac{b}{A} - \frac{c}{A}+\frac{d}{A} \Bigg) - \Bigg( \frac{b}{B} - \frac{b}{A} - \frac{bC}{AB} + \frac{bD}{AB} \Bigg) + \Bigg( \frac{c}{A} - \frac{d}{A} - \frac{c}{C} + \frac{d}{C} \Bigg) \\
        & - \Bigg( \frac{bC}{AB} - \frac{bD}{AB} - \frac{dC}{AB} + \frac{dD}{AB} \Bigg) + \Bigg( \frac{d}{D} - \frac{d}{C} - \frac{d}{B} + \frac{dD}{CB} \Bigg) + \Bigg( 
        \frac{d}{B} - \frac{dD}{CB} - \frac{dC}{AB} + \frac{dD}{AB} \Bigg)\\
        & = \frac{a}{A} - \frac{b}{B} - \frac{c}{C} + \frac{d}{D} = \frac{a}{A} - \frac{b}{B} - \Bigg( \frac{c}{C} - \frac{d}{D} \Bigg).
    \end{align*}
\end{proof}

\subsection{Proof relating to Proposition \ref{prop:bound_term}}
\label{app:proof_bound_term}

{\em Assume Assumptions \ref{ass:G1_hyper}, \ref{ass:K_hyper}, \ref{ass:rate_hyper}. Suppose that $\{ W^i \}_i$ is a Markov chain with the Markov kernel $K$, and $W^1 \sim \pi$. Then there exists a $\beta > 0$ such that for any $\varphi \in \cB_b(\Theta \times \sS) \cap \text{Lip}(\Theta \times \sS)$ there exists a $C>0$ such that
    \begin{equation*}
        \bbE \Bigg[ \Big( \frac{1}{M_l} \sum_{k=1}^{M_l} \tilde{\varphi}_{l}(V^{k}) - \Pi_{l}(\tilde{\varphi}_{l}) \Big)^2 \Bigg] \leq \frac{C 2^{-\beta l}}{M_l},
    \end{equation*}
    where $V^k = (\theta_{k,l}, \mu)$ is the relevant variables of $W^k$.
}

\begin{proof}
    This proof is similar as Proposition A.1 in \cite{jasra2018bayesian}, where we repeat a similar procedure.
    \begin{align*}
        \bbE \bigg[ \bigg( \frac{1}{M_l}\sum_{k=1}^{M_l} \tilde{\varphi}_{l}(V^k) - \Pi_{l}(\tilde{\varphi}_{l}]) \bigg)^2 \bigg] = \bbE \bigg[ \bigg( \frac{1}{M_l}\sum_{k=1}^{M_l}f(W^k) \bigg)^2\bigg],
    \end{align*}
    where $f(w) = \tilde{\varphi}_{l} \circ \psi(w) - \pi_l(\tilde{\varphi}_{l} \circ \psi) = \tilde{\varphi}_{l}(v) - \Pi_{l}(\tilde{\varphi}_{l})$. Then, we have
    \begin{align}
        \bbE \bigg[ \bigg( \frac{1}{M_l}\sum_{k=1}^{M_l}f(W^k) \bigg)^2\bigg] & = \frac{1}{M_{l}^2}\sum_{p,q=1}^{M_l} \langle f, K^{|p-q|} f \rangle \nonumber \\
        & = \frac{1}{M_l} \norm{f}_{2}^{2} + \frac{2}{M_l} \sum_{p=1}^{M_l-1} (M_l-p) \langle f, K^{p}f \rangle \nonumber \\
        & \leq \frac{1}{M_l} \norm{f}_{2}^{2} + \frac{2}{M_l^2} \sum_{p=1}^{M_l-1} (M_l-p) \norm{K^p}_{2} \norm{f}_{2}^{2} \nonumber \\
        & \leq \frac{1}{M_l} \norm{f}_{2}^{2} + \frac{2}{M_l} \sum_{p=1}^{\infty} \norm{K^p}_2 \norm{f}_{2}^{2}. \label{eqn:f_sum}
    \end{align}
    By Assumption \ref{ass:K_hyper}, we have 
    \begin{align}
        \eqref{eqn:f_sum} \leq \frac{1}{M_l} \norm{f}_{2}^{2} \big( 1+\frac{1-\xi}{\xi} \big) \leq \frac{2\norm{f}_{2}^{2}}{M_l \xi}.
    \end{align}
    Since $\norm{f}_{2}^{2} = \bbE[(\tilde{\varphi}_{l}(V^k) - \Pi(\tilde{\varphi}_{l}))^2]$ and Assumption \ref{ass:rate_hyper} (V), we have
    \begin{align*}
        \norm{f}_{2}^{2} = \bbE[(\tilde{\varphi}_{l}(V^k) - \Pi(\tilde{\varphi}_{l}))^2] \leq V_l \leq C 2^{-\beta l}.
    \end{align*}
\end{proof}

\subsection{Verification of Assumptions}
\label{app:verify_ass}

We now want to verify Assumption \ref{ass:K_hyper} and Assumption \ref{ass:rate_hyper}. Let us begin with Assumption \ref{ass:K_hyper}.

{\em There exists a $\xi \in (0,1)$ and $\nu \in \cP(\sW)$ such that, for each $w \in \sW$, $\varphi \in \cB_{b}(\sW) \cap \text{Lip}(\sW)$,
    \begin{equation*}
        \int_{\sW} \varphi(w^{\prime}) K(w,dw^{\prime}) \geq \xi \int_{W} \varphi(v) \nu(dv).
    \end{equation*}
}

\begin{proof}
The proposal kernel $\psi$ of PMMH is 
\begin{equation}
    \psi(w,dw^{*}) = P(\theta_{1:M,l}|\mu^{*}_{q,l})Q(d\mu^{*}_{q,l}|\mu_{k,l})\frac{F_{q,l}(\theta_{q,l},\theta_{q,l-1}|\mu^{*}_{q,l})}{\sum_{i=1}^{M}F_{i,l}(\theta_{i,l},\theta_{i,l-1}|\mu^{*}_{q,l})},
\end{equation}
where $w = (\theta_{k,l},\mu_{k,l})$, $w^{*} = (\theta^{*}_{q,l},\mu^{*}_{q,l})$ and 
$$
P(\theta_{1:M,l}|\mu) = \prod_{k=1}^{M} \bar{\pi}(\theta_{k,l}) \frac{F_{k,l}(\theta_{k,l},\theta_{k,l-1}|\mu)}{\sum_{i=1}^{M}F_{i,l}(\theta_{i,l},\theta_{i,l-1}|\mu)}.
$$
The transition kernel $K$ is
\begin{align*}
    K(w,dw^{*}) = \psi(w,dw^{*}) \alpha(w,w^{*}) + \delta_{w}(dw^{*}) Q(w)
\end{align*}
where the acceptance probability $\alpha(w,w^{*})$ is 
\begin{align*}
    \alpha(w,w^{*}) = \min \Bigg\{ 1, \frac{\pi_0(\mu^{*}_{q,l})Q(\mu_{k,l}|\mu^{*}_{q,l}) p_{k,l}^{N}(\mu_{k,l}^{*})}{\pi_0(\mu^{*}_{q,l})Q(\mu^{*}_{q,l}|\mu_{k,l}) p_{k,l}^{N}(\mu_{k,l})} \Bigg\},
\end{align*}
and the rejection probability $Q(w)$ is 
\begin{align*}
    Q(w) = 1- \int_{w^{*}} \psi(w,dw^{*}) \alpha(w,w^{*}).
\end{align*}

Assume $\pi: \sS \rightarrow \bbR_{+}$ and $Q:\sS^2 \rightarrow \bbR_{+}$ are continuous and strictly positive and Assumption \ref{ass:G1_hyper} are satisfied, we have 
\begin{align*}
    C_1 := \inf_{w,w^{*}} \alpha(w,w^{*}) >0,
\end{align*}
\begin{align*}
    \inf_{w} \psi(w,dw^{*}) &\geq \big\{ \min_{\mu_{k,l},\mu^{*}_{q,l}} Q(\mu^{*}_{q,l}|\mu_{k,l}) \big\} 1_{\sS} (\mu^{*}_{q,l}) d\mu^{*}_{q,l} P(\theta_{1:M,l}|\mu^{*}_{q,l})\frac{F_{q,l}(\theta_{q,l},\theta_{q,l-1}|\mu^{*}_{q,l})}{\sum_{i=1}^{M}F_{i,l}(\theta_{i,l},\theta_{i,l-1}|\mu^{*}_{q,l})} \\&=: C_2 \psi(dw^{*}),
\end{align*}
where $C_2 = \text{Leb}(\sS) \min_{\mu_{k,l},\mu^{*}_{q,l}} Q(\mu^{*}_{q,l}|\mu_{k,l}]) >0$ with a probability measure $\psi(dw^{*})$. Thus, we have
\begin{align*}
    K(w,dw^{*}) \geq C_1 C_2 \psi(dw^{*}).
\end{align*}
Hence the Assumption \ref{ass:K_hyper} is satisfied.
\end{proof}

Next, we verify the bounds for the (B) and (V) in Assumption \ref{ass:rate_hyper}.

{\em For any $\varphi \in \cB_{b}(\Theta_{l} \times \Theta_{l-1} \times S) \cap \text{Lip} (\Theta_{l} \times \Theta_{l-1} \times \sS)$ there exist $c_1, c_2, c_3, \beta, \alpha, \gamma >0$ such that the following holds, under the joint posterior $\Pi_{l}$,
\begin{itemize}
\item[(B)] $\bbE\Big[\big|\varphi_{l}(\theta_l,\mu) \frac{f_{l}(\theta_l|\mu)}{F_{l}(\theta_l,\theta_{l-1}|\mu)} - f(\varphi(\theta,\mu)) \big| \Big] =: B_{l} \leq c_1  2^{-\alpha l}$;
\item[(V)] $ \bbE \Big[ \big|\varphi_{l} (\theta_l,\mu) \frac{f_{l}(\theta_l|\mu)}{F_{l}(\theta_l,\theta_{l-1}|\mu)} - \varphi_{l-1}(\theta_{l-1},\mu) \frac{f_{l-1}(\theta_{l-1}|\mu)}{F_{l}(\theta_l,\theta_{l-1}|\mu)} \big|^2 \Big] =: V_{l} \leq c_2 2^{-\beta l}$; 
\item[(C)] ${\rm COST}(\Pi_{l}(\varphi_l), \Pi_{l-1}(\varphi_{l-1})) =: C_{l} \leq c_3  2^{\gamma l}$.
\end{itemize}
}

\begin{proof}
\begin{align*}
    B_l &= \int_{\Theta \times \sS} \Bigg| \frac{f_{l}(\theta_l|\mu)}{F_{l}(\theta_l,\theta_{l-1}|\mu)} \varphi_{l}(\theta_l,\mu) - f(\theta|\mu)\varphi(\theta,\mu) \Bigg| \pi(d\theta,d\mu) \\
    & = \int_{\Theta \times \sS} \Bigg| \frac{f_{l}(\theta_l|\mu)}{F_{l}(\theta_l,\theta_{l-1}|\mu)} \varphi_{l}(\theta_l,\mu) - f(\theta|\mu)\varphi(\theta,\mu) \Bigg| \cL(\theta|\mu) \bar{\pi}(d\theta) \pi_0(d\mu) \times \int_{\Theta \times \sS}  \cL(\theta|\mu) \bar{\pi}(d\theta) \pi_0(d\mu)\\
    &= \int_{\Theta \times \sS} \Bigg| \frac{f_{l}(\theta_l|\mu)}{F_{l}(\theta_l,\theta_{l-1}|\mu)} \varphi_{l}(\theta_l,\mu) - \frac{f_{l}(\theta_l|\mu)}{F_{l}(\theta_l,\theta_{l-1}|\mu)} \varphi(\theta,\mu) + \frac{f_{l}(\theta_l|\mu)}{F_{l}(\theta_l,\theta_{l-1}|\mu)} \varphi(\theta,\mu) \\
    &- f(\theta|\mu)\varphi(\theta,\mu) \Bigg| \cL(\theta|\mu) \bar{\pi}(d\theta) \pi_0(d\mu) \times \int_{\Theta \times \sS}  \cL(\theta|\mu) \bar{\pi}(d\theta) \pi_0(d\mu).
    \end{align*}
we define $C^{\prime} = \max_{\theta \in \Theta}\cL(\theta|\mu)$ and $C^{\prime} = \int_{\Theta \times \sS}  \cL(\theta|\mu) \bar{\pi}(d\theta) \pi_0(d\mu)$ which results
 \begin{align*}
       B_l &\leq C^{\prime}C^{\prime \prime} \int_{\Theta \times \sS} \Bigg| \frac{f_{l}(\theta_l|\mu)}{F_{l}(\theta_l,\theta_{l-1}|\mu)} \varphi_{l}(\theta_l,\mu) - \frac{f_{l}(\theta_l|\mu)}{F_{l}(\theta_l,\theta_{l-1}|\mu)} \varphi(\theta,\mu) \Bigg| \bar{\pi}(d\theta) \pi_0(d\mu) \\
    &+ C^{\prime} \int_{\Theta \times \sS} \Bigg| \frac{f_{l}(\theta_l|\mu)}{F_{l}(\theta_l,\theta_{l-1}|\mu)} \varphi(\theta,\mu) - f(\theta|\mu)\varphi(\theta,\mu) \Bigg| \cL(\theta|\mu) \bar{\pi}(d\theta) \pi_0(d\mu) \\
    &\leq C^{\prime}C^{\prime \prime} \int_{\Theta \times \sS} \Bigg| \frac{f_{l}(\theta_l|\mu)}{F_{l}(\theta_l,\theta_{l-1}|\mu)} \Bigg| | \varphi_{l}(\theta_l,\mu) - \varphi(\theta,\mu)| \bar{\pi}(d\theta) \pi_0(d\mu) \\
    &+ C^{\prime} \int_{\Theta \times \sS} \Bigg| \frac{f_{l}(\theta_l|\mu)}{F_{l}(\theta_l,\theta_{l-1}|\mu)}  - f(\theta|\mu) \Bigg| |\varphi(\theta,\mu)| \cL(\theta|\mu) \bar{\pi}(d\theta) \pi_0(d\mu).
    \end{align*}
     Let us define $C^{\prime \prime \prime} = \max_{\theta \in \Theta, \mu \in \sS} |\varphi(\theta,\mu)|$ and $\tilde{C}$ to be a collection of various constants. Furthermore since 
     $$
     \Bigg| \frac{f_{l}(\theta_l|\mu)}{F_{l}(\theta_l,\theta_{l-1}|\mu)} \Bigg| \leq 1, \quad \pi\bigg(\frac{f_{l}(\theta_{l}|\mu)}{F_{l}(\theta_l,\theta_{l-1}|\mu)}\bigg) = \pi(f_{l}(\theta|\mu)),
     $$
     we have
    \begin{align*}
    B_l &\leq C^{\prime} C^{\prime \prime} \int_{\Theta \times \sS} | \varphi_{l}(\theta_l,\mu) - \varphi(\theta,\mu) | \bar{\pi}(d\theta) \pi_0(d\mu) \\&+ C^{\prime} C^{\prime \prime \prime} \int_{\Theta \times \sS} \Bigg| \frac{f_{l}(\theta_l|\mu)}{F_{l}(\theta_l,\theta_{l-1}|\mu)}  - f(\theta|\mu) \Bigg| \cL(\theta|\mu) \bar{\pi}(d\theta) \pi_0(d\mu)\\
    &\leq \tilde{C} \Bigg(\int_{\Theta \times \sS} | \varphi_{l}(\theta_l,\mu) - \varphi(\theta,\mu) | \bar{\pi}(d\theta) \pi_0(d\mu) +  | f_{l}(\theta_{l}|\mu) - f(\theta|\mu) | \cL(\theta|\mu) \bar{\pi}(d\theta) \pi_0(d\mu) \Bigg)\\
    &\leq \tilde{C} \Bigg( \int_{\Theta \times \sS} | \varphi_{l}(\theta_l,\mu) - \varphi(\theta,\mu) | \bar{\pi}(d\theta) \pi_0(d\mu) + | \varphi_{l}(\theta_{l}|\mu) - \varphi(\theta|\mu) | \bar{\pi}(d\theta) \pi_0(d\mu) \Bigg) \\
    &{\text{Applying Proposition \ref{prop:tnn} on it and letting $\alpha \leq (2s-1)/2$}}\\
    &\leq C 2^{-\alpha l},
\end{align*}
where we have assumed that $f_l(\theta|\mu)$ is Lipschitz w.r.t $\varphi_{l}(\theta,\mu)$ with Lipschitz constant  $G$,  $f_{\infty} = f$  and $C^{\prime} = \max_{\theta \in \Theta}\cL(\theta|\mu)$. Now, for the variance term, which we denote as $V_l$, we proceed as follows:

\begin{align*}
   V_l & = \int_{\Theta^2 \times \sS} \Bigg( \frac{f_{l}(\theta_l|\mu)}{F_{l}(\theta_l,\theta_{l-1}|\mu)} \varphi_{l}(\theta_l,\mu) - \frac{f_{l-1}(\theta_{l-1}|\mu)}{F_{l}(\theta_l,\theta_{l-1}|\mu)} \varphi_{l-1}(\theta_{l-1},\mu) \Bigg)^2 \times \\
   &  {\bf L}_{l}(\theta_{l},\theta_{l-1}|\mu)\bar{\Pi}_l(d(\theta_{l},\theta_{l-1})) \pi_0(d\mu) \Bigg( \int_{\Theta^2 \times \sS} {\bf L}_{l}(\theta_{l},\theta_{l-1}|\mu)\bar{\Pi}_l(d(\theta_{l},\theta_{l-1})) \pi_0(d\mu) \bigg)^{-1} \\
    & \leq C^{\prime} \int_{\Theta^2} 
    \Bigg( f_{l}(\theta_l|\mu) \varphi_{l}(\theta_l,\mu) - f_{l-1}(\theta_{l-1}|\mu) \varphi_{l-1}(\theta_{l-1},\mu) \Bigg)^2 \bar{\Pi}_l(d(\theta_{l},\theta_{l-1})) \pi_0(d\mu)
    \end{align*}
    where we define 
    $$
    C^{\prime} = \inf_{\theta \in \Theta^{2}} \frac{{\bf L}_{l}(\theta_l,\theta_{l-1}|\mu)}{F_{l}(\theta_l,\theta_{l-1}|\mu)^2} \times \Bigg( \int_{\Theta^2 \times \sS} {\bf L}_{l}(\theta_{l},\theta_{l-1}|\mu)\bar{\Pi}_l(d(\theta_{l},\theta_{l-1})) \pi_0(d\mu) \bigg)^{-1},
    $$
    \begin{align*}
    V_l& = C^{\prime} \int_{\Theta \times \sS} (f_{l}(\theta|\mu) \varphi_l(\theta,\mu) - f_{l-1}(\theta|\mu) \varphi_{l-1}(\theta,\mu))^2 \bar{\pi}_l(d\theta) \pi_0(d\mu)\\
    & = C^{\prime} \int_{\Theta \times \sS} \Big(f_{l}(\theta|\mu) \varphi_l(\theta,\mu) \\&- f_{l}(\theta|\mu)\varphi_{l-1}(\theta,\mu) + f_{l}(\theta|\mu)\varphi_{l-1}(\theta,\mu) - f_{l-1}(\theta|\mu) \varphi_{l-1}(\theta,\mu)\Big)^2 \bar{\pi}_l(d\theta) \pi_0(d\mu) \\
    & \leq C^{\prime}  \int_{\Theta \times \sS} (f_{l}(\theta|\mu) \varphi_l(\theta,\mu) - f_{l}(\theta|\mu)\varphi_{l-1}(\theta,\mu))^2 \bar{\pi}_l(d\theta) \pi_0(d\mu) \\
    &+ C^{\prime} \int_{\Theta \times \sS} ( f_{l}(\theta|\mu)\varphi_{l-1}(\theta,\mu) - f_{l-1}(\theta|\mu) \varphi_{l-1}(\theta,\mu))^2 \bar{\pi}_l(d\theta) \pi_0(d\mu)\\
    & \leq C^{\prime} \int_{\Theta \times \sS} (f_{l}(\theta|\mu))^2(\varphi_l(\theta,\mu) - \varphi_{l-1}(\theta,\mu))^2 \bar{\pi}_l(d\theta) \pi_0(d\mu) \\
    & + C^{\prime} \int_{\Theta \times \sS} (\varphi_{l-1}(\theta,\mu))^2 (f_{l}(\theta|\mu) - f_{l-1}(\theta|\mu))^2 \bar{\pi}_l(d\theta) \pi_0(d\mu)
    \end{align*}
    Since $f_{l}(\theta|\mu) \leq F_{l}(\theta_l,\theta_{l-1}|\mu) \leq C^{\prime}$, and we define $C^{\prime\prime} = \max_{\theta \in \Theta, \mu \in \sS} \varphi_{l-1}(\theta,\mu)$
    \begin{align*}
    V_l& \leq \frac{1}{C^{\prime}} \int_{\Theta \times \sS} (\varphi_l(\theta,\mu) - \varphi_{l-1}(\theta,\mu))^2 \bar{\pi}_l(d\theta) \pi_0(d\mu) \\&+ C^{\prime}(C^{\prime\prime})^2 \int_{\Theta \times \sS} (f_{l}(\theta|\mu) - f_{l-1}(\theta|\mu))^2 \bar{\pi}_l(d\theta) \pi_0(d\mu)\\
    & \text{Assume that $f_l(\theta|\mu)$ is Lipschitz w.r.t $\varphi_l(\theta,\mu)$ with Lipschitz constant $G$}\\
    & \leq \tilde{C} \Bigg( \int_{\Theta \times \sS} (\varphi_l(\theta,\mu) - \varphi_{l-1}(\theta))^2 \bar{\pi}_l(d\theta) \pi_0(d\mu) (\varphi_{l}(\theta,\mu) - \varphi_{l-1}(\theta,\mu))^2 \bar{\pi}_l(d\theta) \pi_0(d\mu) \Bigg)\\
    & \text{Applying the Proposition~\ref{prop:tnn} 
    on it and letting $\beta \leq 2s-1$}\\
    & \leq C 2^{-\beta l}. \\
\end{align*}
\end{proof}

\section{Future research directions}

 For future work, we list several potential projects to consider below.
 \begin{itemize}
      \item The examples given in this work were to firstly verify the theory, and also the provide a justification. As we have shown this, it would be of interest to consider more challenging datasets.       intensive and high-dimensional examples. Such examples could include bigger datasets such as IMDB or the CIFAR datasets. If dimensionality is a problem, one could exploit PCA.
 \item A useful direction would be to consider the complexity in terms of the depth of a neural network. We have only done this, in terms of the width, motivated by the original paper \cite{sell2020dimension}. Doing so would provide a fuller understanding of its impact algorithmically.
      \item Related to the above point, a natural question is to ask how it would compare to generic state-of-the-art MCMC methods like HMC. This has been answered, in comparison with parallel-SMC, but not in the context of our multilevel trace-class neural network setup. To do so, one could run SMC samplers with \texttt{Hamiltorch} mutations, and thus provide a comparison. This goes beyond this work, and will be done in a future paper.
     \item A final direction would be consider the use of such multilevel-trace class priors in the context of non-fully Bayesian methods. An example of this would be unadjusted Langevin algorithms, which have proven to be successful, which include recent multilevel algorithms \cite{ububu,smsubu}. This would involve a comparison between SMC/MCMC methodologies, as tested in this work, and ULA-algorithms.
 \end{itemize}
\end{document}